\documentclass{article}
\usepackage[accepted]{icml2026}
\usepackage{microtype}
\usepackage{graphicx}
\usepackage{subcaption}
\usepackage{overpic}
\usepackage{booktabs} 
\usepackage{hyperref}
\usepackage{bm}
\usepackage[disable,textsize=tiny]{todonotes}

\newcommand{\pdim}{\mathrm{Pdim}}
\usepackage{icml2026}
\usepackage{tikz}
\newcommand{\rank}{\operatorname{rank}}
\newcommand{\R}{\mathbb{R}}

\usetikzlibrary{shapes.geometric}
\usetikzlibrary{arrows.meta}
\usetikzlibrary{decorations.pathreplacing}
\newcommand{\ten}[1]{\mathcal{#1}}

\usepackage{amsmath}
\usepackage{bbm}
\usepackage{amssymb}
\usepackage{mathtools}
\usepackage{amsthm}

\usepackage[capitalize,noabbrev]{cleveref}

\theoremstyle{plain}
\newtheorem{theorem}{Theorem}[section]
\newtheorem{proposition}[theorem]{Proposition}
\newtheorem{lemma}[theorem]{Lemma}
\newtheorem{corollary}[theorem]{Corollary}
\theoremstyle{definition}
\newtheorem{definition}[theorem]{Definition}

\theoremstyle{remark}
\newtheorem{remark}[theorem]{Remark}

\usepackage{xcolor}
\usepackage[textsize=tiny]{todonotes}

\newcommand{\method}{\textbf{TN-SHAP-G}}
\icmltitlerunning{TN-SHAP-G: Graph-Structured Tensor Network Surrogates for Shapley Values and Interactions}

\usepackage{enumitem}
\begin{document}
\raggedbottom

\twocolumn[
  \icmltitle{TN-SHAP-G: Graph-Structured Tensor Network Surrogates for Shapley Values and Interactions}



  \icmlsetsymbol{equal}{*}

  \begin{icmlauthorlist}
    \icmlauthor{Farzaneh Heidari}{udem,mila}
    \icmlauthor{Guillaume Rabusseau}{udem,mila,cifar}
  \end{icmlauthorlist}

 \icmlaffiliation{udem}{Université de Montréal, Montréal, Quebec, Canada}
\icmlaffiliation{mila}{Mila, Quebec AI Institute, Montreal, Quebec, Canada}
\icmlaffiliation{cifar}{CIFAR AI Chair}
  \icmlcorrespondingauthor{Farzaneh Heidari}{farzaneh.heidari@mila.quebec}

  \icmlkeywords{Machine Learning, ICML}

  \vskip 0.3in
]



\printAffiliationsAndNotice{}  

\begin{abstract}
Shapley values are a widely used tool for attributing importance and interactions among input variables in black-box models, but their computation involves a function defined over an exponentially large space of subsets. We propose TN-SHAP-G, a framework that exploits structure in graph-structured inputs to compute Shapley values and higher-order interaction indices efficiently. Given a predictor and a fixed masking scheme, TN-SHAP-G learns a compact, graph-aligned multilinear surrogate that approximates the masked-input behavior, represented as a tensor network whose topology mirrors the input graph. Once trained from a small number of oracle queries, the surrogate enables deterministic recovery of first- and higher-order Shapley indices via the multilinear extension, without additional model queries or Monte Carlo variance. Experiments on molecular benchmarks show that the learned factorization closely matches exact Shapley values on small graphs and scales efficiently to larger graphs where sampling-based methods become infeasible.
\end{abstract}

\section{Introduction}
\begin{figure*}[t]
\centering
\begin{tikzpicture}[
  font=\scriptsize,
  >=Latex,
  gem/.style={
    rounded corners=1.4mm,
    draw=black!45,
    thick,
    fill=white,
    inner sep=2.5mm
  },
  header/.style={
    rounded corners=1mm,
    draw=none,
    fill=black!8,
    inner sep=1.2mm
  },
  box/.style={rounded corners=1mm, draw=black!50, thick, fill=black!3, inner sep=2mm},
  arrow/.style={->, thick, draw=black!60},
  nodeactive/.style={circle, draw=blue!70!black, thick, fill=blue!25, inner sep=1pt, minimum size=4mm},
  nodemasked/.style={circle, draw=black!40, thick, fill=black!8, inner sep=1pt, minimum size=4mm, dashed},
  tens/.style={regular polygon, regular polygon sides=3, draw=orange!70!black, thick, fill=orange!20, inner sep=1.5pt, minimum size=5mm},
  tinyinside/.style={font=\scriptsize},
  tinylabel/.style={font=\tiny, text=black!70},
  graphedge/.style={thick, draw=blue!40!black},
  tensoredge/.style={thick, draw=orange!50!black},
  selectorleg/.style={thin, draw=black!50, densely dashed},
]

\node[gem, minimum width=4cm, minimum height=3.7cm] (A) {};
\node[header, anchor=north west] at ([xshift=1mm,yshift=-1mm]A.north west) {\textbf{(A)} Masked graph game};

\coordinate (g1) at ([xshift=-1.1cm,yshift=-0.45cm]A.center);
\coordinate (g2) at ([xshift=-0.45cm,yshift=0.35cm]A.center);
\coordinate (g3) at ([xshift=0.45cm,yshift=0.35cm]A.center);
\coordinate (g4) at ([xshift=1.1cm,yshift=-0.45cm]A.center);
\coordinate (g5) at ([xshift=0cm,yshift=-0.9cm]A.center);

\draw[graphedge] (g1)--(g2)--(g3)--(g4)--(g5)--(g1);
\draw[graphedge] (g2)--(g5);
\draw[graphedge] (g3)--(g5);

\node[nodeactive] (a1) at (g1) {};
\node[nodeactive] (a2) at (g2) {};
\node[nodemasked] (a3) at (g3) {};
\node[nodeactive] (a4) at (g4) {};
\node[nodemasked] (a5) at (g5) {};

\node[tinylabel, above left=-1pt of a1] {$z_1{=}1$};
\node[tinylabel, above left=-1pt of a2] {$z_2{=}1$};
\node[tinylabel, above right=-1pt of a3] {$z_3{=}0$};
\node[tinylabel, above right=-1pt of a4] {$z_4{=}1$};
\node[tinylabel, below=-1pt of a5] {$z_5{=}0$};

\node[tinyinside, align=center, text width=3.8cm] at ([yshift=-1.5cm]A.center) {$x_S = \{x_v: z_v{=}1;\; x_{v}^{0}\}$};

\node[gem, right=5mm of A, minimum width=3cm, minimum height=3.7cm] (B) {};
\node[header, anchor=north west] at ([xshift=1mm,yshift=-1mm]B.north west) {\textbf{(B)} Teacher};

\node[box, minimum width=2.5cm, minimum height=0.9cm, align=center] at ([yshift=0.1cm]B.center) {\textbf{black-box}\\[-1pt]$f:\mathcal{G}\to\mathbb{R}$};
\node[tinyinside, align=center] at ([yshift=-1.5cm]B.center) {$\nu(S)=f(G,X_S)$};

\draw[arrow] (A.east) -- (B.west) node[midway, above, font=\tiny] {query};

\node[gem, right=5mm of B, minimum width=4.2cm, minimum height=3.7cm] (C) {};
\node[header, anchor=north west] at ([xshift=1mm,yshift=-1mm]C.north west) {\textbf{(C)} Graph-aligned TN};

\coordinate (c1) at ([xshift=-1.1cm,yshift=-0.2cm]C.center);
\coordinate (c2) at ([xshift=-0.45cm,yshift=0.45cm]C.center);
\coordinate (c3) at ([xshift=0.45cm,yshift=0.45cm]C.center);
\coordinate (c4) at ([xshift=1.1cm,yshift=-0.2cm]C.center);
\coordinate (c5) at ([xshift=0cm,yshift=-0.6cm]C.center);

\draw[tensoredge] (c1)--(c2)--(c3)--(c4)--(c5)--(c1);
\draw[tensoredge] (c2)--(c5);
\draw[tensoredge] (c3)--(c5);

\node[tens] (t1) at (c1) {};
\node[tens] (t2) at (c2) {};
\node[tens] (t3) at (c3) {};
\node[tens] (t4) at (c4) {};
\node[tens] (t5) at (c5) {};

\draw[selectorleg] (t1.west) -- ++(-0.22,0) node[left, tinylabel] {$z_1$};
\draw[selectorleg] (t2.north west) -- ++(-0.12,0.18) node[above left=-2pt, tinylabel] {$z_2$};
\draw[selectorleg] (t3.north east) -- ++(0.12,0.18) node[above right=-2pt, tinylabel] {$z_3$};
\draw[selectorleg] (t4.east) -- ++(0.22,0) node[right, tinylabel] {$z_4$};
\draw[selectorleg] (t5.south) -- ++(0,-0.2) node[below, tinylabel] {$z_5$};

\node[tinyinside, align=center, text width=4cm] at ([yshift=-1.65cm]C.center) {$\widehat{\widetilde{\nu}}(z)=\mathrm{TN}_G(z;\Theta)$};

\draw[arrow] (B.east) -- (C.west) node[midway, above, font=\tiny] {distill};

\node[gem, right=5mm of C, minimum width=4cm, minimum height=3.7cm] (D) {};
\node[header, anchor=north west] at ([xshift=1mm,yshift=-1mm]D.north west) {\textbf{(D)} Exact indices};

\node[box, minimum width=3.2cm, minimum height=0.75cm, align=center] at ([yshift=0.65cm]D.center) {Vandermonde interpolation};

\node[tinyinside, align=left, text width=3.4cm] at ([yshift=-0.25cm]D.center) {%
\textcolor{blue!70!black}{$\widehat{\phi}_v$} — Shapley values\\[1pt]
\textcolor{orange!70!black}{$\widehat{\phi}^{(k)}_S$} — interactions\\[1pt]
};

\node[anchor=south, yshift=1.5mm] at (D.south) {
  \begin{tikzpicture}[font=\tiny, baseline=-0.5ex]
    \node[nodeactive, minimum size=3mm] at (0,0) {};
    \node[right=-1pt] at (0.18,0) {$z_v{=}1$};
    \node[nodemasked, minimum size=3mm] at (1.0,0) {};
    \node[right=-1pt] at (1.18,0) {$z_v{=}0$};
    \node[tens, minimum size=3.5mm] at (2.1,0) {};
    \node[right=-1pt] at (2.32,0) {tensor};
  \end{tikzpicture}
};

\draw[arrow] (C.east) -- (D.west);

\end{tikzpicture}
\caption{\textbf{TN-SHAP-G overview.} (A)~Masked graph game with baseline replacement ($x^0_v$). (B)~Black-box teacher queried on coalitions. (C)~Graph-aligned tensor network surrogate. (D)~Exact Shapley values, and interactions via interpolation.}
\label{fig:jewel}
\end{figure*}
Explaining predictions on graph-based data requires attributing importance to individual nodes and their interactions. Shapley values \citep{shapley1953value} and interaction indices \citep{grabisch1999axiomatic} offer principled, axiomatically grounded attributions widely used in ML explainability \citep{lundberg2017unified,sundararajan2020shapley}, yet applying them to graph predictors remains challenging: exact computation scales as $O(2^n)$ in the number of nodes, while existing approximations  sacrifice essential properties. Architecture-specific methods (e.g., GNNExplainer \citep{ying2019gnnexplainer}, PGExplainer \citep{luo2020parameterized}) leverage message-passing but do not apply to black-box models. Sampling-based approaches (e.g., GraphSVX \citep{duval2021graphsvx}) are model-agnostic but require thousands of forward passes, with variance worsening sharply for higher-order interactions \citep{fumagalli2024shapiq}.

Shapley values are weighted averages over $2^n$ coalition values, forming an exponentially large table. However, this table need not be treated as unstructured: modern neural networks, including GNNs, exhibit low-rank and localized dependency patterns \citep{hu2022lora, chen2022bag}, and prior work has shown that induced coalition games can admit compact representations \citep{tnshap}.
TN-SHAP-G exploits this structure by learning a factorized surrogate whose topology mirrors the input graph (Fig.~\ref{fig:jewel}), with nodes corresponding to tensor modes and edges to shared factors. This graph-aligned decomposition compresses the coalition-value table while making the expressivity–accuracy–cost trade-off explicit through rank parameters. Once trained from a small number of model queries, the surrogate enables deterministic recovery of Shapley values and interaction indices on the learned surrogate via closed-form integration, without imposing architectural constraints on the predictor.

\paragraph{Contributions.}
\begin{itemize}[leftmargin=*, itemsep=3pt, topsep=2pt]

\item \textbf{A principled graph-aligned low-rank representation of cooperative games.}
We introduce a tensor-network~\cite{kolda2009tensor} surrogate that compresses the exponential
coalition-value table into a low-rank, graph-aligned multilinear representation.
By matching the surrogate topology to the input graph, bond dimensions control
capacity across graph separators, yielding an inductive bias aligned with local dependency patterns in graph predictors (Theorem~\ref{thm:cutrank_bonddim_general}).

\item \textbf{Amortized and deterministic computation of Shapley values and interactions.}
We leverage the surrogate’s multilinear structure to compute Shapley values and
higher-order interaction indices via closed-form Vandermonde interpolation,
requiring only $O(n)$ surrogate evaluations and no Monte Carlo variance.
A single trained surrogate amortizes computation of all first- and higher-order Shapley quantities. We use surrogate test $R^2$ as a principled proxy for attribution accuracy, with guarantees linking game approximation error to Shapley error (Lemma~\ref{lem:stability_shapley}).
\item \textbf{Theoretical guarantees for expressivity, correctness, and complexity.}
We show that the learned surrogate uniquely determines the multilinear extension
of the masked-input game, ensuring exact recovery of Shapley values and interaction
indices on the surrogate.
We further characterize surrogate expressivity via cut-rank arguments and prove
near-linear teacher-query complexity for bounded-degree graphs
(Theorems~\ref{thm:cutrank_bonddim_general},~\ref{thm:near_linear_budget}).

\item \textbf{Strong empirical accuracy and scalability.}
Across molecular benchmarks, TN-SHAP-G achieves $>0.99$ cosine similarity to exact
Shapley values with as few as $50$ model evaluations—$10$--$100\times$ fewer than
SHAP-IQ and GraphSVX—while scaling to graphs where exact enumeration and
sampling-based methods become impractical.

\end{itemize}

\section{Problem Formulation}
\label{sec:problem}
We formalize graph explanation as a cooperative game induced by node masking,
which serves as the basis for all attribution methods considered in this work.

\subsection{The Masked Graph Game}
\label{sec:masked_game}
Let $G=(V,E)$ be an undirected graph with $n = |V|$ nodes, each associated with a
feature vector $x_v \in \R^d$. Let $X=(x_v)_{v\in V}\in\R^{n\times d}$ denote the
stacked node-feature matrix. Let $f$ be a trained graph predictor mapping
attributed graphs to scalars, and let $X^{0}=(x_v^{0})_{v\in V}\in\R^{n\times d}$ be a baseline feature matrix
(e.g., observational or interventional; we use the mean baseline for experiments in Section~\ref{sec:experiments}).

For any coalition $S \subseteq V$, define the masked feature matrix
$X_S\in\R^{n\times d}$ by
\begin{equation}
\label{eq:masked_features}
(X_S)_v :=
\begin{cases}
x_v, & v \in S,\\
x_v^{0}, & v \notin S,
\end{cases}
\end{equation}
and the induced cooperative game
\begin{equation}
\label{eq:graph_game}
\nu(S) := f(G,X_S).
\end{equation}
Although we focus on node-level explanations (players $v\in V$), the same
construction applies when players correspond to edges or subgraphs by redefining
the masking operator accordingly.
Throughout, we consider explanations for a \emph{fixed} input instance $(G,X)$, so $\nu$ is an instance-specific cooperative game over players $V$.

\paragraph{Game semantics as a modeling choice.}
The choice of baseline~$X^0$, masking scheme (feature replacement
vs.\ node removal), and attribution semantics (standard Shapley
vs.\ Myerson values) are modeling decisions that define the
cooperative game~$\nu$.  TN-SHAP-G learns a graph-aligned
tensor-network parameterization of the multilinear extension of
\emph{any} such game as a function of binary coalition
indicators~$z \in \{0,1\}^n$.  The TN does not take raw node
features as input; it models how the predictor output varies over
subsets of players.  Consequently, the framework is agnostic to
feature type (categorical or continuous) and intervention
semantics.  We demonstrate this flexibility empirically in
Appendices~\ref{sec:baseline_sensitivity}--\ref{sec:continuous_features}.
\subsection{Shapley Values and Interaction Indices}
Given the masked graph game~\eqref{eq:graph_game}, we now define the attribution quantities of interest. The Shapley value~\cite{shapley1953value} measures the contribution of each node by averaging its marginal effect over all possible coalitions:
\begin{equation}
\label{eq:shapley_value}
\phi(v)
=
\sum_{S \subseteq V \setminus \{v\}}
\frac{|S|!\,(n - |S| - 1)!}{n!}
\bigl[\nu(S \cup \{v\}) - \nu(S)\bigr].
\end{equation}
Beyond first-order attributions, pairwise interaction indices quantify whether two nodes contribute synergistically or redundantly. We include both first- and second-order Shapley quantities in the experiments; formal definitions of interaction indices are deferred to Appendix~\ref{app:interactions}. Specifically, we compute the Shapley interaction index (SII)~\cite{grabisch1999axiomatic}. We note that $k$-SII scores~\cite{muschalik2025exact} can be directly obtained from the SII values.

\subsection{Multilinear Extension and Tensor Contraction View}
\label{sec:multilinear_extension}
Direct computation of~\eqref{eq:shapley_value} requires summing over $2^{n-1}$ coalitions.
The multilinear extension~\cite{owen1972multilinear} provides a continuous relaxation of discrete coalition games 
that transforms Shapley value computation into polynomial integration. We show that this 
extension admits an exact tensor formulation, motivating the use of tensor-network 
surrogates when the full tensor is intractable.

Let $V = \{1, \dots, n\}$ and let $\nu : 2^V \to \mathbb{R}$ be a set function 
(cooperative game). The \emph{multilinear extension} $\widetilde{\nu} : [0,1]^n \to \mathbb{R}$ 
is the unique multi-affine (affine in each coordinate) polynomial in the continuous variable
$z=(z_1,\dots,z_n)\in[0,1]^n$ satisfying $\widetilde{\nu}(\mathbf{1}_S)=\nu(S)$ for all
$S\subseteq V$, where $\mathbf{1}_S\in\{0,1\}^n$ denotes the indicator vector of $S$.
Equivalently, for all $z\in[0,1]^n$,
\begin{equation}
  \label{eq:multilinear_extension}
  \widetilde{\nu}(z) 
  = \sum_{S \subseteq V} \nu(S) \prod_{v \in S} z_v \prod_{v \notin S}(1 - z_v),
  \qquad
  \widetilde{\nu}(\mathbf{1}_S) = \nu(S).
\end{equation}
Each  $z_v \in [0,1]$ continuously interpolates between excluding ($z_v = 0$) 
and including ($z_v = 1$) player $v$.
\paragraph{Shapley values as diagonal integrals and polynomial interpolation.}
Shapley values admit the classical integral representation
\begin{equation}
\label{eq:shapley_integral}
\phi(u) \;=\; \int_0^1 \frac{\partial \widetilde{\nu}}{\partial z_u}(t\mathbf{1})\, dt,
\end{equation}
where the equivalence to~\eqref{eq:shapley_value} follows from Owen's multilinear
extension identity~\cite{owen1972multilinear}. Since $\widetilde{\nu}$ is
multi-affine, the diagonal derivative
\[
g_u(t)\;:=\;\frac{\partial \widetilde{\nu}}{\partial z_u}(t\mathbf{1})
\]
is a univariate polynomial of degree at most $n-1$. Hence $\phi(u)$ is determined
by finitely many evaluations of $g_u$.

Concretely, we choose probe points $t_1,\ldots,t_m\in[0,1]$ with $m\ge n$ (we use
Chebyshev nodes for numerical stability), evaluate $g_u(t_j)$, and fit the unique
degree-$(n\!-\!1)$ polynomial consistent with these values. Writing
$g_u(t)=\sum_{k=0}^{n-1} c_{u,k}\,t^k$, the coefficients solve a Vandermonde-type
linear system
\begin{equation}
\label{eq:vandermonde_system}
\underbrace{\begin{bmatrix}
1 & t_1 & \cdots & t_1^{n-1}\\
\vdots & \vdots & & \vdots\\
1 & t_m & \cdots & t_m^{n-1}
\end{bmatrix}}_{\text{Vandermonde design}}
\underbrace{\begin{bmatrix}
c_{u,0}\\ \vdots\\ c_{u,n-1}
\end{bmatrix}}_{c_u}
\;=\;
\underbrace{\begin{bmatrix}
g_u(t_1)\\ \vdots\\ g_u(t_m)
\end{bmatrix}}_{g_u},
\end{equation}
illustrated in Fig.~\ref{fig:vandermonde}. In practice we solve
\eqref{eq:vandermonde_system} in a numerically stable form (e.g., barycentric /
least-squares when $m>n$), and then integrate the recovered polynomial in closed
form:
\[
\phi(u) \;=\; \int_0^1 g_u(t)\,dt \;=\; \sum_{k=0}^{n-1} \frac{c_{u,k}}{k+1}.
\]
\subsubsection{Tensor contraction view}
\label{sec:MLE_tensor_contraction_view}
The multilinear extension admits an equivalent tensor formulation that motivates the surrogate construction. Recall the masked-input game \(\nu:2^V\to\mathbb{R}\) induced by a fixed graph
instance \((G,X)\) with baseline \(X^0\), defined as \(\nu(S)=f(G,X_S)\)
(Section~\ref{sec:masked_game}). We encode coalitions by binary indicators
\(s\in\{0,1\}^n\) via \(S(s):=\{v\in V:\ s_v=1\}\).
The following theorem gives an exact tensor contraction form of the multilinear extension.
\begin{theorem}[Multilinear extension as a tensor contraction]
\label{thm:mle_tensor_contraction}
Define the \emph{coalition tensor} $\ten{T}\in\mathbb{R}^{2\times\cdots\times 2}$ by
\begin{equation}
\label{eq:coalition_tensor_def}
\ten{T}_{s_1,\ldots,s_n}
\;:=\;
\nu\!\bigl(S(s)\bigr),
\qquad s\in\{0,1\}^n.
\end{equation}
For any $z\in[0,1]^n$, let $\bm{b}_v(z_v)\in\mathbb{R}^2$ denote the Bernoulli
selector vector
\[
\bm{b}_v(z_v)
:=
\begin{bmatrix}
1-z_v\\
z_v
\end{bmatrix},
\qquad v\in V.
\]
Then the multilinear extension $\widetilde{\nu}:[0,1]^n\to\mathbb{R}$ satisfies
\begin{small}
\begin{equation}
\label{eq:tn_mle}
\widetilde{\nu}(z)
\;=\;
\sum_{s\in\{0,1\}^n}\ten{T}_s\prod_{v=1}^n \bm{b}_v(z_v)_{s_v}.
\end{equation}
\end{small}
Intuitively, $\widetilde{\nu}(z)$ is the expected value of $\nu(S)$ when each player $v$ is included independently with probability $z_v$.
\end{theorem}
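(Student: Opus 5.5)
The plan is to reduce the claim to the explicit formula \eqref{eq:multilinear_extension} for the multilinear extension, and to obtain that formula from the uniqueness clause in its definition. The argument is a term-by-term rewriting in three steps.

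\textbf{Step 1: the Bernoulli products.} Fix $s\in\{0,1\}^n$ and write $S=S(s)$. By definition of the selector, $\bm{b}_v(z_v)_{s_v}$ equals $z_v$ if $s_v=1$ and $1-z_v$ if $s_v=0$. Hence
\[
\prod_{v=1}^n \bm{b}_v(z_v)_{s_v}=\prod_{v\in S} z_v\prod_{v\notin S}(1-z_v).
\]

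\textbf{Step 2: reindexing the sum.} The map $s\mapsto S(s)$ is a bijection from $\{0,1\}^n$ onto $2^V$, and $\ten{T}_s=\nu(S(s))$ by \eqref{eq:coalition_tensor_def}. Reindexing the right-hand side of \eqref{eq:tn_mle} by $S=S(s)$ therefore turns it into exactly the right-hand side of \eqref{eq:multilinear_extension}.

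\textbf{Step 3: uniqueness.} It remains to justify that \eqref{eq:multilinear_extension} really is $\widetilde{\nu}$. This is the only step with content, since the extension was defined through uniqueness. Call the right-hand side $p(z)$.

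First, $p$ is multi-affine, because each product term is affine in every coordinate.

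Second, $p$ interpolates $\nu$. At $z=\mathbf{1}_{S'}$, the product for $S$ equals $1$ if $S=S'$ and $0$ otherwise: if $v\in S\setminus S'$ the factor $z_v$ vanishes, and if $v\in S'\setminus S$ the factor $1-z_v$ vanishes. So $p(\mathbf{1}_{S'})=\nu(S')$.

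Third, a multi-affine polynomial is determined by its values on $\{0,1\}^n$, which I would show by induction on $n$. A multi-affine $q$ can be written as $q=(1-z_n)\,q|_{z_n=0}+z_n\,q|_{z_n=1}$. Each restriction is multi-affine in $n-1$ variables. So if $q$ vanishes on the cube, both restrictions vanish on the smaller cube, hence vanish identically by the inductive hypothesis, and therefore $q\equiv 0$. Applying this to the difference of two multi-affine interpolants gives $p=\widetilde{\nu}$.

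\textbf{Probabilistic reading.} If $z\in[0,1]^n$, the weights $\prod_v\bm{b}_v(z_v)_{s_v}$ are exactly the probabilities of the product Bernoulli measure with parameters $z_v$. They are nonnegative and sum to $\prod_v\bigl((1-z_v)+z_v\bigr)=1$, so $\widetilde{\nu}(z)=\mathbb{E}[\nu(S)]$ where each player is included independently with probability $z_v$.

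The main obstacle is only the uniqueness argument in Step 3; everything else is bookkeeping.
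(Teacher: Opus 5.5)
Your proposal is correct and follows essentially the same route as the paper's step-by-step derivation in the appendix: write the selector entries as $(1-z_v)^{1-s_v}z_v^{s_v}$, reindex the subset sum by indicator vectors via $s\mapsto S(s)$, and substitute $\ten{T}_s=\nu(S(s))$. The only difference is your Step 3, which proves by induction (via $q=(1-z_n)q|_{z_n=0}+z_n q|_{z_n=1}$) that the explicit formula is the unique multi-affine interpolant; the paper instead takes \eqref{eq:multilinear_extension} as its starting definition and handles uniqueness separately by a basis argument in Remark~\ref{rem:uniqueness}, so your extra step is a valid, self-contained supplement rather than a different approach.
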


\noindent
Theorem~\ref{thm:mle_tensor_contraction} shows that $\nu$ can be identified with
an order-$n$ tensor $\ten{T}$ of masked evaluations of $f$ on $(G,X,X^0)$, and
that the multilinear extension is exactly the contraction of $\ten{T}$ with
Bernoulli selector vectors. The step-by-step derivation is deferred to
Appendix~\ref{app:tn_mle_step_by_step}.

By uniqueness of the multilinear extension (Appendix~\ref{rem:uniqueness}), a surrogate that fits all coalition values yields exact Shapley quantities; in practice we fit from $M\ll 2^n$ samples (Theorem~\ref{thm:near_linear_budget}) and then compute attributions deterministically from the surrogate (Section~\ref{sec:exact_shapley}).

\section{TN-SHAP-G}
\label{sec:method}
TN-SHAP-G explains black-box graph predictors by distilling the masked game $\nu$ into a graph-aligned tensor-network surrogate $\hat{\nu}$. We treat the predictor $f$ as an oracle accessible only through masked evaluations $f(G,X_S)$. The surrogate is trained to approximate the multilinear extension $\widetilde{\nu}$, from which Shapley values and interaction indices are recovered deterministically via polynomial interpolation. By uniqueness of $\widetilde{\nu}$, an exact surrogate yields exact attributions; approximation error bounds attribution error (Lemma~\ref{lem:stability_shapley}).

Section~\ref{sec:surrogate_construction} introduces the graph-aligned
tensor-network surrogate and its expressivity,
Section~\ref{sec:training} describes how it is learned from $M\ll 2^n$ teacher
queries, and Section~\ref{sec:exact_shapley} shows how Shapley values and
interaction indices are recovered deterministically without further oracle
access.
Concretely, the method proceeds in three stages:
\begin{enumerate}[label=(\roman*),nosep]
  \item define the masked graph game $\nu$;
  \item learn a graph-aligned TN surrogate $\hat{\nu}$ approximating the
        multilinear extension $\widetilde{\nu}$;
  \item extract Shapley values and interaction indices from $\hat{\nu}$.
\end{enumerate}

\begin{algorithm}[t]
\caption{\method{} (single-surrogate deterministic O1 Shapley)}
\label{alg:tnshapg_o1}
\begin{algorithmic}[1]
\REQUIRE fixed instance $(G=(V,E),X)$, baseline $X^0$, black-box teacher $f$, query budget $M$, TN bond dim $\chi$, probe points $\{t_j\}_{j=1}^m$
\ENSURE node Shapley values $\{\hat{\phi}(u)\}_{u\in V}$

\STATE Sample coalitions $\mathcal{S}=\{S_j\}_{j=1}^M \sim \mathcal{D}$
\STATE Query labels $y_j \leftarrow f(G,X_{S_j})$ for all $j$
\STATE Train graph-aligned TN surrogate $\hat{\nu}(\cdot;\Theta)$ by minimizing Eq.~\eqref{eq:surrogate_training}
\FOR{$j=1,\dots,m$}
    \STATE Set all sites to $\bm b_v(t_j)$ and contract TN on $G$ to obtain reusable messages
    \FOR{each $u\in V$}
        \STATE Compute $g_u(t_j)=\partial_{z_u}\hat{\nu}(t_j\mathbf{1})$ by replacing $\bm b_u(t_j)\mapsto \bm b'_u(t_j)$
    \ENDFOR
\ENDFOR
\STATE For each $u$, interpolate $g_u(t)$ from $\{(t_j,g_u(t_j))\}_{j=1}^m$ and return $\hat{\phi}(u)=\int_0^1 g_u(t)\,dt$
\end{algorithmic}
\end{algorithm}

\subsection{Graph-Aligned Surrogates and Rank-Based Expressivity}
\label{sec:surrogate_construction}

\begin{figure}[t]
\centering
\begin{tikzpicture}[
    font=\scriptsize,
    >=Latex,
    tensor/.style={rectangle, draw=orange!70!black, thick, fill=orange!15, minimum size=5mm, rounded corners=1mm, inner sep=1pt},
    cube/.style={rectangle, draw=purple!70!black, thick, fill=purple!10, rounded corners=1mm},
    bondleg/.style={semithick, draw=orange!50!black},
    node/.style={circle, draw=blue!70!black, thick, fill=blue!15, minimum size=4mm, inner sep=0pt},
]


\coordinate (g1) at (0, 3.2);
\coordinate (g2) at (0.6, 3.7);
\coordinate (g3) at (1.2, 3.2);
\coordinate (g4) at (0.6, 2.7);
\draw[thick, blue!50!black] (g1)--(g2)--(g3)--(g4)--(g1);
\draw[thick, blue!50!black] (g2)--(g4);
\node[node] at (g1) {\tiny$1$};
\node[node] at (g2) {\tiny$2$};
\node[node] at (g3) {\tiny$3$};
\node[node] at (g4) {\tiny$4$};
\node[font=\tiny, above] at (0.6, 3.9) {$G = (V, E)$};

\draw[->, thick, black!40] (1.6, 3.2) -- (2.1, 3.2);
\node[font=\tiny, above, text=black!50] at (1.8, 3.3) {mask};

\node[font=\small, align=center] at (3.3, 3.2) {$\nu(S) = f(G, X_S)$};
\node[font=\tiny, text=black!50, align=center] at (3.3, 2.7) {$2^n$ coalitions};

\draw[->, thick, black!40] (4.6, 3.2) -- (5.005, 3.2);
\node[font=\tiny, above, text=black!50] at (4.7, 3.3) {tabulate};

\node[cube, minimum width=10mm, minimum height=10mm] (T) at (5.8, 3.2) {};
\node[font=\footnotesize, purple!70!black] at (5.8, 3.2) {$\mathcal{T}$};
\draw[purple!50!black, thin] (T.north) -- ++(0, 0.2);
\draw[purple!50!black, thin] (T.east) -- ++(0.2, 0);
\draw[purple!50!black, thin] (T.south) -- ++(0, -0.2);
\draw[purple!50!black, thin] (T.west) -- ++(-0.2, 0);
\node[font=\tiny, text=purple!60!black, right] at (6.4, 3.2) {$\in \mathbb{R}^{2^n}$};

\draw[->, thick, red!60!black, dashed] (5.8, 2.5) -- (5.8, 1.8);
\node[font=\tiny, text=red!60!black, rotate=90, anchor=south] at (5.5, 2.15) {intractable};


\coordinate (t1) at (0, 0.8);
\coordinate (t2) at (0.6, 1.3);
\coordinate (t3) at (1.2, 0.8);
\coordinate (t4) at (0.6, 0.3);
\draw[bondleg, thick] (t1)--(t2)--(t3)--(t4)--(t1);
\draw[bondleg, thick] (t2)--(t4);
\node[tensor] (T1) at (t1) {\tiny$A_1$};
\node[tensor] (T2) at (t2) {\tiny$A_2$};
\node[tensor] (T3) at (t3) {\tiny$A_3$};
\node[tensor] (T4) at (t4) {\tiny$A_4$};
\draw[black!50, semithick] (T1.west) -- ++(-0.2, 0);
\draw[black!50, semithick] (T2.north) -- ++(0, 0.15);
\draw[black!50, semithick] (T3.east) -- ++(0.2, 0);
\draw[black!50, semithick] (T4.south) -- ++(0, -0.15);
\node[font=\tiny, below] at (0.6, -0.1) {Graph-aligned TN};
\node[font=\tiny, text=black!50] at (0.6, -0.8) {$O(n\chi^{\Delta_{\max}})$ params};

\draw[->, thick, black!40] (1.7, 0.8) -- (2.3, 0.8);
\node[font=\tiny, above, text=black!50] at (2.0, 0.9) {train};

\node[font=\small, align=center] at (3.5, 0.5) {$\min \sum_S |\nu(S) - \hat{\nu}(S)|^2$};
\node[font=\tiny, text=black!50] at (3.5, 0.1) {$M \ll 2^n$ samples};

\draw[->, thick, black!40] (4.7, 0.8) -- (5.3, 0.8);

\node[font=\normalsize, orange!70!black] at (6.0, 0.8) {$\hat{\nu}(z)$};
\node[font=\tiny, text=black!50, align=center] at (6.0, 0.3) {exact Shapley\\via interpolation};

\draw[->, thick, green!50!black, densely dashed] (1.4, 1.1) to[out=45, in=200] (5.2, 2.6);
\node[font=\tiny, text=green!60!black, rotate=25] at (2.3, 2.0) {approximates};

\end{tikzpicture}
\caption{From graph game to graph-aligned tractable surrogate.}

\label{fig:game_to_surrogate}
\end{figure}
The coalition tensor $\ten{T} \in \mathbb{R}^{2 \times \cdots \times 2}$ defined in \eqref{eq:tn_mle} contains $2^n$ entries, and constructing it explicitly would
require $2^n$ black-box evaluations of $f(G,X_S)$, which is prohibitive even for
moderate $n$.
We therefore construct a compact surrogate by imposing a tensor network (TN) structure on $\ten{T}$ whose
topology mirrors the input graph $G=(V,E)$. 
We treat each node $v\in V$ as a player. For each node $u\in V$, we introduce a core tensor
\begin{equation}
\label{eq:core_shape}
\ten{A}^{(u)} \in \mathbb{R}^{2 \times \chi \times \cdots \times \chi},
\end{equation}

of order $|\mathcal{N}(u)|+1$
with one \emph{physical} index of dimension $2$ encoding coalition membership
$s_u\in\{0,1\}$ and $|\mathcal{N}(u)|$ \emph{bond} indices of dimension $\chi$,
one per neighbor $v\in\mathcal{N}(u)$.  Contracting bond indices along edges $(u,v) \in E$ and fixing physical indices 
to $s \in \{0,1\}^n$ defines a surrogate coalition tensor:
\begin{equation}
\label{eq:surrogate_contraction}
\widehat{\ten{T}}_{s}
\;=\; 
\sum_{\{\alpha_e\}_{e \in E}} 
\prod_{u \in V} \ten{A}^{(u)}_{s_u, \alpha_{\mathcal{N}(u)}},
\end{equation}
where $\alpha_{\mathcal{N}(u)} = (\alpha_{(u,v)})_{v \in \mathcal{N}(u)}$ collects 
the bond indices incident to node $u$, each summed over $[\chi] = \{1,\ldots,\chi\}$. The contraction sums over all bond indices and returns a scalar for each coalition assignment $s\in\{0,1\}^n$, defining an implicit approximation $\widehat{\ten{T}}$ of the coalition tensor.
Intuitively, bond indices transmit interaction information between neighboring
nodes, while the bond dimension $\chi$ controls the capacity of this information
flow across graph separators. This expressivity can be formalized via the
\emph{matricization rank} of the represented tensor across a bipartition. Fig.~\ref{fig:game_to_surrogate} shows the compression from $2^n$ coalition values to a $\chi$-controlled surrogate.

The graph-aligned architecture encodes a locality assumption: interaction effects 
between distant nodes must propagate through intermediate bonds in $G$, with 
information capacity bounded by $\chi$ per edge. This bias is well-suited whenever 
the masked-input game exhibits predominantly local, low-rank dependencies over the 
graph, an assumption satisfied by many practical predictors (including but not limited 
to message-passing GNNs). When the underlying game exhibits dense long-range interactions 
that do not respect graph distance, larger bond dimensions or alternative TN topologies 
may be required.

The expressivity of a tensor network is governed by its ability to capture correlations across partitions. This is formalized by the matricization rank, which measures the complexity of dependencies between two groups of variables.

\begin{definition}[Matricization rank]
\label{def:matricization_rank}
For a tensor $\ten{T}$ over vertex set $V$ and a bipartition $(A,B)$ of $V$, let
$\rank_{A|B}(\ten{T})$ denote the rank of the matrix obtained by grouping indices
in $A$ as rows and indices in $B$ as columns.
\end{definition}
This quantity, also known as Schmidt rank in quantum information, measures correlation complexity across a partition~\citep{levine2019quantum}. The following theorem makes explicit that any TN must allocate sufficient bond dimension to capture these correlations.

\begin{theorem}[Cut-rank lower bounds separator capacity]
\label{thm:cutrank_bonddim_general}
Let $\ten{T}\in\mathbb{R}^{2\times\cdots\times 2}$ be the coalition tensor on
vertex set $V$, and consider a TN representation of $\ten{T}$.
Suppose that removing a set of TN edges $\delta(A,B)$ separates the cores indexed
by $A$ from those indexed by $B$. If each edge $e\in\delta(A,B)$ has bond
dimension $\chi_e$, then
\begin{equation}
\label{eq:cutrank_capacity}
\rank_{A|B}(\ten{T}) \;\le\; \prod_{e\in\delta(A,B)} \chi_e.
\end{equation}
In particular, if a single bond of dimension $\chi$ separates $A$ and $B$, then
$\chi \ge \rank_{A|B}(\ten{T})$.
\end{theorem}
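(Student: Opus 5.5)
The plan is to prove the bound by splitting the network along the cut and reading the contraction as a matrix factorization. Group the cores into two sub-networks, $\mathcal{N}_A=\{\ten{A}^{(u)}:u\in A\}$ and $\mathcal{N}_B=\{\ten{A}^{(u)}:u\in B\}$. By hypothesis, every TN edge joining a core in $A$ to a core in $B$ belongs to $\delta(A,B)$. All other bond indices are internal to one side. Let $\beta=(\alpha_e)_{e\in\delta(A,B)}$ denote the joint cut index. It ranges over a set of size $K:=\prod_{e\in\delta(A,B)}\chi_e$.

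First, I rewrite the contraction \eqref{eq:surrogate_contraction} (for a general TN with per-edge dimensions $\chi_e$) by splitting the sum over all bond indices into three parts: the cut indices $\beta$, the bonds internal to $A$, and the bonds internal to $B$. Each core factor depends only on its own physical index and its incident bonds. The product $\prod_{u\in V}$ therefore factors as $\prod_{u\in A}\cdot\prod_{u\in B}$, with the two pieces coupled only through $\beta$. This yields
\[
\ten{T}_{s_A,s_B}=\sum_{\beta} L_{s_A,\beta}\,R_{\beta,s_B},
\]
where $L_{s_A,\beta}$ is the contraction of $\mathcal{N}_A$ over its internal bonds with physical indices fixed to $s_A$ and open cut legs $\beta$. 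The factor $R_{\beta,s_B}$ is defined analogously for $\mathcal{N}_B$.

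Second, I interpret this identity in the $A|B$ matricization. Rows are indexed by $s_A\in\{0,1\}^{|A|}$ and columns by $s_B\in\{0,1\}^{|B|}$, so the display says that the unfolded matrix equals $LR$ with $L\in\mathbb{R}^{2^{|A|}\times K}$ and $R\in\mathbb{R}^{K\times 2^{|B|}}$. Hence $\rank_{A|B}(\ten{T})\le\min(\rank L,\rank R)\le K$, which is \eqref{eq:cutrank_capacity}. The single-bond case is immediate: when $\delta(A,B)$ consists of one edge, $K=\chi$.

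The only step needing care is the factorization of the sum. I must check that no bond index is shared between an $A$-core and a $B$-core except those in $\delta(A,B)$; this is exactly the separation hypothesis. Contractions with no open cut legs, such as closed loops entirely within one side, are scalars absorbed into $L$ or $R$. The argument is independent of the graph topology and holds for cyclic TNs as well.
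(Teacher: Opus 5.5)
Your proof is correct and follows the paper's argument. The paper likewise collects the cut bond indices into a single index ranging over a set of size $\prod_{e\in\delta(A,B)}\chi_e$, contracts each side separately, and reads $\mathcal{T}_{s}=\sum_{\beta}L_{s_A,\beta}R_{\beta,s_B}$ as a low-rank factorization of the $A|B$ matricization; your explicit check that the separation hypothesis is what lets the sum factor is a detail the paper leaves implicit.
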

\begin{proof}
See Appendix~\ref{app:cutrank_proof}.
\end{proof}
As a consequence, compact graph-aligned surrogates exists only when the
coalition tensor has low matricization rank across graph-induced separators.
\begin{corollary}[Necessary condition for compact graph-aligned surrogates]
\label{cor:compact_characterization}
If a graph-aligned TN on $G$ uses uniform bond dimension $\chi$ on every edge,
then for every separator $(A,B)$ induced by cutting a set of TN edges
$\delta(A,B)$,
\begin{equation}
\rank_{A|B}(\ten{T}) \;\le\; \chi^{|\delta(A,B)|}.
\end{equation}

\end{corollary}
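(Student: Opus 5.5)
This corollary is a direct specialization of Theorem~\ref{thm:cutrank_bonddim_general}, so the plan is to invoke that theorem with uniform bond dimensions. Fix a graph-aligned TN on $G$ whose cores $\ten{A}^{(u)}$ are as in \eqref{eq:core_shape}, and with every edge bond dimension equal to $\chi$. Suppose the TN exactly represents the coalition tensor $\ten{T}$, that is, $\widehat{\ten{T}}=\ten{T}$ in \eqref{eq:surrogate_contraction}; the statement is implicitly about such a representation. Let $(A,B)$ be a bipartition of $V$ such that removing the set of TN edges $\delta(A,B)$ separates the cores indexed by $A$ from those indexed by $B$.

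\textbf{Step 1.} Check the hypotheses of Theorem~\ref{thm:cutrank_bonddim_general}. In the graph-aligned TN, cores correspond one-to-one to vertices, and TN edges correspond to edges of $G$. Hence $\delta(A,B)$ is precisely the edge cut $\{(u,v)\in E: u\in A,\ v\in B\}$, and removing it separates the $A$-cores from the $B$-cores by construction. Each $e\in\delta(A,B)$ has $\chi_e=\chi$.

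\textbf{Step 2.} Apply \eqref{eq:cutrank_capacity}:
\[
\rank_{A|B}(\ten{T})\le\prod_{e\in\delta(A,B)}\chi=\chi^{|\delta(A,B)|}.
\]

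If one prefers a self-contained argument rather than citing the theorem, the same bound follows directly. First contract all $A$-cores along their internal bonds, producing a tensor $L$ indexed by $(s_A,\alpha_{\delta})$. Then contract all $B$-cores similarly, producing $R$ indexed by $(\alpha_\delta,s_B)$. By \eqref{eq:surrogate_contraction}, the matricization of $\ten{T}$ factorizes as
\[
\ten{T}_{(A|B)}=L\,R,
\]
where the inner dimension is $\prod_{e\in\delta}\chi_e=\chi^{|\delta|}$. Rank is bounded by the inner dimension, which gives the claim.

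The only genuinely delicate point, and the step I expect to be the main obstacle, is the bookkeeping in this factorization. Every bond index must be internal to $A$, internal to $B$, or in $\delta(A,B)$, with no other crossing bonds. This holds precisely because the graph-aligned TN has no bonds other than those on edges of $G$.
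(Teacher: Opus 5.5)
Your proposal is correct and matches the paper's proof. The paper obtains the corollary by substituting $\chi_e=\chi$ into the bound $\rank_{A|B}(\ten{T})\le\prod_{e\in\delta(A,B)}\chi_e$ of Theorem~\ref{thm:cutrank_bonddim_general}, and your optional self-contained factorization $\ten{T}_{(A|B)}=LR$ is exactly the argument the paper uses to prove that theorem.
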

This bound implies that graph-aligned TNs can achieve a target fidelity with smaller bond dimension, hence fewer parameters and queries, than chain-based surrogates such as tensor trains (TT)~\citep{oseledets2010ttcross} when $\ten{T}$ has low cut-rank across graph separators. See the proof in Appendix~\ref{app:cutrank_proof}.
\paragraph{Relationship to general TN theory.}
The principle that TN expressivity depends on cut-rank
across separators is well established in the TN literature.  Our
contribution is to instantiate this principle for the
\emph{coalition tensor of graph-induced cooperative games}, showing
that (i)~graph separators in~$G$ control the cut-rank of the
masked-input game (Theorem~\ref{thm:cutrank_bonddim_general},
Corollary~\ref{cor:compact_characterization}, Theorem~\ref{thm:tr_to_tt_rank}),
(ii)~graph-aligned TNs achieve strictly better
expressivity--parameter tradeoffs than chain factorizations for
such games, and (iii)~this translates to practical improvements in
Shapley recovery under matched parameter budgets
(Section~\ref{sec:experiments}, Fig.~\ref{fig:structure_ablation_o1o2}).
TN-SHAP-G builds on TN-SHAP~\citep{tnshap}, which
introduced TN surrogates for Shapley computation on general
(non-graph) inputs using tensor-tree topologies; the present work
introduces graph-aligned TN topologies and demonstrates their
theoretical and empirical advantages for graph-structured games.
\paragraph{Example: cycle graphs and TT rank inflation.}
This separation--capacity effect is easiest to see on a cycle (ring) graph.  
If $G$ is a ring, the graph-aligned surrogate corresponds to a tensor-ring (TR)
whose separators align with the graph, so bond dimension $\chi$ suffices.

A tensor-train (TT/MPS), however, enforces a 1D chain and only exposes
prefix--suffix cuts. Even when $\ten{T}$ has an efficient TR representation with
bond dimension $\chi$, the induced TT ranks can generically grow to
$\Omega(\chi^2)$, increasing parameter count (and thus query/sample needs) to
reach the same fidelity. The following theorem quantifies this inflation.
\begin{theorem}[TT-rank inflation from tensor-ring structure]
\label{thm:tr_to_tt_rank}
Let $\ten{T}$ admit a tensor-ring decomposition with uniform rank $\chi$. 
Then any tensor-train representation of $\ten{T}$ generically requires 
TT-ranks $\Omega(\chi^2)$, and approximation with smaller ranks incurs 
error lower-bounded by the truncated singular value tail of the 
corresponding matricization.
\end{theorem}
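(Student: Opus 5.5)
The plan is to reduce both claims to matricization ranks across the prefix--suffix cuts that a tensor train exposes. The two facts to use are standard: for any TT representation with ranks $r_1,\dots,r_{n-1}$, the $k$-th TT-rank satisfies $r_k \ge \rank_{\{1..k\}|\{k+1..n\}}(\ten{T})$; and, dually, the minimal TT-ranks equal these matricization ranks (the TT-SVD construction attains them).

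\textbf{Upper bound.} First apply Theorem~\ref{thm:cutrank_bonddim_general} to the ring topology. In a tensor ring, the prefix $A=\{1,\dots,k\}$ (for $1\le k\le n-1$) is separated from its complement $B$ by cutting exactly two bonds: the bond $(k,k+1)$ and the closing bond $(n,1)$. This gives $\rank_{A|B}(\ten{T})\le \chi^2$.

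\textbf{Generic lower bound.} The main step is to show that this bound is attained generically, so that $r_k\ge \rank_{A|B}(\ten{T})=\chi^2$ for generic TR cores. I would contract the cores of $A$ into a matrix $L\in\mathbb{R}^{2^k\times \chi^2}$, whose rows are indexed by the physical indices of $A$ and whose columns are indexed by the pair of cut bonds $(\alpha_{n1},\alpha_{k,k+1})$. Similarly, contract the cores of $B$ into $R\in\mathbb{R}^{\chi^2\times 2^{n-k}}$. Then $T_{A|B}=LR$.

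The rank of $LR$ is a lower-semicontinuous polynomial condition in the core entries. Having rank $\chi^2$ means some $\chi^2\times\chi^2$ minor is nonzero, which is a polynomial in the core entries. So it suffices to exhibit one choice of cores for which $L$ and $R$ both have full rank $\chi^2$. The nonvanishing set is then Zariski-open and dense, which is exactly what ``generically'' means here.

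\textbf{Witness construction (expected main obstacle).} Full rank requires $2^k\ge\chi^2$ and $2^{n-k}\ge\chi^2$, so the argument is stated for cuts with both sides long enough; otherwise the bound degrades to $\min(2^k,2^{n-k},\chi^2)$. This restriction is implicit in the $\Omega$. For the witness I would choose the cores on $A$ to implement an injective ``copy'' map: group the physical bits of $A$ so that $\log_2\chi$ bits encode $\alpha_{n1}$ and passes the other bond index through, and $\log_2\chi$ bits encode $\alpha_{k,k+1}$. This assumes $\chi$ is a power of two, or else pads with a sub-basis. The cores on $A$ are identity or delta tensors along the chain, which makes $L$ contain a $\chi^2\times\chi^2$ permutation submatrix. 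I would do the symmetric construction on $B$. Making the bond indices route correctly through the intermediate cores (delta cores forward one index while recording bits) is the only fiddly bookkeeping.

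\textbf{Approximation error.} This follows from Eckart--Young. Any TT with $r_k<\chi^2$ has the matricization of its $k$-th cut of rank at most $r_k$. Hence
\[
\|\ten{T}-\widehat{\ten{T}}\|_F\;\ge\;\|T_{A|B}-\widehat T_{A|B}\|_F\;\ge\;\Bigl(\sum_{i>r_k}\sigma_i(T_{A|B})^2\Bigr)^{1/2},
\]
which is positive generically, since generically $\sigma_{\chi^2}>0$.
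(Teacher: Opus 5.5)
Your proposal is correct and follows essentially the same route as the paper. Like the paper, you factor the prefix--suffix matricization through the two cut ring bonds as $T_{(k)}=LR$ with inner dimension $\chi^2$, use generic full rank of $L$ and $R$ to force $\rho_k\ge\chi^2$ for any TT with the ring's ordering, and get the error floor from Eckart--Young. The only difference is that the paper merely asserts the generic full-rank step, while your nonvanishing-minor (Zariski-open) argument with an explicit witness actually proves it. In that witness, the cores after the ``read $\alpha_1$'' phase must overwrite the carried bond index (for example with 0/1 ``set-bit'' maps) rather than just forward it, so that each product becomes $e_{\alpha_1}e_{\alpha_{k+1}}^{\top}$.
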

\noindent
See the proof in Appendix~\ref{app:tt_rank_inflation}. 

This rank inflation is not an artifact of cycle graphs: for GNN predictors, similar effects arise from the interaction structure itself.

For message-passing GNN predictors, recent theory shows that interaction
complexity across a vertex partition $(A,B)$ is governed by a graph-theoretic
\emph{walk index} of the cut boundary \citep{neurips23_walkindex_gnn}. In current
notation, this implies that $\rank_{A|B}(\ten{T})$ can be large for cuts whose
boundary admits many length-$(L\!-\!1)$ walks crossing the partition, even for
moderate depth $L$. Combined with Theorem~\ref{thm:cutrank_bonddim_general}, this
predicts that chain surrogates (TT/MPS) may require large ranks for unfavorable
orderings, while graph-aligned TNs explicitly allocate capacity along graph
separators and are ordering-invariant by construction.

\begin{corollary}[TT ordering sensitivity for GNN-induced games]
\label{cor:tt_ordering_sensitivity}
For a TT/MPS surrogate with node ordering $\pi$, every chain cut $(A,B)$ along
$\pi$ satisfies $\rho_k \ge \rank_{A|B}(\ten{T})$ by
Theorem~\ref{thm:cutrank_bonddim_general}. Thus, on graphs where the GNN induces
high cut interaction rank for some partitions (e.g., high-walk-index cuts
\citep{neurips23_walkindex_gnn}), TT can require large ranks for certain
orderings, while graph-aligned TNs remain ordering-invariant by construction. See the full proof and the formal definition in Appendix~\ref{app:tt_rank_inflation}.
\end{corollary}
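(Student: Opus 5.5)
The corollary has two parts. The first is the inequality $\rho_k \ge \rank_{A|B}(\ten{T})$ for every chain cut. The second is the qualitative claim that certain orderings force large TT ranks while graph-aligned TNs are ordering-invariant. I would prove them separately.

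For the inequality, write the TT/MPS with ordering $\pi$ as a tensor network on the path graph $\pi(1)-\pi(2)-\cdots-\pi(n)$. The $k$-th bond carries dimension $\rho_k$. Take the chain cut $A=\{\pi(1),\dots,\pi(k)\}$, $B=V\setminus A$. Removing the single TN edge between $\pi(k)$ and $\pi(k+1)$ separates the cores of $A$ from those of $B$. Theorem~\ref{thm:cutrank_bonddim_general} with $\delta(A,B)$ equal to that one edge then gives $\rank_{A|B}(\ten{T})\le \rho_k$. This holds for every $k$, so the TT-ranks along $\pi$ are bounded below by the cut-ranks of all prefix–suffix bipartitions induced by $\pi$. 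The same applies to approximate representations: by Eckart–Young, a TT with $\rho_k<\rank_{A|B}$ incurs error at least the singular-value tail of the $A|B$ matricization. I would state this remark but not rely on it.

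For the second part, I would invoke the walk-index result of \citet{neurips23_walkindex_gnn} as a black box. For an $L$-layer message-passing GNN $f$, the separation rank of $f$ across $(A,B)$ can be as large as a quantity growing with the walk index $\mathrm{WI}_{L-1}(A)$. The main obstacle is converting this into a lower bound on the rank of the coalition tensor, because that result concerns the function of continuous node features while $\ten{T}$ restricts each node to two values $\{x_v^0,x_v\}$. My plan is to note that masking is a restriction of the feature map to a two-point set per node, so $\rank_{A|B}(\ten{T})$ is at most the functional separation rank. A lower bound therefore needs a genericity argument. I would assert the existence of parameters and instances (generic $x_v,x_v^0$) for which the restricted grid matrix retains rank growing with the number of boundary-crossing walks, truncated at $2^{\min(|A|,|B|)}$. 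This existence claim, rather than a universal one, is what the corollary's phrasing (``can require'') needs. Combining it with the first part, any ordering $\pi$ whose prefix set at some $k$ equals such a high-walk-index $A$ forces $\rho_k$ to be large.

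Finally, ordering invariance of the graph-aligned TN is immediate from its definition. The cores $\ten{A}^{(u)}$ and the contraction \eqref{eq:surrogate_contraction} are indexed by $V$ and $E$ alone, so relabeling nodes permutes cores without changing the represented tensor or any bond dimension. Its separators are exactly the graph edge cuts, which Corollary~\ref{cor:compact_characterization} controls.
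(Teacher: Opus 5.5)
Your proposal is correct and follows essentially the same route as the paper's proof. You apply Theorem~\ref{thm:cutrank_bonddim_general} to the single TT bond of dimension $\rho_k$ that separates the prefix $\{\pi(1),\dots,\pi(k)\}$ from the suffix, use the walk-index result as a black box for the claim that $\rank_{A|B}(\ten{T})$ can be large on unfavorable cuts, and obtain ordering invariance from the fact that the graph-aligned cores and bonds are indexed by $V$ and $E$ alone. You also note that the walk-index result bounds the separation rank of $f$ on continuous features, which only upper-bounds the rank of the two-point coalition tensor, so the lower-bound direction needs an asserted genericity or existence argument; the paper's proof passes over this by citing the result directly, and it does not affect the corollary at the qualitative ("can require") level at which it is stated.
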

Overall, this section suggests that graph-aligned surrogates can reach a target
fidelity with smaller bond dimension~(hence fewer parameters and teacher
queries) when interactions are localized and cut-ranks across graph separators
are low. Section~\ref{sec:experiments} supports this empirically via
parameter-matched comparisons of predictive accuracy and O1/O2 Shapley recovery.
Next, we present the training procedure and efficient Shapley/interaction
recovery from the learned surrogate.

\subsection{Tensor Network Surrogate Training}
\label{sec:training}

Having established the expressivity advantages of graph-aligned surrogates, we now describe how to learn them from limited oracle access.

We train $\hat{\nu}$, parameterized by a graph-aligned TN, to approximate the masked graph game $\nu(S)$ using teacher queries on sampled coalitions.
The training dataset $\mathcal{S}=\{S_j\}_{j=1}^M$ consists of coalitions $S\subseteq V$ labeled with teacher evaluations $y_j=\nu(S_j)$. Since the full coalition space has size $2^n$, we sample from a distribution that balances coverage across coalition sizes while prioritizing coalitions most informative for Shapley and low-order interactions (details in Appendix~\ref{app:coalition_sampling}). In practice, uniform coalition sampling already yields competitive surrogate quality; however, structure-aware sampling that enriches the training set with edge-flip and triangle-flip neighbors of base coalitions substantially improves higher-order interaction recovery (O3 cosine from $0.48$ to $0.91$; see Appendix~\ref{sec:triangle_sampling}).

\usetikzlibrary{calc}
\begin{figure*}[t]
\centering
\begin{tikzpicture}[
    font=\small,
    >=Latex,
]
\begin{scope}[shift={(0, 0)}]
\node[anchor=south, font=\small\bfseries] at (1.5, 3.0) {(a) Diagonal in $[0,1]^n$};
\coordinate (O) at (0, 0);
\coordinate (X) at (2.4, 0);
\coordinate (Y) at (0, 2);
\coordinate (Z) at (0.9, 0.8);
\draw[black!40, fill=black!8, line width=0.5pt] (O) -- (X) -- ($(X)+(Z)$) -- (Z) -- cycle;
\draw[black!40, fill=black!8, line width=0.5pt] (O) -- (Y) -- ($(Y)+(Z)$) -- (Z) -- cycle;
\draw[black!40, fill=black!8, line width=0.5pt] (Y) -- ($(Y)+(X)$) -- ($(X)+(Y)+(Z)$) -- ($(Y)+(Z)$) -- cycle;
\draw[red!70!black, line width=1.8pt, ->]
    (O) -- ($(X)+(Y)+(Z)$)
    node[midway, above left=1pt, font=\small, text=red!70!black] {$z = t\mathbf{1}$};
\foreach \t in {0.2, 0.4, 0.6, 0.8} {
    \fill[blue!70!black] ($\t*(X)+\t*(Y)+\t*(Z)$) circle (2.5pt);
}
\node[font=\small] at (-0.3, -0.2) {$\mathbf{0}$};
\node[font=\small] at ($(X)+(Y)+(Z)+(0.3,0.2)$) {$\mathbf{1}$};
\end{scope}

\begin{scope}[shift={(5.0, 0)}]
\node[anchor=south, font=\small\bfseries] at (1.8, 3.0) {(b) Polynomial $g_u(t)$};
\draw[->, line width=0.6pt] (-0.1, 0) -- (3.8, 0) node[right, font=\small] {$t$};
\draw[->, line width=0.6pt] (0, -0.2) -- (0, 2.2);
\node[below, font=\footnotesize] at (0, 0) {$0$};
\node[below, font=\footnotesize] at (3.5, 0) {$1$};
\fill[blue!15, domain=0:3.5, samples=60]
    (0,0) -- plot (\x, {0.25 + 0.35*\x/3.5 + 1.2*(\x/3.5)^2 - 0.9*(\x/3.5)^3}
) -- (3.5,0) -- cycle;
\draw[blue!70!black, line width=1.5pt, domain=0:3.5, samples=80]
    plot (\x, {0.25 + 0.35*\x/3.5 + 1.2*(\x/3.5)^2 - 0.9*(\x/3.5)^3}
);
\foreach \x in {0.5, 1.2, 1.9, 2.6, 3.3} {
    \pgfmathsetmacro{\y}{0.25 + 0.35*\x/3.5 + 1.2*(\x/3.5)^2 - 0.9*(\x/3.5)^3}
    \fill[blue!70!black] (\x, \y) circle (3pt);
}
\node[font=\footnotesize, blue!70!black] at (1.5, 1.8) {$\widehat{\phi}(u) = \int_0^1 g_u\,dt$};
\node[font=\footnotesize, black!60] at (2.8, 1.2) {deg $\leq n{-}1$};
\end{scope}
\begin{scope}[shift={(10.5, 0)}]
\node[anchor=south, font=\small\bfseries] at (2.5, 3.0) {(c) Vandermonde system};

\node[font=\small] at (2.5, 1.8) {
$\underbrace{\begin{pmatrix}
1 & t_1 & \cdots & t_1^{n-1} \\
\vdots & \vdots & \ddots & \vdots \\
1 & t_m & \cdots & t_m^{n-1}
\end{pmatrix}}_{\mathbf{V}}
\underbrace{\begin{pmatrix} c_{u,0} \\ \vdots \\ c_{u,n-1} \end{pmatrix}}_{c_u}
=
\underbrace{\begin{pmatrix} g_u(t_1) \\ \vdots \\ g_u(t_m) \end{pmatrix}}_{g_u}$};

\node[font=\small] at (2.5, 0.3) {
$\Rightarrow\quad
\widehat{\phi}(u)=\displaystyle\int_0^1 g_u(t)\,dt
=\sum_{k=0}^{n-1}\frac{c_{u,k}}{k+1}$};
\end{scope}
\end{tikzpicture}
\caption{\textbf{Deterministic Shapley from a TN surrogate via diagonal interpolation.}
(a) Restrict to the diagonal $z=t\mathbf{1}$ and evaluate $g_u(t)=\partial_{z_u}\hat{\nu}(t\mathbf{1})$ at probe points $t_j$.
(b) Multilinearity implies $g_u$ is a degree-$\le n{-}1$ polynomial.
(c) Interpolate coefficients via a Vandermonde design (or a stable equivalent) and integrate exactly: $\widehat{\phi}(u)=\int_0^1 g_u(t)\,dt$.}

\label{fig:vandermonde}

\end{figure*}
\subsubsection{Distillation}
We train a graph-aligned TN surrogate $\hat{\nu}(\cdot;\Theta):[0,1]^n\to\mathbb{R}$ to approximate the masked game $\nu$ using $M$ teacher queries. Coalitions $\mathcal{S} = \{S_1, \ldots, S_M\}$ are sampled from a distribution $\mathcal{D}$ that balances coverage across coalition sizes while prioritizing coalitions informative for low-order interactions (details in Appendix~\ref{app:coalition_sampling}). The surrogate is fit by empirical risk minimization:
\begin{equation}
\label{eq:surrogate_training}
\min_{\Theta}\;
\frac{1}{M} \sum_{j=1}^{M}
\bigl(\nu(S_j)-\hat{\nu}(\mathbf{1}_{S_j};\Theta)\bigr)^2,
\end{equation}
where $\hat{\nu}(\mathbf{1}_S;\Theta)$ denotes the TN output at the binary indicator $\mathbf{1}_S$, selecting the included/excluded slice at each node according to $S$.

The learned surrogate is the multilinear function induced by the TN parameters $\Theta$:
\begin{equation}
\label{eq:nu_hat_contraction}
\hat{\nu}(z;\Theta)
:=
\widehat{\ten T}(\Theta)\times_1\bm{b}_1(z_1)\times_2\cdots\times_n \bm{b}_n(z_n),
\end{equation}
where $\widehat{\ten{T}}(\Theta)$ is the surrogate coalition tensor implicitly represented by the TN cores, $\bm{b}_v(z_v)=[1-z_v,\;z_v]^\top$ are Bernoulli selector vectors, and $\times_i$ denotes mode-$i$ contraction (Appendix~\ref{app:tensor_notation}). The contraction is performed directly on the factorized representation without materializing the $2^n$-entry tensor. At binary inputs $z = \mathbf{1}_S$, evaluating $\hat{\nu}(\mathbf{1}_S)$ reduces to contracting the TN with physical indices fixed by coalition membership.

The following theorem, adapted from~\citet[\S4]{NEURIPS2021_5a9d8bf5}, shows that the required query budget scales with TN parameter count rather than $2^{|V|}$
\begin{theorem}[Near-linear teacher-query complexity on bounded-degree graphs]
\label{thm:near_linear_budget}
Let $\mathcal{H}_G$ be a tensor-network hypothesis class on topology $G=(V,E)$
with parameter count
$N_G=\sum_{v\in V}\prod_{e\in E_v}\dim(e)$ \citep[\S4]{NEURIPS2021_5a9d8bf5}.
Assume $\nu(S)\in[-B,B]$ and let $\hat\nu\in\mathcal{H}_G$ be learned by empirical risk minimization (ERM) from
$M$ i.i.d.\ teacher queries $(S_j,\nu(S_j))$.
Then with probability $\ge 1-\delta$,
\[
\mathcal{E}(\hat\nu)-\inf_{h\in\mathcal{H}_G}\mathcal{E}(h)
\;=\;
\tilde O\!\left(\frac{B^2\,(N_G+\log(1/\delta))}{M}\right),
\]
where $\mathcal{E}(h)=\mathbb{E}\big[(h(S)-\nu(S))^2\big]$ and $\tilde O(\cdot)$
hides logarithmic factors in $M$ and $|V|$.
Moreover, if $\deg_{\max}(G)\le \Delta$ and $\dim(e)\le\chi$, then
$N_G\le |V|\chi^\Delta$, hence $M=\tilde O(|V|\chi^\Delta/\varepsilon^2)$ suffices
to reach excess risk $\le \varepsilon^2$, i.e., near-linear in $|V|$ for fixed
$\chi,\Delta$.
\end{theorem}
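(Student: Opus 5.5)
The plan is to bound the capacity of $\mathcal{H}_G$ through its pseudo-dimension and then apply a standard uniform-convergence argument for bounded squared loss. Following \citet[\S4]{NEURIPS2021_5a9d8bf5}, I treat the map $\Theta\mapsto\hat\nu(\mathbf{1}_S;\Theta)$ as fixed by the coalition $S$. For each $S$ it is a polynomial in the $N_G$ core parameters of degree at most $|V|$, since it is multilinear with one factor per core.

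The first step is to bound $\pdim(\mathcal{H}_G)$. For $m$ query points $S_1,\dots,S_m$ and thresholds $r_1,\dots,r_m$, the sign patterns $\bigl(\mathrm{sign}(\hat\nu(\mathbf{1}_{S_i};\Theta)-r_i)\bigr)_i$ are realized by $m$ polynomials of degree $\le |V|$ in $N_G$ variables. Warren's theorem (or Milnor--Thom) bounds the number of such patterns by $(8e|V|m/N_G)^{N_G}$. Shattering requires $2^m$ patterns, so $m\le 2N_G\log_2(8e|V|)$, which gives $\pdim(\mathcal{H}_G)=O(N_G\log|V|)$. This is the content of the cited result.

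The second step clips hypotheses to $[-B,B]$. Clipping only lowers the risk, since $\nu\in[-B,B]$, and it keeps the pseudo-dimension bounded. The squared loss is then bounded by $4B^2$ and Lipschitz. I then invoke a fast-rate excess-risk bound for ERM with squared loss on a convex-or-star-shaped-free class of bounded pseudo-dimension, namely a local Rademacher or offset-complexity bound. This yields excess risk $O\bigl(B^2(\pdim\log M+\log(1/\delta))/M\bigr)$. Substituting the pseudo-dimension bound gives the displayed $\tilde O$ rate, with $\log|V|$ and $\log M$ hidden.

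I expect the main obstacle to be this fast rate $1/M$ rather than $1/\sqrt{M}$. For a nonconvex class and an arbitrary (misspecified) $\nu$, the Bernstein condition on the excess loss is not automatic. My first approach is to use the "optimistic" bound of the form $\mathcal{E}(\hat\nu)\le(1+c)\inf\mathcal{E}+\tilde O(\cdot/M)$, or else to state the result in the realizable or near-realizable regime. If neither works, I fall back to the $\tilde O(\sqrt{\cdot/M})$ rate, which is exactly what the stated sample size $M=\tilde O(|V|\chi^\Delta/\varepsilon^2)$ for excess risk $\varepsilon^2$ is compatible with.

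The last step is the parameter count. Node $v$ contributes $2\prod_{e\in E_v}\dim(e)\le 2\chi^{\deg v}\le 2\chi^\Delta$ parameters, where the factor $2$ from the physical leg is absorbed into constants. Summing over $v$ gives $N_G\le |V|\chi^\Delta$ up to this constant. Plugging in and solving for $M$ gives the near-linear budget in $|V|$ for fixed $\chi$ and $\Delta$.
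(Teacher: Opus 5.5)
Your overall route is the paper's. You bound $\pdim(\mathcal{H}_G)=O(N_G\log|V|)$ by the Warren-type argument of the cited work (the paper simply quotes $\pdim(\mathcal{H}_G)\le 2N_G\log(12|V|)$), apply a uniform-convergence bound for bounded squared loss, and count at most $2\chi^{\Delta}$ entries per core. Your clipping step is an improvement: the paper just asserts that the loss is $O(B^2)$, although TN outputs are unbounded. Note only that clipping gives a guarantee for the clipped ERM, not for $\hat\nu\in\mathcal{H}_G$ itself.

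The gap is the step you flag, and it cannot be closed in the stated generality. Because $\mathcal{H}_G$ is nonconvex, ERM does not in general achieve excess risk $\tilde O(1/M)$ for misspecified $\nu$. A counterexample already exists for $|V|=2$ and $\chi=1$, where $\mathcal{H}_G$ is the set of rank-one $2\times 2$ tables. Take $\nu(\emptyset)=\nu(V)=B$ and $\nu(\{1\})=\nu(\{2\})=0$. Sample $\emptyset$ with probability $\tfrac15+\eta$, $V$ with probability $\tfrac15-\eta$, and each singleton with probability $\tfrac{3}{10}$. The singleton masses dominate ($p_{\{1\}}p_{\{2\}}>p_\emptyset p_V$), so every population risk minimizer fits exactly one of the two nonzero entries and zeroes the rest. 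With high probability the same holds for every empirical risk minimizer. The two fits have risks $B^2(\tfrac15\mp\eta)$. ERM picks the worse one whenever the empirical frequency of $\emptyset$ falls below that of $V$. For $\eta\asymp M^{-1/2}$ this has constant probability, and the excess risk is then $2B^2\eta\asymp B^2/\sqrt{M}$.

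So no local-Rademacher or offset argument can yield the displayed bound. Your optimistic bound $\mathcal{E}(\hat\nu)\le(1+c)\inf_h\mathcal{E}(h)+\tilde O\bigl(B^2(N_G+\log(1/\delta))/(cM)\bigr)$ is valid, but it carries the extra term $c\inf_h\mathcal{E}(h)$. It gives the stated rate only if you add realizability ($\inf_h\mathcal{E}(h)=0$) as a hypothesis.

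Your fallback is also misjudged. The rate $\tilde O\bigl(B^2\sqrt{(N_G+\log(1/\delta))/M}\bigr)$ turns $M=\tilde O(B^4N_G/\varepsilon^2)$ queries into excess risk $\varepsilon$, not $\varepsilon^2$. Reaching $\varepsilon^2$ needs $M=\tilde O(B^4N_G/\varepsilon^4)$.

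This fallback is in fact what the paper does. Its sketch invokes ``standard'' bounds for the $1/M$ rate. Its detailed proof, however, derives only the $\sqrt{\cdot/M}$ rate and concludes $M=\tilde O\bigl(B^4(|V|\chi^\Delta+\log(1/\delta))/\varepsilon^2\bigr)$ for excess risk $\le\varepsilon$. So neither your plan nor the paper establishes the displayed $1/M$ rate or the $\varepsilon^2$ conclusion. A correct statement either uses the slow rate with excess risk $\varepsilon$, or adds realizability (or a Bernstein-type condition) to obtain the fast rate.
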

See Appendix~\ref{app:proof_near_linear_budget} for the details and the full proof.

Theorem~\ref{thm:near_linear_budget} shows that graph-aligned TN surrogates can be learned from a near-linear number of teacher queries; we now quantify how this sample efficiency translates into surrogate fidelity on unseen coalitions and, in turn, into Shapley attribution accuracy.
\paragraph{Test $R^2$ as a fidelity metric.}
We measure surrogate fidelity by held-out $R^2$ on coalitions in
$\mathcal{S}_{\mathrm{test}}$:
\[
R^2 \;=\; 1-\frac{\sum_{S\in\mathcal{S}_{\mathrm{test}}}\bigl(\nu(S)-\hat{\nu}(\mathbf{1}_S)\bigr)^2}
{\sum_{S\in\mathcal{S}_{\mathrm{test}}}\bigl(\nu(S)-\bar{\nu}\bigr)^2}.
\]
Theorem~\ref{thm:near_linear_budget} links the teacher-query budget $M$ and TN
capacity to surrogate generalization, while Lemma~\ref{lem:approx_floor_svd}
(Appendix~\ref{app:approx_floor}) highlights an intrinsic approximation floor
when the chosen TN topology or bond dimension $\chi$ cannot capture the game’s
cut-rank structure (quantified by truncated-SVD residuals of matricizations).
Thus, increasing $M$ and/or $\chi$ typically improves $R^2$ until this floor is
reached. Finally, Lemma~\ref{lem:stability_shapley} guarantees that uniform game
approximation $\|\nu-\hat{\nu}\|_\infty\le\varepsilon$ implies worst-case
Shapley error at most $2\varepsilon$. Empirically, we observe a strong
correlation between test $R^2$ and Shapley cosine similarity
(Fig.~\ref{fig:fit_vs_acc_full}), making $R^2$ a practical proxy and tuning knob
for attribution accuracy.

\begin{lemma}[Stability of Shapley values and interactions under game perturbations]
\label{lem:stability_shapley}
Let $\nu,\hat{\nu}:2^V\to\mathbb{R}$ satisfy
$\|\nu-\hat{\nu}\|_\infty \le \varepsilon$.
Then for all $i\in V$,
$|\phi_i(\nu)-\phi_i(\hat{\nu})|\le 2\varepsilon$.
Moreover, for any $T\subseteq V$ with $|T|=k\ge 1$,
the Shapley interaction index satisfies
$|\phi_T(\nu)-\phi_T(\hat{\nu})|\le 2^k\varepsilon$.
\end{lemma}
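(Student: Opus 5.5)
Both bounds follow from linearity of the Shapley value and of the SII in the game, together with the fact that their coefficients are nonnegative weights summing to one. Set $\delta:=\nu-\hat{\nu}$. Then $\|\delta\|_\infty\le\varepsilon$, and $\phi_i(\nu)-\phi_i(\hat\nu)=\phi_i(\delta)$, and similarly for $\phi_T$. So it suffices to bound $|\phi_i(\delta)|$ and $|\phi_T(\delta)|$ for an arbitrary game $\delta$ with $\|\delta\|_\infty\le\varepsilon$.

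For first order, I use \eqref{eq:shapley_value}:
\[
\phi_i(\delta)=\sum_{S\subseteq V\setminus\{i\}} w_S\bigl[\delta(S\cup\{i\})-\delta(S)\bigr],
\qquad w_S=\frac{|S|!\,(n-|S|-1)!}{n!}.
\]
Each bracket is at most $2\varepsilon$ in absolute value by the triangle inequality. The weights satisfy $\sum_S w_S=1$: grouping by $s=|S|$ gives $\sum_{s=0}^{n-1}\binom{n-1}{s}\frac{s!(n-1-s)!}{n!}=\sum_{s}\frac1n=1$. Equivalently, $\phi_i$ is the expected marginal contribution over a uniformly random permutation. Hence $|\phi_i(\delta)|\le 2\varepsilon$.

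For order $k$, I use the SII definition deferred to Appendix~\ref{app:interactions}:
\[
\phi_T(\delta)=\sum_{S\subseteq V\setminus T}\frac{(n-|S|-k)!\,|S|!}{(n-k+1)!}\,\Delta_T\delta(S),
\qquad
\Delta_T\delta(S)=\sum_{L\subseteq T}(-1)^{k-|L|}\delta(S\cup L).
\]
The discrete derivative $\Delta_T\delta(S)$ has $2^k$ terms, each at most $\varepsilon$ in absolute value, so $|\Delta_T\delta(S)|\le 2^k\varepsilon$.

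It remains to check that the SII weights sum to one. Grouping by $s=|S|$ over $V\setminus T$, which has $n-k$ elements, the sum is
\[
\sum_{s=0}^{n-k}\binom{n-k}{s}\frac{s!\,(n-k-s)!}{(n-k+1)!}=\sum_{s=0}^{n-k}\frac{(n-k)!}{(n-k+1)!}=\frac{n-k+1}{n-k+1}=1.
\]
This is the only nontrivial computation, and it is where the normalization convention matters. If the appendix used a differently normalized SII, for example without the $(n-k+1)$ factor, I would redo the sum with the beta-integral identity $\frac{s!(m-s)!}{(m+1)!}=\int_0^1 t^s(1-t)^{m-s}\,dt$ with $m=n-k$. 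This shows the weights are the probabilities $\int_0^1\binom{m}{s}t^s(1-t)^{m-s}\,dt$, which sum to one by the binomial theorem.

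Combining the two bounds gives $|\phi_T(\delta)|\le 2^k\varepsilon$. For $k=1$ this recovers the $2\varepsilon$ bound, which is a consistency check.
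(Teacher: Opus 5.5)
Your proof is correct and follows the same route as the paper's sketch: linearity of $\phi$ in the game, writing $\phi_T$ as a weighted average of $k$-th order discrete differences, and bounding each $2^k$-term difference by $2^k\varepsilon$. You also make explicit a step the paper leaves implicit, namely that the SII weights $\frac{|S|!\,(n-|S|-k)!}{(n-k+1)!}$ are nonnegative and sum to one. Your closing aside about a ``different normalization'' is unnecessary, since the beta-integral identity you quote produces exactly these same weights.
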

See Appendix~\ref{app:stability_proof} for the full proof.

Together, these results justify computing Shapley values and interaction indices directly from the learned TN surrogate: once the surrogate achieves high fidelity, the induced attributions are guaranteed to be accurate. We now show how Shapley quantities can be computed deterministically from the multilinear TN representation.
\subsection{Deterministic Shapley and Interactions from the TN Surrogate}
\label{sec:exact_shapley}
Recall $\hat{\nu}$ is the trained graph-aligned TN surrogate.
Since $\hat{\nu}$ is a multilinear polynomial in $z$ by construction,
we can compute Shapley values and interaction indices \emph{deterministically}
from the surrogate, without further teacher evaluations.

For any player $u\in V$, Shapley values admit the diagonal-derivative integral
representation~\cite{owen1972multilinear}
\begin{equation}
\label{eq:shapley_integral_hat}
\widehat{\phi}(u)
=
\int_0^1
\frac{\partial \hat{\nu}}{\partial z_u}(t\mathbf{1})
\,dt.
\end{equation}
The integrand is evaluated by TN contraction. At a given $t\in[0,1]$, we contract
the surrogate at the diagonal input $z=t\mathbf{1}$, using $\bm b_v(t)=[1-t,\;t]^\top$
at every node $v$, except that at the queried node $u$ we insert the derivative vector
$\bm b'_u(t)=[-1,\;1]^\top$.
Higher-order interactions are obtained by inserting $\bm b'_{u_i}(t)$ at multiple sites
(Appendix~\ref{app:derivative_computation}).

Define $g_u(t):=\partial_{z_u}\hat{\nu}(t\mathbf{1})$.
Multilinearity implies $\deg(g_u)\le n{-}1$, so $\widehat{\phi}(u)=\int_0^1 g_u(t)\,dt$ is recovered from $m=O(n)$ probe points via the diagonal interpolation recipe in Section~\ref{sec:multilinear_extension} (Fig.~\ref{fig:vandermonde}). 
Each evaluation $g_u(t_j)$ is obtained by the same local-replacement contraction described above, i.e., $\bm b_u(t_j)\mapsto \bm b'_u(t_j)$ while $\bm b_v(t_j)$ is used for $v\neq u$.
Mixed partial derivatives for interactions are handled identically by replacing multiple sites (Appendix~\ref{app:derivative_computation}).

Notably, the same surrogate supports \emph{all} first- and higher-order
quantities, whereas sampling-based estimators typically require separate Monte Carlo
budgets for each order.

\subsubsection{Computational complexity}
\label{sec:complexity}
Given a trained surrogate, the cost of computing Shapley values and interaction
indices is dominated by tensor network contraction and polynomial
interpolation. The graph-aligned structure of the surrogate allows both first-
and second-order attributions to be computed efficiently by exploiting
locality in the interaction graph.

\begin{theorem}[Amortized runtime on bounded-treewidth graphs]
\label{thm:runtime_shapley}
Let $\hat{\nu}$ be a graph-aligned TN surrogate on $G=(V,E)$ with
bond dimension $\chi$. Assume we contract the TN using a tree decomposition of
width $\tau$ (max bag size $\tau+1$), and reuse the resulting messages to form
single-node and single-edge environments.
Then for a fixed probe point $t\in[0,1]$:
\begin{enumerate}[label=(\roman*),nosep,leftmargin=*]
\item all first-order derivatives $\{\partial_{z_u}\hat{\nu}(t\mathbf{1})\}_{u\in V}$
can be computed in $O(|V|\chi^{\tau+1})$ total time;
\item all edge mixed derivatives $\{\partial_{z_u}\partial_{z_v}\hat{\nu}(t\mathbf{1})\}_{(u,v)\in E}$
can be computed in $O(|V|\chi^{\tau+1})$ total time.
\end{enumerate}
Therefore, using $m$ probe points, computing all Shapley values and all edge
interaction indices costs $O(m|V|\chi^{\tau+1})$ time.
\end{theorem}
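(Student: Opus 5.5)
The plan is to reduce everything to a standard two-pass (upward/downward) message-passing contraction on a tree decomposition, the analogue of belief propagation or of the junction-tree algorithm for sum-product.

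\textbf{Setup.} Fix $t$ and replace each physical leg by the vector $\bm b_v(t)$. Absorbing this vector into the core $\ten A^{(v)}$ gives a "closed" core $\ten C^{(v)}$, which is a tensor in the bond indices only. The scalar $\hat\nu(t\mathbf 1)$ is then the full contraction of $\{\ten C^{(v)}\}_{v\in V}$ over all bond indices. Since $\hat\nu$ is multi-affine, the vector $\bm b'_u=[-1,1]^\top$ is exactly $\partial_{z_u}\bm b_u$. Hence $\partial_{z_u}\hat\nu(t\mathbf 1)$ is the same contraction with $\ten C^{(u)}$ replaced by $\ten C'^{(u)}$, obtained from $\bm b'_u$. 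Likewise, $\partial_{z_u}\partial_{z_v}\hat\nu(t\mathbf 1)$ is obtained by replacing two cores.

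\textbf{Upward and downward passes.} Take a tree decomposition $(\mathcal T,\{\beta_a\})$ of $G$ of width $\tau$ with $O(|V|)$ bags. Assign each core (a factor over its incident bond variables) to one bag that covers it. Here each edge $e$ is treated as a variable of dimension $\chi$, and its bond index is covered by any bag containing both endpoints. The conventional identification is that a bag of $\tau+1$ vertices carries the bond variables among them, and the cost is measured as $\chi^{\tau+1}$ per bag, following the paper's convention.

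\begin{itemize}
\item \emph{Upward pass.} Rooting $\mathcal T$, compute for each bag the message to its parent by summing out the variables not in the separator. This is $O(\chi^{\tau+1})$ per bag.
\item \emph{Downward pass.} Compute the complementary messages from parent to child at the same cost.
\end{itemize}

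After both passes, every bag $a$ has its \emph{environment}: the contraction of all cores not assigned to $a$, expressed as a tensor over the variables of $a$. Computing all environments therefore costs $O(|V|\chi^{\tau+1})$. This is the standard junction-tree result. Reuse across $u$ is what keeps the total cost linear, rather than quadratic, in $|V|$.

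\textbf{First-order derivatives, part (i).} For each $u$, let $a(u)$ be the bag holding $\ten C^{(u)}$. The quantity $\partial_{z_u}\hat\nu(t\mathbf 1)$ is the contraction of the environment of $a(u)$ with the cores of $a(u)$, where $\ten C^{(u)}$ is swapped for $\ten C'^{(u)}$. This costs $O(\chi^{\tau+1})$. Two accounting points keep the total at $O(|V|\chi^{\tau+1})$:
\begin{itemize}
\item Each vertex is assigned to exactly one bag.
\item A bag holds at most $\tau+1$ cores. To avoid a factor $\tau$, precompute within the bag a local "leave-one-out" environment for each core, using prefix/suffix products over the bag's cores. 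Alternatively, note that $\tau$ is treated as constant.
\end{itemize}

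\textbf{Edge mixed derivatives, part (ii).} For $(u,v)\in E$, some bag contains both $u$ and $v$. Assign the two cores to it, which we may arrange by a standard refinement, or use the bag containing the edge as its host. Then $\partial_{z_u}\partial_{z_v}\hat\nu$ is the bag-local contraction with both cores swapped, at cost $O(\chi^{\tau+1})$ per edge. The total is $O(|E|\chi^{\tau+1})$. Since bounded treewidth gives $|E|\le\tau|V|$, this is $O(|V|\chi^{\tau+1})$ for fixed $\tau$.

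\textbf{Main obstacle.} The step I expect to be hardest is the cost bookkeeping. First, the paper measures the bag cost as $\chi^{\tau+1}$, but an honest count would be $\chi$ raised to the number of bond edges touching the bag. I would state the convention explicitly: bags are taken over bond variables, as in a tree decomposition of the line graph, or equivalently treat $\tau$ as the contraction width. Second, the per-edge case requires an edge whose endpoints are not co-hosted in the bag holding both of their cores. For such an edge I would fall back on the local environment of a covering bag, re-absorbing the second core through a path of messages. I would argue this path can be avoided by choosing the core-to-bag assignment so that every core sits in a bag together with all of its neighbours' shared bags.

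\textbf{Conclusion.} Repeating the two passes for each of the $m$ probe points gives $O(m|V|\chi^{\tau+1})$ evaluations. The Vandermonde solves add $O(m n^2)$ or less per target and are dominated by (or stated separately from) the contraction cost. This yields the claimed total.
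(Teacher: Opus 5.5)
Your route is the paper's own: absorb $\bm b_v(t)$ into the cores, run an upward/downward pass over a tree decomposition, read each $\partial_{z_u}\hat\nu(t\mathbf 1)$ off a bag environment by the swap $\bm b_u(t)\mapsto\bm b'_u$, and treat each edge in a bag containing both endpoints. Your first ``obstacle'' is more than bookkeeping. If $\tau$ is the width of a tree decomposition of $G$, as in the paper's proof, the stated bound is false. A star $K_{1,n-1}$ has $\tau=1$, yet $\partial_{z_c}\hat\nu(t\mathbf 1)$ at the centre $c$ depends on all $\chi^{n-1}$ bond slices of $\ten{A}^{(c)}$, so no algorithm runs in $O(n\chi^{2})$. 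A complete binary tree already needs $\Omega(|V|\chi^{3})$. The step that fails in the paper is ``each cluster tensor has at most $\tau+1$ open bond indices.'' The variables a bag actually touches lie in $B_c:=\delta(X_c)$, the bonds with an endpoint in $X_c$, so $|B_c|\le(\tau+1)\Delta$ with $\Delta$ the maximum degree. Moreover, $(\mathcal T,\{B_c\})$ is a valid junction tree over bond variables. Indeed, $\delta(u)\subseteq B_c$ (the bonds at $u$) whenever $u\in X_c$. And for $e=(x,y)$, the set $\{c:e\in B_c\}=\mathcal T_x\cup\mathcal T_y$ is connected because $\mathcal T_x\cap\mathcal T_y\neq\emptyset$, where $\mathcal T_x$ is the subtree of bags containing $x$. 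So keep your reinterpretation, but make it part of the statement: replace $\chi^{\tau+1}$ by $\chi^{\max_c|B_c|}$, or let $\tau$ be the width of a decomposition of the line graph.

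The genuine gap is in (ii). Each core needs exactly one host bag, otherwise it is contracted twice. The paper's Step~1, which contracts all cores of every bag, has this defect, and its ``which can always be arranged for edges'' is the same hole as yours. So you cannot co-host $u$ with each of its neighbours. Your path-of-messages fallback costs the path length per edge, which is unbounded, and the assignment you propose to avoid it is not something one can enforce. The missing idea is a way to transport the swap at $u$ to wherever $v$ is hosted, amortized over all edges.

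One way to do this: root $\mathcal T$ and host each core at the top-most bag $a(u)$ of $\mathcal T_u$. For $(u,v)\in E$, both $a(u)$ and $a(v)$ are ancestors of any bag in $\mathcal T_u\cap\mathcal T_v$, hence comparable. Say $a(u)$ is at or above $a(v)$. Then $a(v)$ lies on the path from a common bag up to $a(u)$, so the path from $a(v)$ to $a(u)$ lies in $\mathcal T_u$. In the downward pass, for every tree edge $p\to c$ and every $u\in X_p\cap X_c$ (all such $u$ are hosted at or above $p$), also compute a differentiated message $\mu^{(u)}_{p\to c}$. Use the same recursion, but with $\bm b_u(t)\mapsto\bm b'_u$ at $u$'s host; this adds at most $\tau+1$ messages per tree edge. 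Then $\partial_{z_u}\partial_{z_v}\hat\nu(t\mathbf 1)$ is a single local contraction at $a(v)$. It combines $\mu^{(u)}_{\mathrm{par}(a(v))\to a(v)}$, the upward messages from the children of $a(v)$, and the cores hosted at $a(v)$ with $\bm b_v(t)\mapsto\bm b'_v$. If $a(u)=a(v)$, both swaps are local. So (ii) costs $O(\tau)$ times the first-order sweep.

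Finally, do not re-solve a Vandermonde system per target. Interpolation followed by integration is a fixed linear functional $y\mapsto\omega^\top y$ of the $m$ probe values, so compute $\omega$ once and each target then costs $O(m)$. A per-target $O(mn^2)$ solve is not dominated by the contraction cost.
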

See proof in Appendix~\ref{app:runtime}.
\begin{remark}[Message reuse across probe points]
The tree decomposition structure is independent of the probe point $t$. 
However, the message values depend on $t$ through the Bernoulli vectors 
$\bm{b}_v(t)$, so messages cannot be directly reused across different probe 
points. The stated complexity $O(m|V|\chi^{\tau+1})$ thus reflects $m$ 
independent message-passing sweeps. In practice, the overhead is modest 
since $m = O(n)$ and the per-sweep cost is linear in $|V|$ for bounded 
treewidth.
\end{remark}
\begin{corollary}[Constant treewidth and bond dimension]
If $\tau=O(1)$ and $\chi=O(1)$, then the total cost is linear in graph size:
$O(m|V|)$ time for all Shapley values and edge interactions.
\end{corollary}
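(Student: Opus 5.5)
The corollary follows directly from Theorem~\ref{thm:runtime_shapley} by substitution. That theorem gives a total cost of $O(m|V|\chi^{\tau+1})$ for all Shapley values and all edge interaction indices over $m$ probe points. We therefore only need to show that $\chi^{\tau+1}$ is bounded by a constant, and then check that the interpolation and integration steps of Section~\ref{sec:exact_shapley} do not add anything that dominates.

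\textbf{Bounding the contraction term.} Suppose $\tau\le c_1$ and $\chi\le c_2$ for absolute constants $c_1,c_2$. If $\chi\ge 1$, then $\chi^{\tau+1}\le c_2^{c_1+1}$, which is a constant independent of $|V|$. Hence the contraction cost $O(m|V|\chi^{\tau+1})$ becomes $O(m|V|)$. The same bound covers both the first-order environments of part~(i) and the edge mixed derivatives of part~(ii), since each is stated with the identical bound.

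\textbf{Accounting for interpolation and integration.} After the $m$ sweeps, each node $u$ (and each edge, for interactions) has $m$ samples $g_u(t_j)$. We then recover the coefficients $c_u$ and form $\sum_k c_{u,k}/(k+1)$. The coefficients need not be solved for explicitly. Because the probe points are fixed, the map $(g_u(t_j))_j\mapsto\int_0^1 g_u$ is a fixed linear functional. Its quadrature weights $w_j$ can be precomputed once, independent of $u$, for example by solving the Vandermonde (or least-squares) system once or by using barycentric/Clenshaw--Curtis weights on the Chebyshev nodes. Each $\widehat{\phi}(u)=\sum_j w_j g_u(t_j)$ then costs $O(m)$, so all nodes together cost $O(m|V|)$. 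For edges, $|E|\le \Delta|V|/2$, and bounded treewidth gives $|E|\le\tau|V|$, so $|E|=O(|V|)$ and edge interactions also cost $O(m|V|)$.

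\textbf{The main obstacle.} The one-time computation of the weights costs $O(m^2)$ or $O(mn^2)$, which is not $O(m|V|)$ in general. I plan to handle this in one of two ways. The first is to treat the weights as a data-independent precomputation that depends only on $m$ and $n$, and is therefore excluded from the amortized cost, consistent with the amortization framing of Theorem~\ref{thm:runtime_shapley}. The second is to note that with $m=O(n)$ Chebyshev nodes, closed-form Clenshaw--Curtis weights are available in $O(m\log m)$ time, which is absorbed into the final bound up to a logarithmic factor. With either choice, the total cost is $O(m|V|)$ as claimed.
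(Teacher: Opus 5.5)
Your proof is correct and takes the paper's route: the corollary is Theorem~\ref{thm:runtime_shapley} with $\chi^{\tau+1}\le c_2^{c_1+1}=O(1)$. Your extra bookkeeping for the interpolation step is a useful addition, since the paper's runtime proof never counts that cost: one precomputed weight vector is applied in $O(m)$ per node or edge, and $|E|\le\tau|V|$. The obstacle you flag is not real, so neither workaround is needed. The weights solve the transposed Vandermonde system $\sum_{j} w_j t_j^{k}=1/(k+1)$, $k=0,\ldots,n-1$, in $O(n^2)$ time (e.g.\ Bj\"orck--Pereyra), and $n^2=|V|^2\le m|V|$ because $m\ge n$, so this cost fits inside $O(m|V|)$ outright. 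Likewise, your $O(m^2)$ and $O(m\log m)$ terms are fully absorbed under the paper's $m=O(n)$, with no logarithmic loss.
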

Many molecular graphs encountered in practice admit low-width decompositions, making the above bound effective.

\section{Experiments}
\label{sec:experiments}
We evaluate \method{} along three axes: (i) correctness on small graphs where
exact enumeration is feasible, (ii) query efficiency relative to sampling-based
estimators, and (iii) scaling with graph size.
Unless stated otherwise, we explain a fixed test instance $(G,X)$ using the
node-masking game in~\eqref{eq:graph_game} with a mean-feature baseline.
We report surrogate fidelity via held-out coalition $R^2$ and attribution
fidelity via cosine similarity to enumeration on small graphs ($n\le 20$) for
O1 Shapley values and O2 interactions.
Query efficiency is evaluated against permutation sampling~\citep{castro2009polynomial} and SHAP-IQ~\citep{fumagalli2024shapiq} as
representative model-agnostic Shapley estimators (GraphSHAP-IQ~\citep{muschalik2025exact} in~\ref{app:graphshapiq_comparison}), and evaluate on three
{\color{red}molecular/ protein} benchmarks including Mutagenicity~\citep{debnath1991structure}, Benzene~\citep{benzene}, and PROTEINS~\cite{proteins}; full experimental details in
Appendix~\ref{app:experiments}.

\paragraph{Correctness on Small Graphs \& query efficiency.}
\begin{table}[t]
\centering
\caption{Correctness on small graphs (mean $\pm$ std). GraphSVX exhibits high variance due to poor convergence; O2 is omitted as GraphSVX does not support interaction indices.}
\label{tab:smallgraph_correctness}
\small
\setlength{\tabcolsep}{4pt}
\begin{tabular}{llccc}
\toprule
Method & Dataset & O1 & O2 \\
\midrule
TN-SHAP-G & Benzene  & $0.9979\,(.004)$ & $0.9612\,(.051)$ \\
GraphSVX  & Benzene & $0.30\,(.51)$   & $-$   \\
\midrule
TN-SHAP-G & Mutagenicity  & $0.9933\,(.012)$ & $0.9688\,(.071)$ \\
GraphSVX  & Mutagenicity  & $0.45\,(.66)$    & $-$    \\
\bottomrule
\end{tabular}
\end{table}

On small graphs ($n\le 20$), TN-SHAP-G closely matches exact Shapley values (O1)
and interaction indices (O2) across molecular benchmarks
(Table~\ref{tab:smallgraph_correctness}). Under matched teacher-query budgets,
TN-SHAP-G achieves high attribution accuracy with 10--100$\times$ fewer model
queries than permutation sampling and SHAP-IQ
(Fig.~\ref{fig:budget-curves-main}). Notably, a single learned surrogate supports
both O1 and O2 recovery, whereas sampling-based methods require separate Monte
Carlo budgets and exhibit high variance, particularly for higher-order
interactions. Code is available at \url{https://github.com/farzana0/TN-SHAP-G}.

\begin{figure}[t]
\centering
\begin{subfigure}[b]{0.49\columnwidth}
    \includegraphics[width=\textwidth]{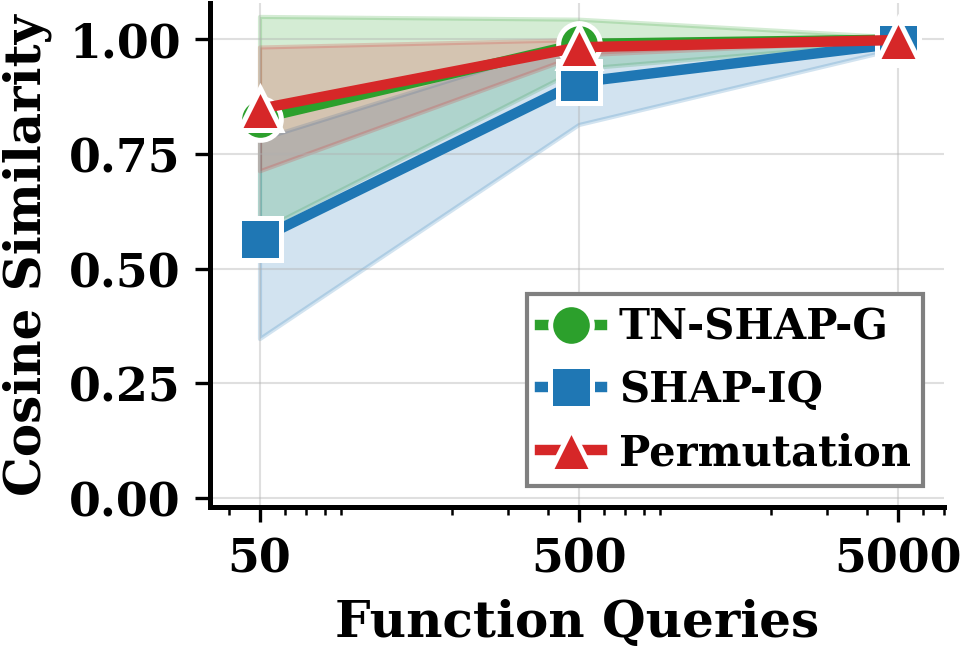}
    \caption{O1 vs $\#$queries.}
    \label{fig:o1-budget}
\end{subfigure}
\hfill
\begin{subfigure}[b]{0.49\columnwidth}
    \includegraphics[width=\textwidth]{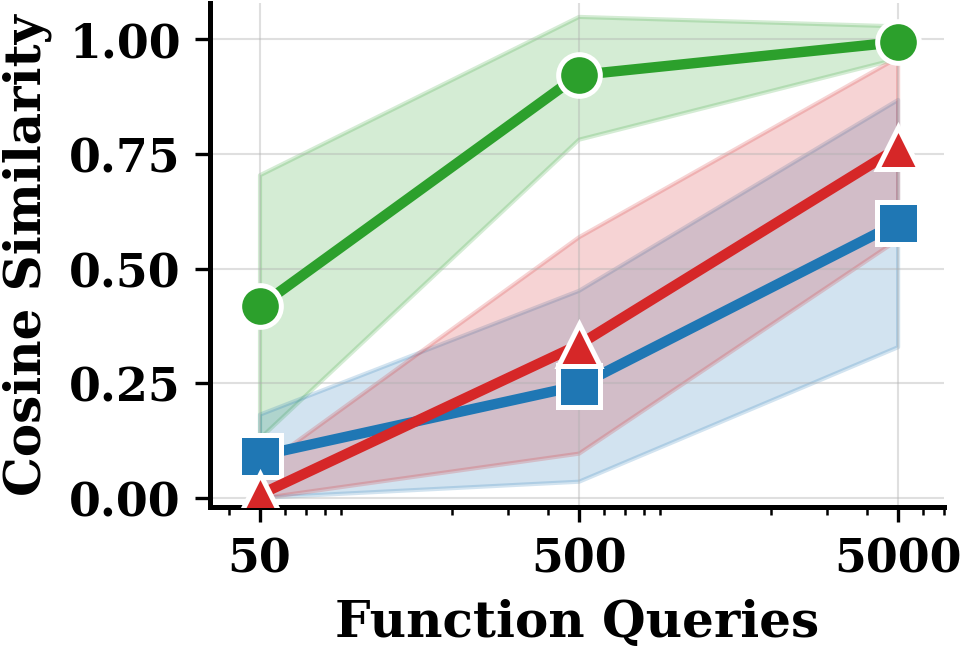}
    \caption{O2 vs $\#$queries.}
    \label{fig:o2-budget}
\end{subfigure}
\caption{\textbf{Query efficiency.} TN-SHAP-G reaches 0.95 cosine similarity 
with 10--100$\times$ fewer queries than baselines; a single surrogate yields 
both O1 and O2.}
\label{fig:budget-curves-main}
\end{figure}%

\begin{figure}[t]
  \centering
  \begin{subfigure}[t]{0.49\columnwidth}
    \centering
    \begin{overpic}[width=1.08\linewidth,trim=2 2 2 2,clip]{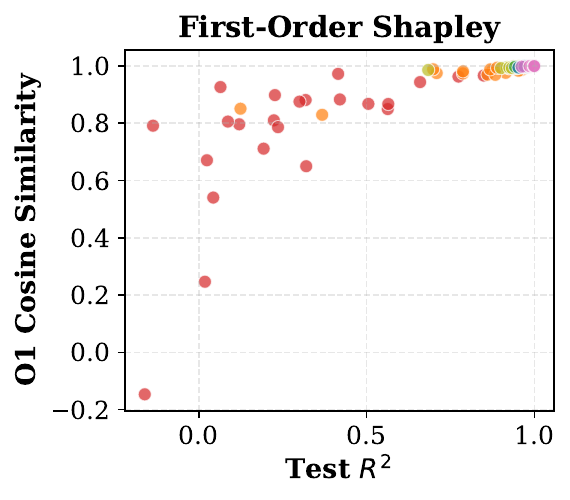}
      \put(19,13){\includegraphics[width=0.900\linewidth]{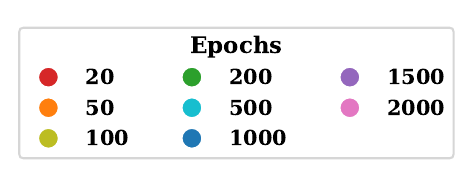}}
    \end{overpic}
    \caption{O1 cosine vs.\ surrogate  $R^2$.}
    \label{fig:tn_r2_vs_o1_cos}
  \end{subfigure}\hfill
  \begin{subfigure}[t]{0.49\columnwidth}
    \centering
    \includegraphics[width=1.08\linewidth,trim=2 2 2 2,clip]{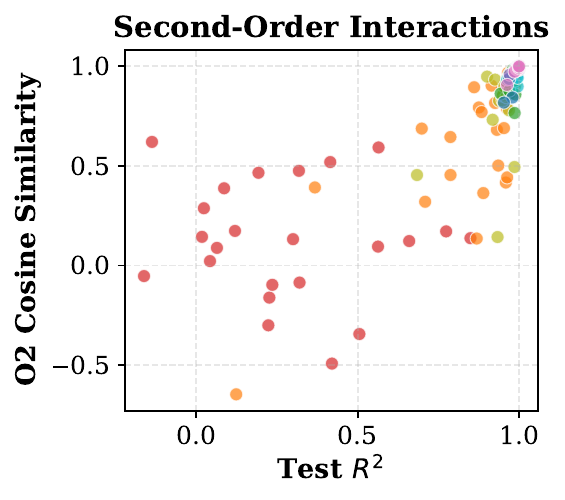}
    \caption{O2 cosine vs.\ surrogate  $R^2$.}
    \label{fig:tn_r2_vs_o2_cos}
  \end{subfigure}
  \caption{\textbf{Surrogate R$^2$ predicts Shapley accuracy.} 
Each point is one (graph, epoch) pair; color indicates training progress.}
  \label{fig:fit_vs_acc_full}
\end{figure}

\paragraph{Fidelity, scaling, and structure ablation.}
\label{sec:scaling}
Surrogate test $R^2$ strongly correlates with O1 and O2 cosine similarity across
training (Fig.~\ref{fig:fit_vs_acc_full}), validating $R^2$ as a practical proxy
for Shapley accuracy and supporting the theoretical stability guarantees.
TN-SHAP-G scales efficiently to larger graphs (the coarsening architecture; Appendix~\ref{app:scaling}), with practical training and
deterministic attribution times (Fig.~\ref{fig:scaling_time2}), enabling
explanations beyond the reach of enumeration or sampling. Finally, structure
ablations show that graph-aligned TN surrogates consistently outperform tensor
trains under parameter matching, with TT additionally sensitive to node ordering
(Fig.~\ref{fig:structure_ablation_o1o2}). Details of the graph-aligned scaling
architecture used in these experiments are deferred to
Appendix~\ref{app:ablations}.
TN-SHAP-G pays a one-time training cost per instance, after
which all Shapley quantities are extracted from the same
surrogate without further model queries.
Table~\ref{tab:amortization} compares amortized total time
against GraphSHAP-IQ on 20~matched Mutagenicity graphs
($n \leq 20$).  In the low-budget GPU regime (mean training
time $4.12$\,s), TN-SHAP-G becomes faster at $K \geq 5$
repeated queries, with O1 extraction costing $0.010$\,s and
O2 extraction $0.119$\,s per query, compared to $2.08$\,s per
GraphSHAP-IQ query (Appendix~\ref{sec:amortization_details}).
\begin{table}[H]
\centering
\caption{Amortized O1/O2 timing: TN-SHAP-G vs.\
GraphSHAP-IQ on 20 matched Mutagenicity graphs ($n \leq 20$).
$K$ = number of repeated queries on the same graph instance.}
\label{tab:amortization}
\small
\begin{tabular}{@{}rccc@{}}
\toprule
$K$ & TN-SHAP-G (s) & GraphSHAP-IQ (s) & TN wins \\
\midrule
1   & 197.8  & 11.6    & 0/20 \\
5   & 206.2  & 57.9    & 1/20 \\
20  & 237.8  & 231.6   & 6/20 \\
50  & 301.1  & 579.1   & 17/20 \\
100 & 406.7  & 1158.1  & 18/20 \\
\bottomrule
\end{tabular}
\end{table}
\begin{figure}[t]
\centering
\begin{subfigure}[t]{0.49\columnwidth}
\centering
\includegraphics[width=\linewidth]{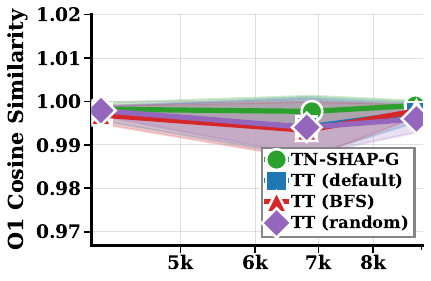}
\caption{{O1 node Shapley.}}
\label{fig:ablation_o1_node}
\end{subfigure}
\hfill
\begin{subfigure}[t]{0.49\columnwidth}
\centering
\includegraphics[width=\linewidth]{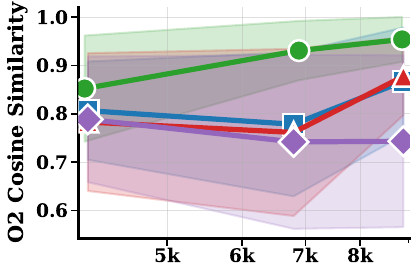}
\caption{{O2 edge interactions.}}
\label{fig:ablation_o2_edge}
\end{subfigure}
\caption{\textbf{Structure ablation: graph-aligned TN vs. tensor train (TT).} Attribution accuracy under parameter matching and a low query budget (10 n coalitions per graph). Different colors correspond to Different TT node orderings. }
\label{fig:structure_ablation_o1o2}
\end{figure}

\begin{figure}[t]
\centering
\begin{subfigure}[t]{0.49\columnwidth}
\centering
\includegraphics[width=\linewidth]{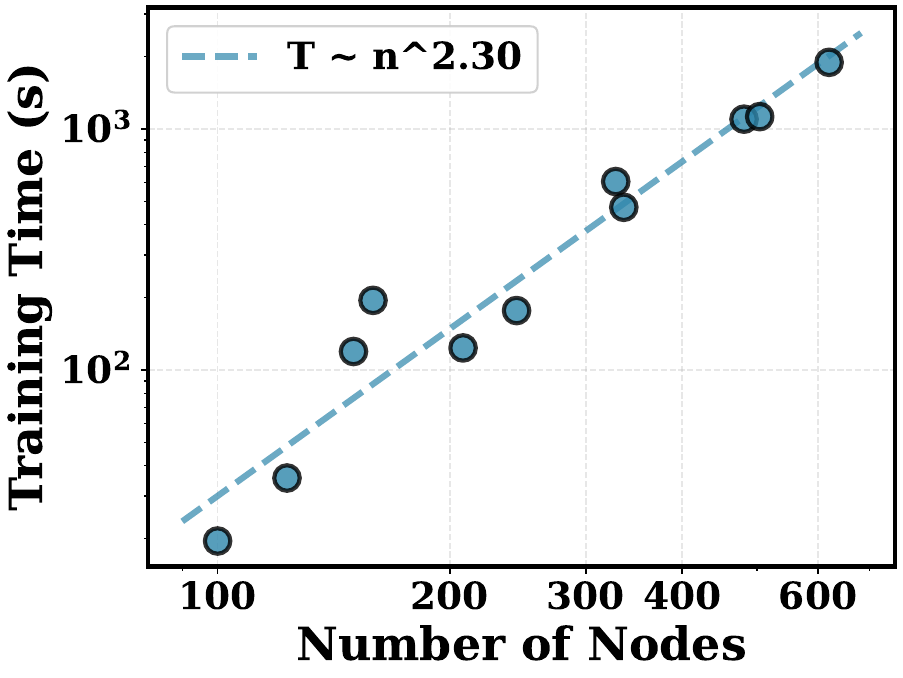}
\caption{Training time for learning the graph-aligned TN surrogate as a function of graph size.}
\label{fig:scaling_o1_time}
\end{subfigure}
\hfill
\begin{subfigure}[t]{0.49\columnwidth}
\centering
\includegraphics[width=\linewidth]{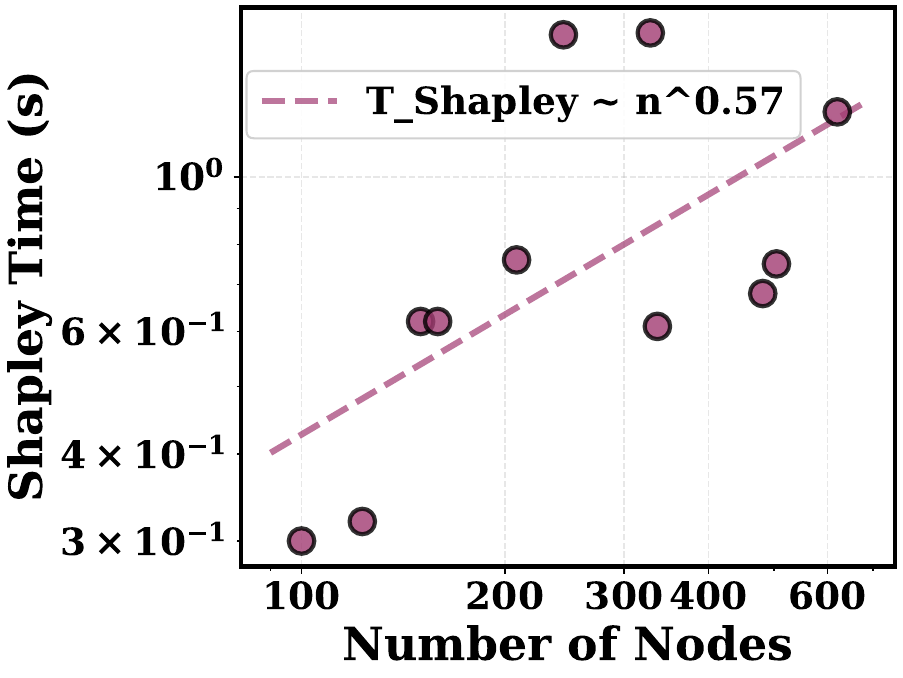}
\caption{Time to compute all node Shapley values from the trained surrogate via TN-SHAP-G.}
\label{fig:scaling_o2_time}
\end{subfigure}
\caption{\textbf{Scaling behavior of TN-SHAP-G.}}
\label{fig:scaling_time2}
\end{figure}

\section{Related Work}
\label{sec:related}

\paragraph{Tensor networks and query-efficient learning.}
Tensor networks compress high-dimensional functions with complexity governed by bond dimension rather than input dimension~\citep{kolda2009tensor, oseledets2010ttcross}.
Tensor cross interpolation (TCI)~\citep{oseledets2010ttcross,nunez2024tci} enables learning TN representations from few queries, and topology-aligned networks can outperform tensor trains on structured functions \citep{tindall2024compressing}.
TN-SHAP-G builds on these ideas to learn \emph{graph-aligned multilinear} surrogates and compute Shapley values exactly on the surrogate via the Owen extension~\citep{owen1972multilinear}. In closely related work, ~\citet{marzouk2026shap} show that tensor trains enable efficient computation of Shapley values.
\paragraph{Shapley values on graphs.}
Shapley explanations \citep{shapley1953value} are widely used in tabular ML (e.g., \textsc{KernelSHAP} \citep{lundberg2017unified})
and extended to interactions via indices such as Shapley--Taylor \citep{sundararajan2020shapley}.
On graphs, \textsc{GraphSVX} \citep{duval2021graphsvx} adapts kernel-based Shapley estimation by sampling masked graphs.
However, sampling-based estimators often require many model evaluations and incur variance that grows with interaction order. 

\paragraph{Surrogate-based attribution.}
Surrogate modeling amortizes explanations by approximating a black-box function under perturbations (e.g., LIME \citep{ribeiro2016why}, KernelSHAP \citep{lundberg2017unified}).
Unlike generic linear surrogates, TN-SHAP-G uses structure-aware multilinear TN surrogates, enabling deterministic Shapley and interaction computation without Monte Carlo estimation.

\paragraph{Proxy-based Shapley estimation.}
The Regression MSR principle~\citep{witter2026regression} and
ProxySPEX~\cite{butler2026proxyspex} fit a proxy model to the value function and use a
residual-correction scheme to obtain unbiased Shapley estimates.
TN-SHAP-G takes a complementary approach: rather than fitting a
generic proxy, it learns a graph-aligned TN representation of the
multilinear extension itself.  In principle, the learned TN could
serve as the proxy in an MSR-style pipeline; we leave this for
future work.  TreeSHAP~\citep{bifet2022linear} and
TreeSHAP-IQ~\citep{muschalik2024beyond} derive exact Shapley computation from the polynomial
structure of tree models via dynamic programming.  While both
viewpoints involve polynomial structure, the connection is
high-level: TN-SHAP-G exploits a low-rank factorization of the
multilinear extension, whereas TreeSHAP exploits the combinatorial
structure of decision-tree paths.
\section{Limitations and Future Work}
The current implementation assumes a fixed baseline masking scheme and focuses on node-based players,
though the framework extends to edge- and motif-level players.
Higher-order interaction accuracy can be sensitive to surrogate misspecification even when test $R^2$ is high;
improving O2 and above may benefit from tailored coalition sampling, explicit regularization of mixed partials,
or adaptive rank allocation.
Future work includes adaptive decomposition choices (tree decompositions or learned TN topologies),
richer coalition distributions (connected subgraphs or domain-specific masks),
and extensions to other structured modalities beyond graphs.

\section{Conclusion}
We presented TN-SHAP-G, a framework for computing Shapley values and interactions by learning a graph-aligned tensor network surrogate. By exploiting low-rank structure in value-functions, TN-SHAP-G enables  deterministic attribution recovery with  $O(n)$ queries on the learned surrogate. Future work includes adaptive rank and extension to other structures.


\section*{Acknowledgements}
Farzaneh Heidari thanks Farzan Heidari for constructive discussions and careful proofreading of the paper.

\section*{Impact Statement}
This work advances methods for explaining black-box graph predictors through principled game-theoretic attributions. The primary societal benefit lies in improving transparency of graph neural networks deployed in high-stakes domains such as drug discovery and molecular property prediction, where understanding why a model makes a prediction is essential for scientific validation and regulatory compliance.
We do not foresee direct negative societal consequences from this methodological contribution. The computational efficiency gains may democratize access to rigorous explanation methods for practitioners with limited computational resources. As with all explanation techniques, users should remain aware that attributions reflect the model's learned behavior rather than ground-truth causal relationships.


\bibliography{tnshapg_bibliography}
\bibliographystyle{icml2026}

\newpage
\appendix
\onecolumn
Appendix~\ref{app:tensor_notation} collects the notation and background needed to read the method and proofs.
We first define tensor and tensor-network (TN) notation and the contraction operator used throughout.
We then derive the multilinear-extension contraction identity that underpins our deterministic Shapley computation,
and show how partial derivatives are computed efficiently via local gate replacement.
The remaining sections provide proofs of the main theoretical statements, additional experimental details,
and extended discussion/related work.
\section{Preliminaries and notation}
\label{app:tensor_notation}
This section fixes tensor and TN notation used throughout and states the contraction conventions
needed for our multilinear-extension and derivative identities.

\subsection{Tensors and Tensor Networks}
\label{sec:bg-tensor-map}

Let $\ten T \in \mathbb{R}^{d_1 \times \cdots \times d_n}$ be an order-$n$ tensor and
$x_i \in \mathbb{R}^{d_i}$.
The associated multilinear map is
\begin{equation}
g(x_1,\ldots,x_n)
\;:=\;
\ten T \times_1 x_1 \times_2 \cdots \times_n x_n,
\label{eq:multi-map}
\end{equation}
where $\times_i$ denotes the mode-$i$ tensor--vector contraction, defined as
\begin{equation}
\bigl(\ten T \times_i x_i\bigr)_{j_1,\ldots,j_{i-1},j_{i+1},\ldots,j_n}
\;:=\;
\sum_{k=1}^{d_i}
\ten T_{j_1,\ldots,j_{i-1},k,j_{i+1},\ldots,j_n}\,(x_i)_k .
\label{eq:mode_i_contraction}
\end{equation}
Successive contractions along all modes yield a scalar.

A tensor network (TN) represents $\ten T$ implicitly as a contraction of low-order
\emph{core tensors} connected by shared indices (bonds).
The product of bond dimensions across any bipartition defines the \emph{cut rank},
which controls both expressiveness and contraction complexity.
We write $\widehat{\ten T}(\Theta)$ for the tensor implicitly represented by a TN
with parameters $\Theta$.
\paragraph{Bond dimension.}
The bond dimension~$\chi$ of each edge controls how much
information flows between the connected cores.  Larger~$\chi$
increases expressivity but also parameter count.  The
expressivity--cost tradeoff is formalized via the
\emph{matricization rank} (Definition~\ref{def:matricization_rank}).
\paragraph{Tensor trains (TT).}
A tensor train (TT) is a TN topology in which $\ten T$ is factorized as
\begin{equation}
\ten T_{i_1,\ldots,i_n}
=
\sum_{\alpha_1,\ldots,\alpha_{n-1}}
G^{(1)}_{i_1,\alpha_1}
G^{(2)}_{\alpha_1,i_2,\alpha_2}
\cdots
G^{(n)}_{\alpha_{n-1},i_n},
\label{eq:tt_definition}
\end{equation}
where $G^{(k)}$ are 3-way cores (except at the ends) and $\alpha_k$ are bond indices
of dimension $\chi_k$.
The maximal $\chi_k$ determines the TT rank and contraction cost
\citep{tt}.

\begin{figure}[H]
\centering
\tikzset{tnnode/.style={circle,draw,minimum size=1cm,inner sep=0pt}, link/.style={thick}}
\begin{tikzpicture}[x=1.1cm,y=0.9cm, leg/.style={thick}]
  \node[tnnode] (A1) at (0,0) {$A^{(1)}$};
  \node[tnnode] (A2) at (1.5,0) {$A^{(2)}$};
  \node[tnnode] (A3) at (3.0,0) {$A^{(3)}$};
  \node[tnnode] (A4) at (4.5,0) {$A^{(4)}$};
  \draw[link] (A1) -- (A2) -- (A3) -- (A4);
  \foreach \A/\k in {A1/1, A2/2, A3/3, A4/4} {
    \draw[leg] (\A) -- ++(0,-0.8) node[below] {$x_{\k}$};
  }
\end{tikzpicture}
\caption{Tensor train (TT) representation of a 4-way tensor. Physical indices
$x_i$ are shown as dangling legs; internal edges correspond to bond indices.}
\label{fig:tt}
\end{figure}

Evaluating $g(x_1,\ldots,x_n)$ using a TN representation corresponds to contracting
all bond indices after attaching the vectors $\{x_i\}$ to the physical legs.
We refer to one such evaluation as a \emph{TN contraction}.
In TN-SHAP-G, all surrogate evaluations and partial derivatives are computed via
TN contractions with appropriately chosen local vectors.

\subsection{Multilinear Extension and tensor view}
\label{sec:mle}

The multilinear extension admits an equivalent tensor contraction form that will be
useful for connecting Shapley computation to tensor-network surrogates.
For completeness, a step-by-step derivation of the contraction identity is given in
Appendix~\ref{app:tn_mle_step_by_step}.

\subsection{Step-by-step derivation of the contraction identity}
\label{app:tn_mle_step_by_step}

We derive Equation~\eqref{eq:tn_mle} starting from the Bernoulli-mixture definition of the
multilinear extension in Equation~\eqref{eq:multilinear_extension}.

\paragraph{Step 1 (Start from the multilinear extension).}
By definition,
\begin{equation}
\label{eq:mle_start_app}
\widetilde{\nu}(z)
=
\sum_{S \subseteq V} \nu(S)\,
\prod_{v \in S} z_v\,
\prod_{v \notin S}(1 - z_v).
\end{equation}

\paragraph{Step 2 (Re-index the sum by binary indicators).}
Each subset $S\subseteq V$ is in bijection with its indicator vector
$s\in\{0,1\}^n$ given by $s_v=\mathbf{1}\{v\in S\}$.
Equivalently, for any $s\in\{0,1\}^n$ define $S(s):=\{v\in V:\ s_v=1\}$.
Using this bijection, we can rewrite \eqref{eq:mle_start_app} as a sum over $s$:
\begin{equation}
\label{eq:mle_reindex_app}
\widetilde{\nu}(z)
=
\sum_{s\in\{0,1\}^n} \nu\!\bigl(S(s)\bigr)\,
\prod_{v=1}^n z_v^{s_v}(1-z_v)^{1-s_v}.
\end{equation}

\paragraph{Step 3 (Introduce the coalition tensor).}
Define the coalition tensor $\ten{T}\in\mathbb{R}^{2\times\cdots\times 2}$ by
\[
\ten{T}_{s_1,\ldots,s_n}:=\nu\!\bigl(S(s)\bigr),
\qquad s\in\{0,1\}^n.
\]
Substituting this into \eqref{eq:mle_reindex_app} yields
\begin{equation}
\label{eq:mle_with_T_app}
\widetilde{\nu}(z)
=
\sum_{s\in\{0,1\}^n} \ten{T}_s\,
\prod_{v=1}^n z_v^{s_v}(1-z_v)^{1-s_v}.
\end{equation}

\paragraph{Step 4 (Define selector vectors).}
For each $v\in V$ define
\[
\bm{b}_v(z_v):=
\begin{bmatrix}
1-z_v\\
z_v
\end{bmatrix}
\in\mathbb{R}^2.
\]
Then for $s_v\in\{0,1\}$,
\begin{equation}
\label{eq:bv_entry_app2}
\bm{b}_v(z_v)_{s_v}=(1-z_v)^{1-s_v}z_v^{s_v}.
\end{equation}
Therefore,
\[
\prod_{v=1}^n z_v^{s_v}(1-z_v)^{1-s_v}
=
\prod_{v=1}^n \bm{b}_v(z_v)_{s_v}.
\]
Plugging this into \eqref{eq:mle_with_T_app} gives
\begin{equation}
\label{eq:mle_with_b_app}
\widetilde{\nu}(z)
=
\sum_{s\in\{0,1\}^n} \ten{T}_s\,
\prod_{v=1}^n \bm{b}_v(z_v)_{s_v}.
\end{equation}

\paragraph{Step 5 (Recognize the tensor contraction).}
Since $\ten{T}$ has mode size $2$ in each dimension and each $\bm{b}_v(z_v)\in\mathbb{R}^2$,
the multilinear form in \eqref{eq:mle_with_b_app} is exactly the successive mode-wise
contraction of $\ten{T}$ with $\{\bm{b}_v(z_v)\}_{v=1}^n$:
\[
\sum_{s_1=0}^1\cdots\sum_{s_n=0}^1
\ten{T}_{s_1,\ldots,s_n}\prod_{v=1}^n \bm{b}_v(z_v)_{s_v}
=
\ten{T}\times_1\bm{b}_1(z_1)\times_2\cdots\times_n \bm{b}_n(z_n).
\]
This proves the contraction identity \eqref{eq:tn_mle}.

\paragraph{Exactness of Shapley values.}
The multilinear extension $\widetilde\nu$ is uniquely determined by the $2^n$ masked evaluations
$\nu(S)=f(G,x_S)$.
Consequently, any multilinear function $\widehat\nu(z)$ that matches these values at the vertices,
\[
\widehat\nu(\mathbf 1_S)=\nu(S)\quad\forall S\subseteq V,
\]
must coincide with $\widetilde\nu$ everywhere on $[0,1]^n$.
In particular, Shapley values are given exactly by
\[
\phi(v)
=
\int_0^1
\frac{\partial \widetilde\nu}{\partial z_v}(t,\ldots,t)\,dt.
\]
This provides a principled foundation for our approach:
by learning a multilinear surrogate $\widehat\nu$ that fits the masked-input vertices,
we recover the unique multilinear extension of the ML game and can compute exact
Shapley values via deterministic polynomial integration.
\subsection{Uniqueness of the multilinear extension}
\begin{remark}[Uniqueness of the multilinear extension]
\label{rem:uniqueness}
The multilinear extension $\widetilde{\nu}$ is the \emph{unique} multilinear 
polynomial satisfying $\widetilde{\nu}(\mathbf{1}_S) = \nu(S)$ for all 
$S \subseteq V$. This follows from the fact that the $2^n$ functions 
$\{\prod_{v \in S} z_v \prod_{v \notin S}(1-z_v)\}_{S \subseteq V}$ form a 
basis for the space of multilinear polynomials in $n$ variables. 
Consequently, \emph{any} multilinear surrogate $\hat{\nu}$ 
that perfectly fits the coalition values $\{\nu(S)\}_{S \subseteq V}$ must 
coincide with $\widetilde{\nu}$, and Shapley values computed 
via~\eqref{eq:shapley_integral} on the surrogate are exact.
\end{remark}
\begin{theorem}[Single-surrogate sufficiency via the multilinear extension]
\label{thm:single_surrogate}
Let $V=\{1,\ldots,n\}$ and let $\nu:2^V\to\mathbb{R}$ be a cooperative game with
(Owen) multilinear extension $\widetilde{\nu}:[0,1]^n\to\mathbb{R}$ defined by
\eqref{eq:owen_me}. Then:

\begin{enumerate}[label=(\roman*), nosep]
\item \textbf{(Uniqueness)} $\widetilde{\nu}$ is the unique multilinear polynomial
satisfying $\widetilde{\nu}(\mathbf{1}_S)=\nu(S)$ for all $S\subseteq V$.

\item \textbf{(Linearity of Shapley)} For each $i\in V$, there exists a linear
functional $\mathcal{L}_i$ on multilinear polynomials such that
\[
\phi_i(\nu)=\mathcal{L}_i(\widetilde{\nu}),
\]
and in particular $\phi_i(\nu)$ admits the equivalent integral representations
\eqref{eq:shapley_owen}--\eqref{eq:shapley_grad}.

\item \textbf{(Linearity of interactions)} For any nonempty $T\subseteq V$,
$|T|=k\ge 1$, and for any interaction index $\mathcal{I}_T$ that admits a
multilinear-extension characterization of the form
\[
\mathcal{I}_T(\nu)
=
c_k \int_0^1
\frac{\partial^k \widetilde{\nu}}{\partial p_T}(t\mathbf{1})\,w_k(t)\,dt
\]
for some scalar $c_k$ and weight function $w_k$ depending only on $k$, there
exists a linear functional $\mathcal{L}_T$ such that
\[
\mathcal{I}_T(\nu)=\mathcal{L}_T(\widetilde{\nu}).
\]
\end{enumerate}

Consequently, for any surrogate multilinear extension
$\widehat{\widetilde{\nu}}$ (e.g., represented by a tensor network) we obtain
surrogate attributions by substitution:
\[
\widehat{\phi}_i := \mathcal{L}_i(\widehat{\widetilde{\nu}}),
\qquad
\widehat{\mathcal{I}}_T := \mathcal{L}_T(\widehat{\widetilde{\nu}}),
\]
so a \emph{single} surrogate $\widehat{\widetilde{\nu}}$ suffices to compute
all desired first-order Shapley values and higher-order interaction indices,
without retraining separate surrogates per index.
\end{theorem}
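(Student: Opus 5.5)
The plan is to prove the three parts in order, using only finite-dimensional linear algebra on the space $\mathcal{M}_n$ of multilinear polynomials in $n$ variables.

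\textbf{Part (i).} Every multilinear polynomial is a linear combination of the $2^n$ monomials $\prod_{v\in S} z_v$, so $\dim \mathcal{M}_n = 2^n$. Consider the evaluation map $E:\mathcal{M}_n\to\mathbb{R}^{2^V}$, $p\mapsto (p(\mathbf{1}_S))_{S\subseteq V}$. I would show $E$ is injective by Möbius inversion. Write $p(z)=\sum_{T} a_T\prod_{v\in T} z_v$. Then $p(\mathbf{1}_S)=\sum_{T\subseteq S} a_T$, which is a unitriangular system with respect to inclusion. Hence $p(\mathbf{1}_S)=0$ for all $S$ forces $a_T=0$ for all $T$, by induction on $|T|$. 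Since the dimensions agree, $E$ is a bijection.

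Existence is then just the observation that Owen's formula \eqref{eq:multilinear_extension} is multilinear and interpolates $\nu$. At $z=\mathbf{1}_S$, only the basis term indexed by $S$ survives, because each product $\prod_{v\in S'} z_v\prod_{v\notin S'}(1-z_v)$ equals $\mathbf{1}\{S'=S\}$ there. This recovers Remark~\ref{rem:uniqueness}.

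\textbf{Part (ii).} Define $\mathcal{L}_i(p):=\int_0^1 \partial_{z_i}p(t\mathbf{1})\,dt$ on $\mathcal{M}_n$. It is linear because differentiation, restriction to the diagonal, and integration are all linear. The remaining task is to verify $\phi_i(\nu)=\mathcal{L}_i(\widetilde{\nu})$, which is Owen's identity \eqref{eq:shapley_integral}; I would reprove it briefly.

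Since $\widetilde{\nu}$ is affine in $z_i$,
\[
\partial_{z_i}\widetilde{\nu}(z)=\sum_{S\subseteq V\setminus\{i\}}\bigl[\nu(S\cup\{i\})-\nu(S)\bigr]\prod_{v\in S}z_v\prod_{v\notin S\cup\{i\}}(1-z_v).
\]
At $z=t\mathbf{1}$ the weight on each marginal contribution becomes $t^{|S|}(1-t)^{n-1-|S|}$. The Beta integral gives
\[
\int_0^1 t^{s}(1-t)^{n-1-s}\,dt=\frac{s!\,(n-1-s)!}{n!},
\]
which is exactly the Shapley weight in \eqref{eq:shapley_value}. Any other stated integral forms (\eqref{eq:shapley_owen}--\eqref{eq:shapley_grad}) follow by the same linear substitutions.

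\textbf{Part (iii).} The same reasoning applies. The map $p\mapsto c_k\int_0^1 \partial^k_{T}p(t\mathbf{1})\,w_k(t)\,dt$ is a composition of linear maps, so it is a linear functional $\mathcal{L}_T$. By hypothesis, $\mathcal{I}_T(\nu)=\mathcal{L}_T(\widetilde\nu)$.

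One technical point must be checked: the integral has to be well defined. For $p\in\mathcal{M}_n$, the restriction $\partial^k_T p(t\mathbf{1})$ is a polynomial in $t$ of degree at most $n-k$. It therefore suffices that $w_k$ is integrable against polynomials on $[0,1]$, which I would state as a standing assumption. For SII, $w_k\equiv 1$ and $c_k=1$, by the analogous Beta computation.

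\textbf{Consequence.} Once the functionals $\mathcal{L}_i$ and $\mathcal{L}_T$ are defined on all of $\mathcal{M}_n$, they can be applied to any multilinear $\widehat{\widetilde\nu}$. A TN surrogate contracted with the selector vectors $\bm b_v$ is multilinear by \eqref{eq:nu_hat_contraction}, so a single surrogate feeds every functional.

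\textbf{Main obstacle.} The only step with real content is the Beta-integral identification in (ii). I expect the main obstacle to be largely bookkeeping: the statement refers to equation labels such as \eqref{eq:owen_me}, and I would make explicit that these are the definitions already fixed in \eqref{eq:multilinear_extension} and \eqref{eq:shapley_integral}.
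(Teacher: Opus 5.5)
Your proof is correct and follows the same skeleton as the paper's. The paper gets uniqueness from the fact that the $2^n$ Bernoulli products form a basis of the multilinear polynomials, Shapley linearity from Owen's diagonal-derivative identity, and interaction linearity directly from the assumed representation. The difference is that you prove what the paper only cites: the basis property via the unitriangular M\"obius system plus a dimension count, and Owen's identity via the Beta integral, which also confirms $c_k=1$, $w_k\equiv 1$ for SII. You also add the integrability condition on $w_k$ that the paper omits.

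One step should not remain a blanket claim: ``any other stated integral forms follow by the same linear substitutions.''

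\begin{itemize}
\item The paper writes \eqref{eq:shapley_owen} as $\int_0^1\bigl(\widetilde{\nu}(t\mathbf{1}+(1-t)e_i)-\widetilde{\nu}(t\mathbf{1})\bigr)\,dt$. Because $\widetilde{\nu}$ is affine in $z_i$, this integrand equals $(1-t)\,\partial_{z_i}\widetilde{\nu}(t\mathbf{1})$, not $\partial_{z_i}\widetilde{\nu}(t\mathbf{1})$.
\item So the literal formula is false. For $n=1$, $\nu(\emptyset)=0$, $\nu(\{1\})=1$, it returns $1/2$ instead of $\phi_1=1$. The paper's proof makes the same slip when it calls the two forms equivalent ``by the fundamental theorem of calculus.''
\item The correct difference form evaluates the second term at $t\mathbf{1}-te_i$, i.e.\ at $z_i=0$. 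This is the form the paper itself uses in \eqref{eq:proteins_shapley_quad}.
\item Its equality with \eqref{eq:shapley_grad} is the one-line identity $\widetilde{\nu}(z)|_{z_i=1}-\widetilde{\nu}(z)|_{z_i=0}=\partial_{z_i}\widetilde{\nu}(z)$, which follows from affinity in $z_i$.
\end{itemize}

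You should state and prove that identity explicitly. You should also note that the ``in particular'' clause of (ii) holds only with this correction. The rest of your plan goes through as written.
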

Proof is in Appendix~\ref{app:single_surrogate}
\paragraph{Surrogate-based deterministic attributions.}
The discussion above implies a concrete recipe for query-efficient graph explanations.
First, we represent the masked game $\nu$ through its multilinear extension
$\widetilde{\nu}$, whose Shapley values and low-order interactions reduce to
integrals of diagonal partial derivatives (Eq.~\eqref{eq:shapley_integral}).
Second, since these diagonal derivatives are univariate polynomials of degree at
most $n-1$, they can be recovered from finitely many evaluations via stable
interpolation.
Finally, we avoid the exponential coalition tensor by learning a structured
surrogate $\hat{\nu}$ from a limited query budget, and then
compute all required Shapley quantities \emph{deterministically} from this single
surrogate without further teacher evaluations.
Section~\ref{sec:method} instantiates this recipe using graph-aligned tensor
networks.

\section{Derivative Computation via Local Gate Replacement}
\label{app:derivative_computation}

This appendix provides a detailed derivation of how Shapley values and interaction indices are computed from the tensor network surrogate via local gate replacement.

\subsection{Setup and Notation}

Recall that the surrogate multilinear extension is given by the tensor contraction
\begin{equation}
\label{eq:surrogate_contraction_app}
\hat{\nu}(z) = \sum_{s \in \{0,1\}^n} \widehat{\mathcal{T}}_s \prod_{v=1}^n [\bm{b}_v(z_v)]_{s_v},
\end{equation}
where $\widehat{\mathcal{T}} \in \mathbb{R}^{2 \times \cdots \times 2}$ is the (implicitly represented) surrogate coalition tensor and the local Bernoulli gate at node $v$ is
\begin{equation}
\bm{b}_v(z_v) = \begin{bmatrix} 1 - z_v \\ z_v \end{bmatrix}, \qquad [\bm{b}_v(z_v)]_{s_v} = (1-z_v)^{1-s_v} z_v^{s_v}.
\end{equation}

\subsection{First-Order Derivatives}

\begin{lemma}[Local gate derivative]
\label{lem:gate_derivative}
For the Bernoulli gate $\bm{b}_v(z_v) = [1-z_v, z_v]^\top$, we have
\begin{equation}
\frac{\partial \bm{b}_v(z_v)}{\partial z_v} = \begin{bmatrix} -1 \\ 1 \end{bmatrix} =: \bm{b}'_v,
\end{equation}
which is constant (independent of $z_v$).
\end{lemma}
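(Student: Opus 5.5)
The plan is to differentiate the Bernoulli gate $\bm{b}_v(z_v)=[1-z_v,\;z_v]^\top$ one coordinate at a time, using only that the derivative of a vector-valued function of one real variable is taken entrywise. The first entry is the affine map $z_v\mapsto 1-z_v$, whose derivative is $-1$. The second entry is the identity map $z_v\mapsto z_v$, whose derivative is $1$. Stacking the two gives $\partial_{z_v}\bm{b}_v(z_v)=[-1,\;1]^\top=\bm{b}'_v$.

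Independence from $z_v$ follows because both entries are affine in $z_v$ with slopes $-1$ and $+1$, so their derivatives are constants. As a consistency check, the identity $[\bm{b}_v(z_v)]_{s_v}=(1-z_v)^{1-s_v}z_v^{s_v}$ reduces to $1-z_v$ at $s_v=0$ and to $z_v$ at $s_v=1$, each of degree one. Differentiating gives $(-1)^{1-s_v}$, which is $-1$ for $s_v=0$ and $+1$ for $s_v=1$, matching the entries of $\bm{b}'_v$.

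There is no genuine obstacle here. The only point to state explicitly is that the derivative is taken componentwise on $\mathbb{R}^2$-valued functions. This is what lets the result plug into the contraction \eqref{eq:surrogate_contraction_app}, where only the $v$-th gate depends on $z_v$.

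I would also note the consequence used later: because $\bm{b}'_v$ is constant, $\partial_{z_v}\hat{\nu}$ no longer depends on $z_v$. This is the fact that later justifies $\deg g_u\le n-1$ and the local gate replacement $\bm b_u\mapsto\bm b'_u$.
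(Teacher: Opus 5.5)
Your proof is correct and matches the paper's own argument: both differentiate the two entries $1-z_v$ and $z_v$ componentwise to get $-1$ and $1$. Your consistency check via $(1-z_v)^{1-s_v}z_v^{s_v}$ and your remark on the downstream use for $\partial_{z_v}\hat{\nu}$ are accurate, but the lemma itself does not need them.
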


\begin{proof}
Direct computation: $\frac{\partial}{\partial z_v}(1 - z_v) = -1$ and $\frac{\partial}{\partial z_v}(z_v) = 1$.
\end{proof}

\begin{proposition}[Partial derivative via gate replacement]
\label{prop:partial_derivative}
The partial derivative of the surrogate with respect to $z_u$ is obtained by replacing the gate at position $u$ with its derivative:
\begin{equation}
\label{eq:derivative_gate_replacement}
\frac{\partial \hat{\nu}}{\partial z_u}(z) = \sum_{s \in \{0,1\}^n} \widehat{\mathcal{T}}_s \cdot [\bm{b}'_u]_{s_u} \cdot \prod_{v \neq u} [\bm{b}_v(z_v)]_{s_v}.
\end{equation}
In tensor contraction notation:
\begin{equation}
\frac{\partial \hat{\nu}}{\partial z_u}(z) = \widehat{\mathcal{T}} \times_1 \bm{b}_1(z_1) \times_2 \cdots \times_u \bm{b}'_u \times_{u+1} \cdots \times_n \bm{b}_n(z_n).
\end{equation}
\end{proposition}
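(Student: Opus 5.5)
The plan is to differentiate the contraction formula \eqref{eq:surrogate_contraction_app} term by term. The surrogate $\hat{\nu}(z)$ is a finite sum over $s\in\{0,1\}^n$. The coefficients $\widehat{\mathcal{T}}_s$ do not depend on $z$, since they are fixed by the TN parameters $\Theta$. Each summand is therefore $\widehat{\mathcal{T}}_s$ times a product of scalar factors $[\bm{b}_v(z_v)]_{s_v}$, and each factor depends on exactly one coordinate $z_v$. Because the sum is finite and every summand is a polynomial, $\partial_{z_u}$ commutes with the sum. No analytic issues arise.

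Next I would apply the product rule to a single summand. Only the factor with $v=u$ depends on $z_u$, so every other factor behaves as a constant. This gives
\[
\partial_{z_u}\Bigl(\prod_v [\bm{b}_v(z_v)]_{s_v}\Bigr)=\bigl[\partial_{z_u}\bm{b}_u(z_u)\bigr]_{s_u}\prod_{v\neq u}[\bm{b}_v(z_v)]_{s_v}.
\]
By Lemma~\ref{lem:gate_derivative}, $\partial_{z_u}\bm{b}_u(z_u)=\bm{b}'_u=[-1,1]^\top$, and differentiation commutes with taking the $s_u$-th entry. Summing back over $s$ gives \eqref{eq:derivative_gate_replacement}.

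For the tensor-notation form, I would repeat the argument of Step~5 in Appendix~\ref{app:tn_mle_step_by_step}. That step identifies a full sum $\sum_s \widehat{\mathcal{T}}_s\prod_v (x_v)_{s_v}$ with the successive mode contractions $\widehat{\mathcal{T}}\times_1 x_1\cdots\times_n x_n$ for arbitrary vectors $x_v\in\mathbb{R}^2$, using the definition \eqref{eq:mode_i_contraction}. Taking $x_u=\bm{b}'_u$ and $x_v=\bm{b}_v(z_v)$ for $v\neq u$ gives the claimed expression. The identity also holds on the factorized TN. The map from the vectors $(x_v)_v$ to the contraction value is multilinear, so attaching $\bm{b}'_u$ to the physical leg of core $\mathcal{A}^{(u)}$ computes the same quantity without materializing $\widehat{\mathcal{T}}$.

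I do not expect a real obstacle. The only point to state carefully is that $\widehat{\mathcal{T}}$ is independent of $z$, so the entire dependence on $z_u$ sits in a single gate. This is what makes the operation a local replacement.
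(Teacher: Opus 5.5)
Your proposal is correct and follows essentially the same route as the paper: differentiate the finite sum term by term, apply the product rule with only the $u$-th gate depending on $z_u$, and use Lemma~\ref{lem:gate_derivative} to identify the derivative entries as $[\bm{b}'_u]_{s_u}$. Your justification of the tensor-contraction form via the mode-wise contraction identity (Step~5 of Appendix~\ref{app:tn_mle_step_by_step}) spells out a step the paper leaves implicit.
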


\begin{proof}
Since \eqref{eq:surrogate_contraction_app} is a product of terms where only one factor depends on $z_u$, the product rule gives
\begin{align}
\frac{\partial \hat{\nu}}{\partial z_u}(z) 
&= \sum_{s \in \{0,1\}^n} \widehat{\mathcal{T}}_s \cdot \frac{\partial [\bm{b}_u(z_u)]_{s_u}}{\partial z_u} \cdot \prod_{v \neq u} [\bm{b}_v(z_v)]_{s_v} \\
&= \sum_{s \in \{0,1\}^n} \widehat{\mathcal{T}}_s \cdot [\bm{b}'_u]_{s_u} \cdot \prod_{v \neq u} [\bm{b}_v(z_v)]_{s_v},
\end{align}
where we used Lemma~\ref{lem:gate_derivative} and the fact that
\begin{equation}
\frac{\partial}{\partial z_u} \bigl[(1-z_u)^{1-s_u} z_u^{s_u}\bigr] = \begin{cases} -1 & \text{if } s_u = 0, \\ +1 & \text{if } s_u = 1, \end{cases} = [\bm{b}'_u]_{s_u}. \qedhere
\end{equation}
\end{proof}

\subsection{Diagonal Evaluation and Polynomial Structure}

\begin{proposition}[Diagonal derivative is a univariate polynomial]
\label{prop:diagonal_polynomial}
Define $g_u(t) := \frac{\partial \hat{\nu}}{\partial z_u}(t, \ldots, t)$. Then $g_u$ is a univariate polynomial in $t$ of degree at most $n-1$:
\begin{equation}
g_u(t) = \sum_{k=0}^{n-1} a_k^{(u)} t^k.
\end{equation}
\end{proposition}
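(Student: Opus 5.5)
The plan is to start from the gate-replacement formula of Proposition~\ref{prop:partial_derivative} and restrict it to the diagonal $z=t\mathbf{1}$. This gives
\[
g_u(t)=\sum_{s\in\{0,1\}^n}\widehat{\mathcal{T}}_s\,[\bm{b}'_u]_{s_u}\prod_{v\neq u}[\bm{b}_v(t)]_{s_v}.
\]
The coefficients $\widehat{\mathcal{T}}_s$ are fixed scalars determined by the trained cores $\Theta$, so they do not depend on $t$. By Lemma~\ref{lem:gate_derivative}, $[\bm{b}'_u]_{s_u}\in\{-1,+1\}$ is also constant in $t$. All dependence on $t$ therefore sits in the product over $v\neq u$.

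Next, look at a single summand. Each factor $[\bm{b}_v(t)]_{s_v}$ equals either $1-t$ or $t$, so it is a polynomial of degree exactly one. The product over the $n-1$ indices $v\neq u$ is then a polynomial in $t$ of degree at most $n-1$; concretely it equals $t^{|S(s)\setminus\{u\}|}(1-t)^{\,n-1-|S(s)\setminus\{u\}|}$. A finite linear combination of polynomials of degree at most $n-1$ is again such a polynomial. Expanding each term binomially gives explicit coefficients $a_k^{(u)}$, obtained by collecting powers of $t$:
\[
a_k^{(u)}=\sum_{s}\widehat{\mathcal{T}}_s[\bm{b}'_u]_{s_u}\binom{n-1-j_s}{k-j_s}(-1)^{k-j_s},
\]
where $j_s=|S(s)\setminus\{u\}|$ and the binomial is read as zero when $k<j_s$.

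An alternative route, which avoids summing over $s$, uses multilinearity of $\hat{\nu}$ directly (it is affine in each $z_v$ by \eqref{eq:nu_hat_contraction}). The partial derivative $\partial_{z_u}\hat{\nu}$ does not depend on $z_u$ and is multi-affine in the remaining $n-1$ variables. Substituting $z_v=t$ for every $v\neq u$ then yields degree at most $n-1$. I would mention this as a remark.

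No step is a real obstacle. The only points needing care are that the degree bound comes from excluding the $u$-th factor, which has been replaced by the constant $\bm{b}'_u$, and that the degree may be strictly smaller because of cancellation, which is why the statement says ``at most.''
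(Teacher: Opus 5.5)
Your proposal is correct and follows the paper's own argument: substitute $z=t\mathbf{1}$ into the gate-replacement formula of Proposition~\ref{prop:partial_derivative}, observe that each summand is $\widehat{\mathcal{T}}_s[\bm{b}'_u]_{s_u}\,t^{j_s}(1-t)^{n-1-j_s}$, and conclude $\deg g_u\le n-1$. Your term-by-term degree argument (every summand is a product of exactly $n-1$ linear factors) is cleaner than the paper's remark that the top power comes from $|s_{-u}|=n-1$, and your multi-affine remark is the same argument the paper uses in Theorem~\ref{thm:runtime_shapley_app}.
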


\begin{proof}
Substituting $z_v = t$ for all $v$ into \eqref{eq:derivative_gate_replacement}:
\begin{align}
g_u(t) &= \sum_{s \in \{0,1\}^n} \widehat{\mathcal{T}}_s \cdot [\bm{b}'_u]_{s_u} \cdot \prod_{v \neq u} (1-t)^{1-s_v} t^{s_v} \\
&= \sum_{s \in \{0,1\}^n} \widehat{\mathcal{T}}_s \cdot [\bm{b}'_u]_{s_u} \cdot (1-t)^{(n-1) - |s_{-u}|} \cdot t^{|s_{-u}|}.
\end{align}
Each term $(1-t)^{(n-1)-|s_{-u}|} t^{|s_{-u}|}$ is a polynomial in $t$. The highest power of $t$ occurs when $|s_{-u}| = n-1$, giving $t^{n-1}$. Hence $\deg(g_u) \leq n-1$.
\end{proof}

\subsection{Exact Shapley via Vandermonde interpolation (linked to Theorem~\ref{thm:runtime_shapley})}
\label{app:vandermonde_exact_local}

Since $g_u(t)$ is a polynomial of degree at most $n-1$, it is uniquely determined by $n$ evaluations at distinct points.

\begin{theorem}[Exact Shapley computation]
\label{thm:exact_shapley_app}
Let $t_1, \ldots, t_n \in (0,1)$ be distinct probe points. Define:
\begin{itemize}
    \item $y_j := g_u(t_j)$ for $j = 1, \ldots, n$ (computed via TN contraction with gate replacement),
    \item $V \in \mathbb{R}^{n \times n}$ the Vandermonde matrix with $V_{jk} = t_j^{k-1}$.
\end{itemize}
Then the Shapley value is
\begin{equation}
\widehat{\phi}(u) = \int_0^1 g_u(t) \, dt = \sum_{k=0}^{n-1} \frac{a_k^{(u)}}{k+1},
\end{equation}
where the coefficients $a^{(u)} = (a_0^{(u)}, \ldots, a_{n-1}^{(u)})^\top$ are recovered by solving $V a^{(u)} = y$.
\end{theorem}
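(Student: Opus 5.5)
The plan has three parts: show that the coefficient vector $a^{(u)}$ is the unique solution of $V a^{(u)} = y$, show that integrating the resulting polynomial gives $\sum_k a_k^{(u)}/(k+1)$, and show that this integral is the Shapley value of the surrogate game.

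\textbf{Step 1 (polynomial structure and solvability).} By Proposition~\ref{prop:diagonal_polynomial}, $g_u(t)=\sum_{k=0}^{n-1}a_k^{(u)}t^k$ for some real coefficients. Evaluating at the probe points gives
\[
y_j = g_u(t_j) = \sum_{k=0}^{n-1} a_k^{(u)} t_j^{k} = (V a^{(u)})_j ,
\]
so the true coefficient vector satisfies $V a^{(u)}=y$. The Vandermonde determinant is $\det V=\prod_{i<j}(t_j-t_i)$, which is nonzero because the $t_j$ are distinct. Hence $V$ is invertible, and the solution of the system is exactly the coefficient vector of $g_u$. Equivalently, two polynomials of degree at most $n-1$ that agree at $n$ points coincide.

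\textbf{Step 2 (integration).} Integrate term by term: $\int_0^1 t^k\,dt = 1/(k+1)$. This gives $\int_0^1 g_u(t)\,dt=\sum_{k} a_k^{(u)}/(k+1)$.

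\textbf{Step 3 (identification with the Shapley value).} It remains to show that $\int_0^1 g_u = \phi_u(\hat\nu|_{\{0,1\}^n})$, i.e.\ the Shapley value of the game defined by the surrogate's vertex values. By construction \eqref{eq:nu_hat_contraction}, $\hat\nu$ is the multilinear extension of its own vertex values. By Proposition~\ref{prop:partial_derivative}, replacing the gate at $u$ by $\bm b'_u$ and setting every other site to $t$ gives
\[
g_u(t)=\sum_{S\subseteq V\setminus\{u\}} t^{|S|}(1-t)^{n-1-|S|}\bigl[\hat\nu(S\cup\{u\})-\hat\nu(S)\bigr].
\]
The Beta integral $\int_0^1 t^{s}(1-t)^{n-1-s}\,dt = s!(n-1-s)!/n!$ then reproduces the weights in \eqref{eq:shapley_value}. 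This is Owen's identity \eqref{eq:shapley_integral} applied to the surrogate game.

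The only point needing care is this last identification. In particular, we must pair each $s_u=1$ term with its $s_u=0$ counterpart to obtain the marginal contributions, which is where the derivative gate $\bm b'_u=[-1,1]^\top$ enters. Beyond that, the argument is routine.
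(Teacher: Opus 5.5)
Your proposal is correct and follows the paper's proof in its essentials: distinctness of the $t_j$ makes the Vandermonde matrix invertible, which pins down the coefficients of $g_u$ (using Proposition~\ref{prop:diagonal_polynomial} for the degree bound), and termwise integration of the monomials gives $\sum_k a_k^{(u)}/(k+1)$. Your Step 3 goes beyond the paper. There you use gate replacement and the Beta integral to show $\int_0^1 g_u = \sum_{S\subseteq V\setminus\{u\}} \frac{|S|!\,(n-|S|-1)!}{n!}\bigl[\hat\nu(S\cup\{u\})-\hat\nu(S)\bigr]$, whereas the paper simply takes $\widehat{\phi}(u)=\int_0^1 g_u(t)\,dt$ as the definition via Owen's identity; your step is a valid, self-contained justification for calling this quantity the Shapley value of the surrogate game.
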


\begin{proof}
The Vandermonde system $V a^{(u)} = y$ has a unique solution since the $t_j$ are distinct. Integration of the monomial basis gives the result.
\end{proof}

\subsection{Second-Order Interactions via Double Gate Replacement}

\begin{algorithm}[t]
\caption{\method{} (deterministic O2 edge interactions from a single surrogate)}
\label{alg:tnshapg_o2}
\begin{algorithmic}[1]
\REQUIRE trained surrogate $\hat{\nu}(\cdot;\Theta)$ on $G=(V,E)$, probe points $\{t_j\}_{j=1}^m$
\ENSURE edge interaction indices $\{\hat{\phi}_{uv}\}_{(u,v)\in E}$

\FOR{$j=1,\dots,m$}
    \STATE Set all sites to $\bm b_v(t_j)$ and contract TN on $G$ to obtain reusable messages
    \FOR{each $(u,v)\in E$}
        \STATE Compute $g_{uv}(t_j)=\partial_{z_u}\partial_{z_v}\hat{\nu}(t_j\mathbf{1})$
        \STATE \hspace{1em}by replacing $\bm b_u(t_j),\bm b_v(t_j)\mapsto \bm b'_u(t_j),\bm b'_v(t_j)$
    \ENDFOR
\ENDFOR
\STATE For each $(u,v)$, interpolate $g_{uv}(t)$ from $\{(t_j,g_{uv}(t_j))\}_{j=1}^m$
\STATE Return $\hat{\phi}_{uv}=\int_0^1 g_{uv}(t)\,dt$ for all $(u,v)\in E$
\end{algorithmic}
\end{algorithm}
\begin{proposition}[Mixed partial derivatives]
\label{prop:mixed_partial}
For distinct players $i, j \in V$, the second-order mixed partial derivative is obtained by replacing \emph{both} gates at positions $i$ and $j$:
\begin{equation}
\frac{\partial^2 \hat{\nu}}{\partial z_i \partial z_j}(z) = \sum_{s \in \{0,1\}^n} \widehat{\mathcal{T}}_s \cdot [\bm{b}'_i]_{s_i} \cdot [\bm{b}'_j]_{s_j} \cdot \prod_{v \notin \{i,j\}} [\bm{b}_v(z_v)]_{s_v}.
\end{equation}
\end{proposition}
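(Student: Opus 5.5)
The plan mirrors the proof of Proposition~\ref{prop:partial_derivative}, applied twice. Start from the contraction form \eqref{eq:surrogate_contraction_app}:
\[
\hat{\nu}(z)=\sum_{s}\widehat{\mathcal{T}}_s\prod_{v=1}^n[\bm{b}_v(z_v)]_{s_v}.
\]
This is a finite sum of products in which the factor $[\bm{b}_v(z_v)]_{s_v}=(1-z_v)^{1-s_v}z_v^{s_v}$ depends only on $z_v$. Since the coefficients $\widehat{\mathcal{T}}_s$ do not depend on $z$, differentiation commutes with the sum over $s$. It therefore suffices to differentiate each summand.

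\textbf{Step 1.} Differentiate with respect to $z_j$. By Proposition~\ref{prop:partial_derivative}, this replaces the $j$-th gate by the constant vector $\bm{b}'_j$ of Lemma~\ref{lem:gate_derivative}, giving
\[
\partial_{z_j}\hat{\nu}(z)=\sum_s \widehat{\mathcal{T}}_s[\bm{b}'_j]_{s_j}\prod_{v\neq j}[\bm{b}_v(z_v)]_{s_v}.
\]

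\textbf{Step 2.} Differentiate this expression with respect to $z_i$, where $i\neq j$. Because $\bm{b}'_j$ is constant (Lemma~\ref{lem:gate_derivative}), the only factor in each summand that depends on $z_i$ is $[\bm{b}_i(z_i)]_{s_i}$. This is where the hypothesis $i\neq j$ is used. Applying the same scalar computation as before, $\partial_{z_i}[(1-z_i)^{1-s_i}z_i^{s_i}]=[\bm{b}'_i]_{s_i}$, yields exactly the claimed formula. If one prefers to view the first step as also acting on a modified tensor, one can note that Step 1 expresses $\partial_{z_j}\hat{\nu}$ as a multilinear contraction of the tensor $\widehat{\mathcal{T}}\times_j\bm{b}'_j$ over the remaining modes. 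Proposition~\ref{prop:partial_derivative} then applies verbatim to this order-$(n-1)$ tensor.

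\textbf{Step 3.} Optionally, note symmetry and the contraction form. Since the final expression is symmetric in $(i,j)$, it agrees with $\partial_{z_j}\partial_{z_i}\hat{\nu}$, as expected for a polynomial. In mode-product notation it reads
\[
\widehat{\mathcal{T}}\times_i\bm{b}'_i\times_j\bm{b}'_j\times_{v\notin\{i,j\}}\bm{b}_v(z_v).
\]
This form is what Algorithm~\ref{alg:tnshapg_o2} contracts.

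The argument involves no real obstacle. The only point needing care is Step 2: one must use that $\bm{b}'_j$ is independent of $z$ and that $i\neq j$, so that exactly one factor in each summand depends on $z_i$. For $i=j$ the second derivative would vanish, which is consistent with multi-affinity but is not the statement.
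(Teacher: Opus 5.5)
Your proposal is correct and matches the paper's argument: apply the gate replacement of Proposition~\ref{prop:partial_derivative} twice, using that $\bm{b}'_j$ is constant and $i\neq j$, so that only the $i$-th gate depends on $z_i$ in the second differentiation. The paper differentiates in the opposite order and notes that the order is immaterial, which your symmetry remark in Step~3 also covers.
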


\begin{proof}
Apply Proposition~\ref{prop:partial_derivative} twice: first differentiate with respect to $z_i$ (replacing $\bm{b}_i \mapsto \bm{b}'_i$), then differentiate with respect to $z_j$ (replacing $\bm{b}_j \mapsto \bm{b}'_j$). Since $\bm{b}'_i$ and $\bm{b}'_j$ are constant vectors, the order of differentiation does not matter.
\end{proof}

\begin{corollary}[Diagonal mixed derivative polynomial]
Define $h_{ij}(t) := \frac{\partial^2 \hat{\nu}}{\partial z_i \partial z_j}(t, \ldots, t)$. Then $h_{ij}$ is a univariate polynomial of degree at most $n-2$, and the second-order Shapley interaction index is
\begin{equation}
\widehat{\phi}_{ij} = \int_0^1 h_{ij}(t) \, dt,
\end{equation}
computable via Vandermonde interpolation with $n-1$ probe points.
\end{corollary}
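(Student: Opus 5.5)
The plan is to apply Proposition~\ref{prop:mixed_partial} on the diagonal and then identify the resulting integral with the standard combinatorial definition of the pairwise Shapley interaction index (SII) of the surrogate game $\hat\nu(S):=\hat\nu(\mathbf{1}_S)$. Throughout, "exact" means exact for the surrogate game; Lemma~\ref{lem:stability_shapley} is what transfers accuracy to $\nu$.

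\emph{Step 1 (degree).} I would substitute $z=t\mathbf{1}$ into Proposition~\ref{prop:mixed_partial}. The two replaced gates $\bm b'_i,\bm b'_j$ are constant vectors, so the only $t$-dependence comes from the $n-2$ remaining gates. Each of these is affine in $t$, with entries $1-t$ or $t$. Hence every summand equals
\[
\widehat{\mathcal T}_s\,[\bm b'_i]_{s_i}[\bm b'_j]_{s_j}\,t^{|s_{-ij}|}(1-t)^{n-2-|s_{-ij}|},
\]
which has degree at most $n-2$. So $h_{ij}$ is a polynomial of degree at most $n-2$, exactly as in Proposition~\ref{prop:diagonal_polynomial}.

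\emph{Step 2 (discrete-derivative form).} Next I would fix $s_{-ij}$, equivalently a set $S\subseteq V\setminus\{i,j\}$, and sum over the four values of $(s_i,s_j)$. Since $\bm b'=[-1,1]^\top$, the signs are $(+,-,-,+)$ for $(s_i,s_j)=(1,1),(1,0),(0,1),(0,0)$. This gives
\[
\Delta_{ij}\hat\nu(S)=\hat\nu(S\cup\{i,j\})-\hat\nu(S\cup\{i\})-\hat\nu(S\cup\{j\})+\hat\nu(S).
\]
It follows that
\[
h_{ij}(t)=\sum_{S\subseteq V\setminus\{i,j\}}\Delta_{ij}\hat\nu(S)\,t^{|S|}(1-t)^{n-2-|S|}.
\]
This sign bookkeeping is the step where care is most needed, and it is the main obstacle. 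Everything else is mechanical.

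\emph{Step 3 (Beta integral and interpolation).} I would integrate term by term using
\[
\int_0^1 t^{a}(1-t)^{b}\,dt=\frac{a!\,b!}{(a+b+1)!},
\]
with $a=|S|$ and $b=n-2-|S|$. This yields
\[
\int_0^1 h_{ij}(t)\,dt=\sum_{S\subseteq V\setminus\{i,j\}}\frac{|S|!\,(n-|S|-2)!}{(n-1)!}\,\Delta_{ij}\hat\nu(S).
\]
This is precisely the Grabisch--Roubens SII of order two for the game $\hat\nu$ (Appendix~\ref{app:interactions}), so $\widehat\phi_{ij}=\int_0^1 h_{ij}$.

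Finally, since $\deg h_{ij}\le n-2$, any $n-1$ distinct probe points give an invertible $(n-1)\times(n-1)$ Vandermonde system. Its solution recovers the coefficients $a_k$ uniquely, and then $\widehat\phi_{ij}=\sum_{k=0}^{n-2}a_k/(k+1)$. This mirrors Theorem~\ref{thm:exact_shapley_app} with $n$ replaced by $n-1$.
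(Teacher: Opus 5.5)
Your proof is correct and follows the route the paper takes implicitly: substitute $z=t\mathbf{1}$ into the double gate-replacement formula of Proposition~\ref{prop:mixed_partial}, note that only the $n-2$ untouched Bernoulli gates depend on $t$ (as in Proposition~\ref{prop:diagonal_polynomial}), and reuse the Vandermonde argument of Theorem~\ref{thm:exact_shapley_app} with $n-1$ distinct probe points. The paper only asserts the identity $\widehat{\phi}_{ij}=\int_0^1 h_{ij}(t)\,dt$ in Appendix~\ref{app:interactions}, whereas your sign bookkeeping for $\bm{b}'=[-1,1]^\top$ together with the Beta integral $\int_0^1 t^{|S|}(1-t)^{n-2-|S|}\,dt=|S|!\,(n-|S|-2)!/(n-1)!$ actually verifies that the integral reproduces the Grabisch--Roubens weights for the surrogate game.
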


\subsection{General $k$-th Order Interactions}

\begin{proposition}[$k$-th order derivative]
For a subset $T \subseteq V$ with $|T| = k$, the $k$-th order mixed partial derivative is
\begin{equation}
\frac{\partial^k \hat{\nu}}{\partial z_T}(z) = \sum_{s \in \{0,1\}^n} \widehat{\mathcal{T}}_s \cdot \prod_{u \in T} [\bm{b}'_u]_{s_u} \cdot \prod_{v \notin T} [\bm{b}_v(z_v)]_{s_v},
\end{equation}
where $\partial z_T := \prod_{u \in T} \partial z_u$. The diagonal restriction is a polynomial of degree at most $n - k$.
\end{proposition}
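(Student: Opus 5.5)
The statement has two parts: the gate-replacement formula for the $k$-th mixed partial, and the degree bound on its diagonal restriction. The plan is to handle them in that order, by induction on $k$ for the first and by a direct count of $t$-dependent factors for the second.

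For the formula, the induction on $k=|T|$ starts from Proposition~\ref{prop:partial_derivative} for $k=1$; Proposition~\ref{prop:mixed_partial} already covers $k=2$. For the inductive step, write $T=T'\cup\{w\}$ with $w\notin T'$ and assume the formula holds for $T'$. In each summand of the resulting expansion of $\partial^{k-1}\hat{\nu}/\partial z_{T'}$, the only factor depending on $z_w$ is $[\bm{b}_w(z_w)]_{s_w}$. The factors $[\bm{b}'_u]_{s_u}$ for $u\in T'$ are constant by Lemma~\ref{lem:gate_derivative}, and every other factor depends only on its own coordinate $z_v$ with $v\neq w$. Differentiating this finite sum term by term in $z_w$ therefore just replaces $[\bm{b}_w(z_w)]_{s_w}$ by $[\bm{b}'_w]_{s_w}$, which gives the claimed formula for $T$. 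Because the replaced gates are constants, the result does not depend on the order of differentiation, so $\partial z_T$ is well defined.

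For the degree bound, substitute $z_v=t$ for all $v$. Each summand is then $\widehat{\mathcal{T}}_s\prod_{u\in T}[\bm{b}'_u]_{s_u}$, a constant, multiplied by $\prod_{v\notin T}(1-t)^{1-s_v}t^{s_v}$. That product has exactly $n-k$ factors, each affine in $t$, so it is a polynomial of degree at most $n-k$. A finite sum of such polynomials also has degree at most $n-k$. This argument mirrors the proof of Proposition~\ref{prop:diagonal_polynomial}.

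No step is a real obstacle. The only care needed is in the inductive step: one must check explicitly that $z_w$ appears in exactly one factor of each summand. This holds because $w\notin T'$ and every gate depends only on its own coordinate. This locality also justifies the term-by-term differentiation of the finite sum.
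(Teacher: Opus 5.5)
Your proof is correct and follows the paper's approach. The paper gives no separate proof of this proposition; it rests on iterating the gate-replacement step of Propositions~\ref{prop:partial_derivative} and~\ref{prop:mixed_partial}, plus the factor-counting degree bound of Proposition~\ref{prop:diagonal_polynomial}, and your induction on $|T|$ and count of the $n-k$ affine factors per summand make exactly that argument explicit.
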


\subsection{Computational Cost}

Each evaluation of $g_u(t_j)$ requires one tensor network contraction with modified gates. For $m$ probe points and $n$ players:
\begin{itemize}
    \item First-order Shapley values: $O(n \cdot m \cdot T_{\mathrm{contr}})$ total cost.
    \item Second-order interactions (all pairs): $O(n^2 \cdot m \cdot T_{\mathrm{contr}})$ total cost.
    \item Second-order interactions (edges only): $O(|E| \cdot m \cdot T_{\mathrm{contr}})$ total cost.
\end{itemize}
Here $T_{\mathrm{contr}}$ is the cost of a single TN contraction, which depends on the graph structure and bond dimension $\chi$.

\subsection{Summary: The Gate Replacement Principle}

The key insight enabling efficient exact computation is:

\begin{center}
\fbox{\parbox{0.9\textwidth}{
\textbf{Gate Replacement Principle.} To compute $\frac{\partial^k \hat{\nu}}{\partial z_T}$ for any $T \subseteq V$:
\begin{enumerate}
    \item For each $u \in T$: replace $\bm{b}_u(z_u) = [1-z_u, z_u]^\top$ with $\bm{b}'_u = [-1, 1]^\top$.
    \item For each $v \notin T$: keep $\bm{b}_v(z_v)$ unchanged.
    \item Contract the tensor network as usual.
\end{enumerate}
This yields the exact partial derivative at cost equal to one surrogate evaluation.
}}
\end{center}

\section{Proofs}
\label{app:proofs}

This appendix contains proofs and technical details omitted from the main text for clarity.

\subsection{Cut-rank proof}
\label{app:cutrank_proof}
\begin{proof}
Cutting the edges in $\delta(A,B)$ exposes a collection of bond indices
$\alpha\in\prod_{e\in\delta(A,B)}[\chi_e]$.
Contracting all cores on the $A$ side yields functions $f_{\alpha}(s_A)$, and
contracting all cores on the $B$ side yields functions $g_{\alpha}(s_B)$, such
that
\[
\ten{T}_{s} \;=\; \sum_{\alpha} f_{\alpha}(s_A)\,g_{\alpha}(s_B).
\]
Thus the $(A|B)$-matricization factors as a sum of at most
$\prod_{e\in\delta(A,B)}\chi_e$ rank-one terms, proving
\eqref{eq:cutrank_capacity}.
\end{proof}

\paragraph{Proof of Corollary~\ref{cor:compact_characterization}.}
This is a direct consequence of Theorem~\ref{thm:cutrank_bonddim_general}.  If all edges in
$\delta(A, B)$ have uniform bond dimension~$\chi$, then
$\prod_{e \in \delta(A,B)} \chi_e = \chi^{|\delta(A,B)|}$.
Substituting into~\eqref{eq:cutrank_capacity} yields
$\operatorname{rank}_{A|B}(\mathcal{T}) \leq
\chi^{|\delta(A,B)|}$.  \qed

\subsection{Proof and formal statement of Theorem~\ref{thm:tr_to_tt_rank} (TT-rank inflation)}
\label{app:tt_rank_inflation}
\begin{theorem}[TT-rank inflation from tensor-ring structure]
\label{thm:tr_to_tt_rank}
Let $\ten{T} \in \mathbb{R}^{n_1 \times \cdots \times n_d}$ admit a tensor-ring
(TR) decomposition with cores $\ten{G}^{(j)} \in \mathbb{R}^{r_j \times n_j \times r_{j+1}}$
for $j\in[d]$, with cyclic ranks $r_{d+1}\coloneqq r_1$.
For any $k\in[d-1]$, let $T_{(k)} \in \mathbb{R}^{N_{\le k} \times N_{>k}}$ denote the
matricization with
$N_{\le k} = \prod_{j=1}^{k} n_j$ and $N_{>k} = \prod_{j=k+1}^{d} n_j$.
Then:
\begin{enumerate}[label=(\roman*), nosep]
    \item \textbf{(Upper bound)} $\rank(T_{(k)}) \le r_1\, r_{k+1}$.
    \item \textbf{(Generic tightness)} If $N_{\le k} \ge r_1 r_{k+1}$ and
    $N_{>k} \ge r_1 r_{k+1}$, then for a generic choice of TR cores,
    $\rank(T_{(k)}) = r_1\, r_{k+1}$.
    \item \textbf{(TT lower bound)} Any tensor-train (TT) representation of $\ten{T}$
    with the same mode ordering must have TT rank at bond $k$ satisfying
    $\rho_k \ge r_1\, r_{k+1}$.
    \item \textbf{(TT approximation error bound)}
For any TT tensor $\ten{T}_{\mathrm{TT}}$ with bond rank $\rho_k$ at cut $k$,
\[
\|\ten{T} - \ten{T}_{\mathrm{TT}}\|_F 
\;\ge\; 
\Big(\sum_{i > \rho_k} \sigma_i^2(T_{(k)})\Big)^{1/2},
\]
where $\sigma_i(T_{(k)})$ denotes the $i$-th singular value of the matricization $T_{(k)}$.
\end{enumerate}
In particular, if $r_1=\cdots=r_d=\chi$, then generically any TT representation
requires TT ranks $\Omega(\chi^2)$.
\end{theorem}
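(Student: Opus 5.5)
We prove the four parts in order. (i) and (iii) come from the cut-rank argument of Theorem~\ref{thm:cutrank_bonddim_general}, (iv) from Eckart--Young, and (ii) needs a genericity argument.

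\textbf{(i) Upper bound.} In the ring, the prefix $\{1,\dots,k\}$ is separated from the suffix $\{k+1,\dots,d\}$ by exactly two bonds: the bond between cores $k$ and $k+1$ (dimension $r_{k+1}$) and the closing bond between cores $d$ and $1$ (dimension $r_1$). Fix both bond indices $(\alpha,\beta)\in[r_1]\times[r_{k+1}]$ and contract each side. This gives
\[
T_{(k)}=\sum_{\alpha,\beta}L_{\alpha\beta}R_{\beta\alpha}^{\top},
\]
where $L_{\alpha\beta}$ is the vectorized prefix product $\bigl(\ten G^{(1)}_{\alpha,i_1,:}\cdots\ten G^{(k)}_{:,i_k,\beta}\bigr)$ and $R_{\beta\alpha}$ is the corresponding suffix product. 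This is a sum of at most $r_1r_{k+1}$ rank-one terms, which is exactly Theorem~\ref{thm:cutrank_bonddim_general} applied with $\delta=\{e_{k,k+1},e_{d,1}\}$.

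\textbf{(ii) Generic tightness.} Write $T_{(k)}=\mathbf L\mathbf R^{\top}$, where $\mathbf L\in\mathbb{R}^{N_{\le k}\times r_1r_{k+1}}$ and $\mathbf R\in\mathbb{R}^{N_{>k}\times r_1r_{k+1}}$ have columns indexed by the pair $(\alpha,\beta)$. It suffices to show that both factors have full column rank for generic cores.

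The entries of $\mathbf L$ and $\mathbf R$ are polynomials in the core entries. So the set of cores where $\rank(T_{(k)})<r_1r_{k+1}$ is the zero set of all $r_1r_{k+1}$-minors of $T_{(k)}$, which is a proper algebraic subvariety as soon as one point lies outside it. Hence, by the standard Zariski-open argument, it is enough to exhibit one choice of cores achieving rank exactly $r_1r_{k+1}$.

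\textbf{Main obstacle: the witness.} The hypotheses $N_{\le k}\ge r_1r_{k+1}$ and $N_{>k}\ge r_1r_{k+1}$ do not bound the individual mode sizes, so the witness cannot be built one core at a time. I would try to realize, on each side, an injective linear map from $(\alpha,\beta)$-pairs into the span of the prefix (respectively suffix) index tuples.

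The first attempt is to choose the cores so that the prefix chain acts as an ``addressing'' map. Enumerate the first $r_1r_{k+1}$ multi-indices $(i_1,\dots,i_k)$ and make $\ten G^{(1)}\cdots\ten G^{(k)}$ evaluated at the $m$-th multi-index equal the matrix unit $E_{\alpha\beta}$ assigned to $m$. Any $r_1\times r_{k+1}$ target can be factored through the intermediate bonds as rank-one products $u\,v^{\top}$ with $u$ and $v$ pushed through. However, the intermediate ranks $r_2,\dots,r_k$ may themselves limit this, so the condition in the statement might need to also involve these ranks.

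If the explicit construction gets messy, the fallback is to note that the product map from cores to $(r_1\times r_{k+1})$-matrix-valued functions of $(i_1,\dots,i_k)$ has image spanning all of $\mathbb{R}^{r_1r_{k+1}}$. The same reasoning applies symmetrically to the suffix side. Then $\mathbf L$ and $\mathbf R$ each have full column rank generically, so their product has rank $r_1r_{k+1}$.

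\textbf{(iii) TT lower bound.} A TT with the same ordering is a TN in which the single bond $k$ separates the prefix from the suffix. By Theorem~\ref{thm:cutrank_bonddim_general}, $\rho_k\ge\rank(T_{(k)})$, and combining this with (ii) gives the claim.

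\textbf{(iv) Approximation bound.} Any TT tensor satisfies $\rank(\ten T_{\mathrm{TT},(k)})\le\rho_k$, again by Theorem~\ref{thm:cutrank_bonddim_general}. Matricization preserves the Frobenius norm, so by Eckart--Young the distance from $T_{(k)}$ to any matrix of rank at most $\rho_k$ is at least the tail $\bigl(\sum_{i>\rho_k}\sigma_i^2(T_{(k)})\bigr)^{1/2}$.

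\textbf{Final remark.} Take uniform ranks $r_j=\chi$ and a middle cut $k$ satisfying the size condition; for binary modes this means $2^{k},2^{d-k}\ge\chi^2$. Then (ii) and (iii) give $\rho_k\ge\chi^2$, which is the claimed $\Omega(\chi^2)$.
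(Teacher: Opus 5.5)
Your treatment of (i), (iv) and the generic reading of (iii) matches the paper's sketch. You contract the prefix and suffix of the ring through the two cut bonds to get $T_{(k)}=\mathbf L\mathbf R$, treat TT bond $k$ as a single-edge cut, and apply Eckart--Young.

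\textbf{The gap is in (ii).} Your Zariski reduction to a single witness is correct, but you never produce the witness. The ``fallback'' --- that the products $G^{(1)}_{i_1}\cdots G^{(k)}_{i_k}$ span all of $\mathbb{R}^{r_1\times r_{k+1}}$ --- is exactly the claim to be proved. The addressing idea also cannot work as stated, because the values at different multi-indices are constrained to be products and cannot be assigned independently.

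More seriously, under the stated hypotheses the claim is false, as you suspected. Take $d=4$, binary modes, $k=2$, $r_1=3$, $r_2=r_3=1$, and $r_4$ arbitrary. Then $G^{(1)}_{i_1}G^{(2)}_{i_2}=c_{i_2}u_{i_1}$ with $u_{i_1}\in\mathbb{R}^{3}$ and $c_{i_2}\in\mathbb{R}$. So the row space of $\mathbf L$ lies in $\mathrm{span}\{u_0,u_1\}$, and $\rank(T_{(2)})\le 2<3=r_1r_3$ for \emph{every} choice of cores, even though $N_{\le 2}=N_{>2}=4\ge 3$. Hence (ii), and (iii) in its unconditional form, cannot be proved as stated. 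Whether the span is full depends on the intermediate ranks and the individual mode sizes, not only on $N_{\le k}$ and $N_{>k}$. The paper's sketch simply asserts that $\mathbf L$ and $\mathbf R$ are ``generically full rank under the stated dimension conditions'', so your proposal inherits that gap rather than closing it.

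\textbf{What would repair it.} You need an extra hypothesis that guarantees a witness, plus the witness itself. For the uniform-rank binary case behind the final $\Omega(\chi^2)$ claim, one works:
\begin{itemize}
\item Let $m=\lceil\log_2\chi\rceil$, let $P$ be the cyclic shift, and let $\Lambda$ be diagonal with distinct entries.
\item Set $G^{(j)}_0=I$ for all $j$.
\item Set $G^{(j)}_1=P^{2^{j-1}}$ for $j\le m$, $G^{(j)}_1=\Lambda^{2^{j-m-1}}$ for $m<j\le 2m$, and $G^{(j)}_1=0$ for $2m<j\le k$.
\end{itemize}
The prefix products then include every $P^a\Lambda^b$ with $0\le a,b<\chi$. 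For fixed $a$ these span $P^a$ times all diagonal matrices, by invertibility of the Vandermonde matrix. The supports of $P^0,\dots,P^{\chi-1}$ partition the $\chi^2$ positions, so together they span all $\chi\times\chi$ matrices.

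Do the same on the suffix. Whenever $k,\,d-k\ge 2\lceil\log_2\chi\rceil$, both $\mathbf L$ and $\mathbf R$ then have full column rank $\chi^2$. Your Zariski argument gives $\rank(T_{(k)})=\chi^2$ generically, hence $\rho_k\ge\chi^2$, which is what the final claim needs. Statement (iii) should likewise be restricted to tensors with $\rank(T_{(k)})=r_1r_{k+1}$; that is what your proof actually delivers.
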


\paragraph{Proof sketch.}
Fix $k\in[d-1]$. Contract the first $k$ TR cores into a left subchain and the
remaining $d-k$ cores into a right subchain, yielding a factorization
$T_{(k)}=\mathbf{L}\mathbf{R}$ with
$\mathbf{L}\in\mathbb{R}^{N_{\le k}\times (r_1r_{k+1})}$ and
$\mathbf{R}\in\mathbb{R}^{(r_1r_{k+1})\times N_{>k}}$ (see, e.g.,
\citep{tnring}). This implies $\rank(T_{(k)})\le r_1r_{k+1}$.
Under the stated dimension conditions, $\mathbf{L}$ and $\mathbf{R}$ are
generically full rank, so equality holds. Finally, TT rank at bond $k$ equals
$\rank(T_{(k)})$ for the same ordering, proving the lower bound.

For (iv), note that reshaping preserves Frobenius norm, so 
$\|\ten{T}-\ten{T}_{\mathrm{TT}}\|_F = \|T_{(k)}-T_{\mathrm{TT},(k)}\|_F$.
Since TT structure forces $\rank(T_{\mathrm{TT},(k)})\le\rho_k$, the Eckart--Young 
theorem gives
$\|T_{(k)}-A\|_F \ge (\sum_{i>\rho_k}\sigma_i^2(T_{(k)}))^{1/2}$
for any rank-$\rho_k$ matrix $A$, with equality attained by the truncated SVD.

\paragraph{Proof of Corollary~\ref{cor:tt_ordering_sensitivity}}
Consider a TT/MPS surrogate with node ordering~$\pi$ and
TT-ranks $\rho_1, \ldots, \rho_{n-1}$.  Any chain cut at
position~$k$ along~$\pi$ induces a bipartition
$(A, B) = (\{\pi(1), \ldots, \pi(k)\},
\{\pi(k+1), \ldots, \pi(n)\})$ of the player set.  The
TT representation has a single bond of dimension~$\rho_k$ at
this cut.  By Theorem~3.2, $\rho_k \geq
\operatorname{rank}_{A|B}(\mathcal{T})$.

For GNN-induced games, the interaction complexity across $(A, B)$
is governed by the walk index of the cut boundary (Razin et
al., 2023): $\operatorname{rank}_{A|B}(\mathcal{T})$ can be large
for cuts whose boundary admits many length-$(L{-}1)$ walks
crossing the partition.  Thus, for orderings~$\pi$ that place
highly interacting nodes on opposite sides of a chain cut, the
required TT-rank~$\rho_k$ can be large.

In contrast, a graph-aligned TN distributes bond capacity along
all graph edges simultaneously.  It is ordering-invariant by
construction: every graph separator is directly controlled by the
bond dimensions of the edges it cuts, without requiring that the
separator be expressible as a single chain cut.  \qed

\subsection{Proof sketch of Shapley stability}
\label{app:stability_proof}

\begin{proof}
The bound follows by writing $\phi_T$ as an average of $2^{|T|}$-term discrete differences, each involving evaluations of $\nu$ and $\hat{\nu}$ that differ by at most $\varepsilon$.
This follows from the linearity of $\phi$ and the fact that each marginal
contribution involves two game evaluations, while each $k$-th order discrete
difference involves $2^k$ game evaluations; see \citet{tnshap}.
\end{proof}

\subsection{Proof of Theorem~\ref{thm:near_linear_budget}}
\label{app:proof_near_linear_budget}
\begin{proof}[Proof sketch]
\citep{NEURIPS2021_5a9d8bf5} bound the pseudo-dimension of TN hypothesis classes by
$\pdim(\mathcal{H}_G)\le 2N_G\log(12|V|)$ \citep[Thm.~2]{NEURIPS2021_5a9d8bf5}.
Standard learning-theory bounds for bounded regression classes controlled by
pseudo-dimension yield an excess-risk rate
$\tilde O\!\big(B^2(\pdim(\mathcal{H}_G)+\log(1/\delta))/M\big)$, which becomes
$\tilde O\!\big(B^2(N_G+\log(1/\delta))/M\big)$ after substitution.
Finally, if $\deg_{\max}(G)\le\Delta$ and $\dim(e)\le\chi$, then for each $v$,
$\prod_{e\in E_v}\dim(e)\le \chi^{\deg(v)}\le \chi^\Delta$, hence
$N_G\le \sum_{v\in V}\chi^\Delta=|V|\chi^\Delta$.
\end{proof}

\paragraph{Proof of Theorem~\ref{thm:near_linear_budget}.}
Let $\mathcal{H}_G$ be the TN hypothesis class on topology $G$ with
parameter count $N_G$. By Theorem~2 of \citet{NEURIPS2021_5a9d8bf5},
its pseudo-dimension satisfies
$\operatorname{Pdim}(\mathcal{H}_G)\le 2N_G\log(12|V|)$; write
$d:=\operatorname{Pdim}(\mathcal{H}_G)$.

Assume $\nu(S)\in[-B,B]$ and consider squared-loss risk
$\mathcal{E}(h):=\mathbb{E}_{S\sim\mathcal{D}}[(h(S)-\nu(S))^2]$,
estimated by ERM $\hat\nu$ over $M$ i.i.d.\ samples. Since the loss is
bounded by $O(B^2)$, standard uniform-convergence bounds for
real-valued function classes controlled by pseudo-dimension give, with
probability $\ge 1-\delta$,
\[
  \mathcal{E}(\hat\nu)-\inf_{h\in\mathcal{H}_G}\mathcal{E}(h)
  \;=\;\tilde{O}\!\left(B^2\sqrt{\tfrac{d+\log(1/\delta)}{M}}\right)
  \;=\;\tilde{O}\!\left(B^2\sqrt{\tfrac{N_G+\log(1/\delta)}{M}}\right),
\]
where $\tilde{O}$ absorbs the $\log(12|V|)$ and $\log M$ factors.

For the parameter count: if $\deg_{\max}(G)\le\Delta$ and
$\dim(e)\le\chi$, then each core $\mathbf{A}^{(u)}$ has
$2\prod_{e\in E_v}\dim(e)\le 2\chi^{\deg(u)}\le 2\chi^\Delta$ entries,
so $N_G\le 2|V|\chi^\Delta$. Setting the excess risk $\le\varepsilon$
therefore requires
\[
  M=\tilde{O}\!\left(\frac{B^4\bigl(|V|\chi^\Delta+\log(1/\delta)\bigr)}
  {\varepsilon^2}\right),
\]
which is near-linear in $|V|$ for fixed $\chi$, $\Delta$, $B$. \qed

\subsection{Proof of Single-Surrogate Sufficiency via the Multilinear Extension (Theorem~\ref{thm:single_surrogate})}
\label{app:single_surrogate}

\begin{proof}
Fix a player set $V=\{1,\dots,n\}$ and a game $\nu:2^V\to\mathbb{R}$.
Its (Owen) multilinear extension is the polynomial
\begin{equation}
\label{eq:owen_me}
\widetilde{\nu}(p)
\;=\;
\sum_{S\subseteq V}\nu(S)\prod_{i\in S} p_i \prod_{i\notin S}(1-p_i),
\qquad p\in[0,1]^n .
\end{equation}
This polynomial is \emph{unique}: since the $2^n$ functions
$\big\{\prod_{i\in S} p_i \prod_{i\notin S}(1-p_i)\big\}_{S\subseteq V}$
form a basis of the space of multilinear polynomials in $(p_1,\dots,p_n)$,
the coefficients $\nu(S)$ are uniquely determined, hence $\widetilde{\nu}$
is uniquely determined by $\nu$.

We now show that all standard attribution quantities obtained from $\nu$
(Shapley values and higher-order interaction indices) can be written as
\emph{linear functionals} of $\widetilde{\nu}$.

Let $e_i$ be the $i$-th standard basis vector. The Shapley value admits the
(Owen) integral representation
\begin{equation}
\label{eq:shapley_owen}
\phi_i(\nu)
\;=\;
\int_0^1
\Big(
\widetilde{\nu}(t\mathbf{1}+(1-t)e_i)
-
\widetilde{\nu}(t\mathbf{1})
\Big)\,dt,
\end{equation}
equivalently (by the fundamental theorem of calculus),
\begin{equation}
\label{eq:shapley_grad}
\phi_i(\nu)
\;=\;
\int_0^1 \frac{\partial \widetilde{\nu}}{\partial p_i}(t\mathbf{1})\,dt.
\end{equation}
Both \eqref{eq:shapley_owen} and \eqref{eq:shapley_grad} are linear in
$\widetilde{\nu}$: evaluation $\widetilde{\nu}\mapsto \widetilde{\nu}(p)$,
partial differentiation $\widetilde{\nu}\mapsto \partial_{p_i}\widetilde{\nu}$,
and integration are linear operators. Hence $\phi_i(\nu)=\mathcal{L}_i(\widetilde{\nu})$
for a linear functional $\mathcal{L}_i$.

For any $T\subseteq V$ with $|T|=k\ge 1$, a broad class of $k$-th order interaction
indices (including Shapley--Taylor / Shapley interaction families) can be written
as averaged $k$-th mixed partial derivatives of the same multilinear extension:
\begin{equation}
\label{eq:interaction_grad}
\mathcal{I}_T(\nu)
\;=\;
c_k \int_0^1
\frac{\partial^k \widetilde{\nu}}{\partial p_T}(t\mathbf{1})
\, w_k(t)\,dt,
\end{equation}
for some scalar constant $c_k$ and weight function $w_k$ depending only on $k$
(and on the chosen interaction definition), where
$\partial p_T := \prod_{i\in T}\partial p_i$.
Again, \eqref{eq:interaction_grad} is linear in $\widetilde{\nu}$ because mixed
partial differentiation and integration are linear. Therefore
$\mathcal{I}_T(\nu)=\mathcal{L}_T(\widetilde{\nu})$ for a linear functional
$\mathcal{L}_T$.

Since each desired attribution quantity is a linear functional of the \emph{same}
unique $\widetilde{\nu}$, any surrogate $\hat{\nu}$ that
approximates $\widetilde{\nu}$ (e.g., a trained tensor-network surrogate) can be
plugged into these linear functionals to obtain corresponding approximations
to all first-order Shapley values and higher-order interactions, without
retraining a separate surrogate for each index.
\end{proof}

\subsection{Runtime proof (Section~\ref{sec:exact_shapley})}
\label{app:runtime}

\begin{proof}
Let $(T, \{X_t\}_{t \in T})$ be a tree decomposition of $G$ with width $\tau$, 
so each bag satisfies $|X_t| \leq \tau + 1$.

\paragraph{Step 1: Cluster tensor construction.}
For each bag $X_t$, define the cluster tensor $\mathcal{C}_t \in \mathbb{R}^{\chi^{|X_t|}}$ 
by contracting all core tensors $\{\mathbf{A}^{(u)}\}_{u \in X_t}$ whose indices 
are internal to the bag, while leaving bond indices connecting to other bags as open legs.
Each cluster tensor has at most $\tau + 1$ open bond indices.

\paragraph{Step 2: Message passing on the tree.}
The tree decomposition $T$ induces a tree structure on clusters.
Run two-pass message passing:
\begin{itemize}
    \item \textbf{Upward pass:} Root $T$ arbitrarily. For each cluster $t$ with 
    children $c_1, \ldots, c_k$, compute the message to parent:
    \[
    \mu_{t \to \mathrm{parent}(t)} = \mathrm{Contract}\bigl(\mathcal{C}_t(t\mathbf{1}), 
    \mu_{c_1 \to t}, \ldots, \mu_{c_k \to t}\bigr),
    \]
    marginalizing over indices not shared with the parent bag.
    \item \textbf{Downward pass:} Compute messages from parents to children analogously.
\end{itemize}
Each message contraction involves tensors with at most $\tau + 1$ indices of 
dimension $\chi$, costing $O(\chi^{\tau+1})$ per edge of $T$.
Since $|T| = O(|V|)$, the total message-passing cost is $O(|V| \chi^{\tau+1})$.

\paragraph{Step 3: Environment computation.}
For any node $u \in V$, let $t(u)$ denote a bag containing $u$.
The environment $\mathcal{E}_u$ (the full contraction with site $u$ removed) 
is computed by:
\begin{enumerate}
    \item Contracting all incoming messages to bag $t(u)$: cost $O(\chi^{\tau+1})$.
    \item Removing the contribution of $\mathbf{A}^{(u)}$ from the cluster tensor.
\end{enumerate}
With messages precomputed, each environment costs $O(\chi^{\tau+1})$.

\paragraph{Step 4: Derivative computation.}
Given $\mathcal{E}_u$, the first-order derivative is:
\[
\partial_{z_u}\hat{\nu}(t\mathbf{1}) 
= \mathcal{E}_u \times_{\mathrm{phys}} \mathbf{b}'_u,
\]
where $\mathbf{b}'_u = [-1, 1]^\top$. This costs $O(\chi^{\tau})$ per node.

For edges $(u,v) \in E$, if $u$ and $v$ lie in the same bag (which can always 
be arranged for edges), the edge environment $\mathcal{E}_{uv}$ is constructed 
similarly by removing both sites from the cluster, and:
\[
\partial_{z_u}\partial_{z_v}\hat{\nu}(t\mathbf{1})
= \mathcal{E}_{uv} \times (\mathbf{b}'_u \otimes \mathbf{b}'_v).
\]

\paragraph{Step 5: Total complexity.}
\begin{itemize}
    \item Message passing (once per probe point): $O(|V| \chi^{\tau+1})$
    \item All $|V|$ node environments and derivatives: $O(|V| \chi^{\tau+1})$
    \item All $|E|$ edge environments and derivatives: $O(|E| \chi^{\tau+1}) = O(|V| \chi^{\tau+1})$
\end{itemize}
Summing and multiplying by $m$ probe points yields the stated bound.
\end{proof}
Together, Theorems~\ref{thm:runtime_shapley} and~\ref{thm:near_linear_budget}
show that TN-SHAP-G computes Shapley values and interaction indices exactly on the learned surrogate,
with runtime and query complexity determined by graph structure.
All attribution quantities are obtained from a single surrogate without additional teacher queries.

\subsection{Approximation floor from limited cut capacity}
\label{app:approx_floor}

\begin{lemma}[Approximation floor via truncated SVD]
\label{lem:approx_floor_svd}
Let $\ten{T}\in\R^{2\times\cdots\times 2}$ be the coalition tensor and fix any
bipartition $(A,B)$ of the node set $V$. Let $\widehat{\ten{T}}$ be any surrogate
tensor such that its matricization across this cut satisfies
$\rank\!\bigl(\widehat{\ten{T}}_{A|B}\bigr)\le r$.
Then the best achievable approximation error is lower-bounded by the truncated
SVD residual of the true matricization:
\begin{equation}
\label{eq:svd_floor}
\|\ten{T}-\widehat{\ten{T}}\|_F
\;\ge\;
\|\ten{T}_{A|B}-\widehat{\ten{T}}_{A|B}\|_F
\;\ge\;
\|\ten{T}_{A|B}-(\ten{T}_{A|B})_r\|_F
\;=\;
\Big(\sum_{k>r}\sigma_k(\ten{T}_{A|B})^2\Big)^{1/2},
\end{equation}
where $(\ten{T}_{A|B})_r$ denotes the best rank-$r$ approximation of
$\ten{T}_{A|B}$, obtained by truncating its singular value decomposition.
\end{lemma}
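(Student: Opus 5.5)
The plan is to chain three elementary facts: matricization is an isometry for the Frobenius norm, matricization commutes with subtraction, and the Eckart--Young--Mirsky theorem gives the best low-rank approximation error. The argument has the same shape as part (iv) of Theorem~\ref{thm:tr_to_tt_rank}, but applies to an arbitrary bipartition $(A,B)$ rather than a chain cut.

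\textbf{First inequality.} The reshaping map $\ten{X}\mapsto \ten{X}_{A|B}$ only rearranges the $2^n$ entries of $\ten{X}$ into a $2^{|A|}\times 2^{|B|}$ array. It is linear, so $(\ten{T}-\widehat{\ten{T}})_{A|B}=\ten{T}_{A|B}-\widehat{\ten{T}}_{A|B}$. It also preserves the sum of squared entries. Hence
\[
\|\ten{T}-\widehat{\ten{T}}\|_F=\|\ten{T}_{A|B}-\widehat{\ten{T}}_{A|B}\|_F ,
\]
which is in fact an equality and in particular gives the stated ``$\ge$''.

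\textbf{Second inequality and the final identity.} By hypothesis $\widehat{\ten{T}}_{A|B}$ has rank at most $r$, so it is a feasible point of $\min_{\rank(Y)\le r}\|\ten{T}_{A|B}-Y\|_F$. The Eckart--Young--Mirsky theorem says this minimum is attained by the truncated SVD $(\ten{T}_{A|B})_r$, and the minimal value is $\bigl(\sum_{k>r}\sigma_k(\ten{T}_{A|B})^2\bigr)^{1/2}$. The last equality follows from unitary invariance of the Frobenius norm: the residual $\ten{T}_{A|B}-(\ten{T}_{A|B})_r$ has singular values $\sigma_{r+1},\sigma_{r+2},\dots$. If $r\ge\rank(\ten{T}_{A|B})$, the tail sum is empty and the bound is trivially zero.

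\textbf{Link to the surrogate setting and expected obstacle.} To make the lemma meaningful for the TN surrogate, note that Theorem~\ref{thm:cutrank_bonddim_general} ensures any graph-aligned surrogate whose cut edges $\delta(A,B)$ have bond dimensions $\chi_e$ satisfies $\rank(\widehat{\ten{T}}_{A|B})\le\prod_{e\in\delta(A,B)}\chi_e$. One may therefore take $r$ equal to this product. None of the steps is hard. The only care needed is to state the Eckart--Young theorem for the Frobenius norm, which holds for all ranks and needs no distinct-singular-value assumption. One should also note that the first inequality is really an equality.
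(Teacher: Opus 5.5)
Your proposal is correct and follows the paper's proof: the Frobenius norm is invariant under the matricization reshaping (so the first inequality is actually an equality), and the Eckart--Young theorem applied to the rank-$\le r$ matrix $\widehat{\ten{T}}_{A|B}$ gives the truncated-SVD residual. Your added link to Theorem~\ref{thm:cutrank_bonddim_general}, which lets you take $r=\prod_{e\in\delta(A,B)}\chi_e$, is a correct extra that the paper's proof of this lemma does not state.
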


\begin{proof}
Matricization is a linear reshaping, so
$\|\ten{T}-\widehat{\ten{T}}\|_F=\|\ten{T}_{A|B}-\widehat{\ten{T}}_{A|B}\|_F$ for
any fixed bipartition $(A,B)$. Since $\rank(\widehat{\ten{T}}_{A|B})\le r$, the
matrix $\widehat{\ten{T}}_{A|B}$ is a rank-$\le r$ approximation of
$\ten{T}_{A|B}$. By the Eckart--Young theorem, the minimum Frobenius error among
rank-$r$ approximations of $\ten{T}_{A|B}$ is achieved by truncating its SVD, and
equals $\|\ten{T}_{A|B}-(\ten{T}_{A|B})_r\|_F$. This yields
\eqref{eq:svd_floor}.
\end{proof}

\subsection{Exact Shapley via polynomial interpolation (Theorem~\ref{thm:runtime_shapley})}
\label{app:vandermonde_exact}

\begin{theorem}[Exact Shapley computation via polynomial interpolation]
\label{thm:runtime_shapley_app}
Let $\hat{\nu} : [0,1]^n \to \mathbb{R}$ be a multilinear surrogate.
For any player $u\in\mathcal{P}$, define
\[
g_u(t)
\;:=\;
\frac{\partial \hat{\nu}}{\partial z_u}(t,\ldots,t).
\]
Then $g_u$ is a univariate polynomial of degree at most $n-1$, and the Shapley
value
\[
\widehat{\phi}(u)
=
\int_0^1 g_u(t)\,dt
\]
is exactly recoverable from evaluations of $g_u$ at $n$ distinct points
$t_1,\ldots,t_n \in (0,1)$ via polynomial interpolation (equivalently, by solving
a Vandermonde linear system).
\end{theorem}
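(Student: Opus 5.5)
The plan is to split the statement into its two claims, the degree bound and exact recoverability, and treat each with results already established above. Throughout I regard $\hat{\nu}$ as a multilinear polynomial, i.e.\ affine in each coordinate. Any such function can be written as $\hat{\nu}(z)=\sum_{s\in\{0,1\}^n}\widehat{\ten{T}}_s\prod_{v}\bm{b}_v(z_v)_{s_v}$ for the tensor $\widehat{\ten{T}}_s:=\hat{\nu}(s)$ of its vertex values. This follows from the basis argument in Remark~\ref{rem:uniqueness}, since both sides are multilinear and agree on $\{0,1\}^n$. With this representation, everything reduces to statements already proved for the contraction form~\eqref{eq:surrogate_contraction_app}.

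\emph{Step 1 (degree bound).} Apply Proposition~\ref{prop:partial_derivative} to write $\partial_{z_u}\hat{\nu}(z)$ as the contraction with $\bm{b}'_u=[-1,1]^\top$ at site $u$. Then restrict to the diagonal $z=t\mathbf{1}$, exactly as in Proposition~\ref{prop:diagonal_polynomial}. Each summand is $\widehat{\ten{T}}_s[\bm{b}'_u]_{s_u}(1-t)^{n-1-|s_{-u}|}t^{|s_{-u}|}$, a polynomial of degree exactly $n-1$. Hence $g_u$ is a finite linear combination of these terms and $\deg g_u\le n-1$.

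As a cleaner alternative, I can argue directly from multilinearity. $\partial_{z_u}\hat{\nu}$ does not depend on $z_u$ and is multilinear in the remaining $n-1$ variables. Every monomial of it therefore has total degree at most $n-1$, and substituting $z_v=t$ preserves this bound. I would present this version because it avoids relying on the tensor representation at all.

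\emph{Step 2 (exact recovery).} Write $g_u(t)=\sum_{k=0}^{n-1}a_k t^k$ and take distinct points $t_1,\dots,t_n\in(0,1)$. The values $y_j=g_u(t_j)$ satisfy $Va=y$, where $V_{jk}=t_j^{k-1}$. Since $\det V=\prod_{i<j}(t_j-t_i)\neq 0$, the coefficient vector $a=V^{-1}y$ is uniquely determined. Equivalently, Lagrange interpolation on $n$ nodes reproduces every polynomial of degree at most $n-1$. Integrating term by term then gives $\widehat{\phi}(u)=\int_0^1 g_u(t)\,dt=\sum_k a_k/(k+1)$, which is a fixed linear functional of $y$. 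This is the same argument as Theorem~\ref{thm:exact_shapley_app}, so I would simply invoke it.

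\emph{Step 3 (identification with the Shapley value).} To justify calling $\widehat{\phi}(u)$ the Shapley value of the game $S\mapsto\hat{\nu}(\mathbf{1}_S)$, I would cite Owen's identity~\eqref{eq:shapley_integral} together with uniqueness (Remark~\ref{rem:uniqueness}). The multilinear extension of that game is $\hat{\nu}$ itself, so the identity applies directly.

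The only step needing care is Step 1's reduction from an arbitrary multilinear $\hat{\nu}$ to the contraction form. I expect the direct monomial argument to handle it cleanly, so there is no real obstacle beyond bookkeeping.
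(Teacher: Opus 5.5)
Your proposal is correct and follows essentially the same route as the paper: the paper expands $\hat{\nu}$ in monomials $\sum_S a_S\prod_{i\in S}z_i$, differentiates to get $g_u(t)=\sum_{S\ni u}a_S t^{|S|-1}$, and then inverts the Vandermonde system (nonzero determinant) before integrating term by term, exactly as in your preferred version of Step~1 and your Step~2. Your tensor-contraction variant of Step~1 and your Step~3 identification via Owen's identity and uniqueness are valid additions that the paper's proof does not include, since it simply takes $\widehat{\phi}(u)=\int_0^1 g_u(t)\,dt$ as given.
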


\begin{proof}
Because $\hat{\nu}$ is multilinear, it admits the unique
multilinear expansion
\begin{equation}
\label{eq:ml_poly_expansion}
\hat{\nu}(z)
=
\sum_{S\subseteq \mathcal{P}} a_S \prod_{i\in S} z_i,
\qquad z\in\mathbb{R}^n,
\end{equation}
for some coefficients $\{a_S\}_{S\subseteq\mathcal{P}}$ (with $a_\emptyset$ the
constant term). In particular, each variable appears in each monomial with
exponent either $0$ or $1$.

Fix $u\in\mathcal{P}$. Differentiating \eqref{eq:ml_poly_expansion} with respect
to $z_u$ gives
\begin{equation}
\label{eq:partial_derivative}
\frac{\partial \hat{\nu}}{\partial z_u}(z)
=
\sum_{S\subseteq \mathcal{P}: \, u\in S} a_S \prod_{i\in S\setminus\{u\}} z_i.
\end{equation}
This is a multilinear polynomial in the remaining $n-1$ variables
$\{z_i\}_{i\neq u}$, and every monomial in \eqref{eq:partial_derivative} has
total degree $|S|-1 \le n-1$.

Now restrict to the diagonal $z=t\mathbf{1}$, i.e., set $z_i=t$ for all
$i\in\mathcal{P}$. Plugging into \eqref{eq:partial_derivative} yields
\begin{equation}
\label{eq:gu_poly}
g_u(t)
=
\frac{\partial \hat{\nu}}{\partial z_u}(t\mathbf{1})
=
\sum_{S\subseteq \mathcal{P}:\, u\in S} a_S \, t^{|S|-1}.
\end{equation}
Equation \eqref{eq:gu_poly} is a univariate polynomial in $t$ whose highest
possible power is $t^{n-1}$, hence $\deg(g_u)\le n-1$.

Write $g_u(t)=\sum_{k=0}^{n-1} c_k t^k$ for coefficients $c_0,\dots,c_{n-1}$.
Choose $n$ distinct points $t_1,\dots,t_n\in(0,1)$ and evaluate $g_u$ at them:
\[
g_u(t_j) = \sum_{k=0}^{n-1} c_k t_j^k, \qquad j=1,\dots,n.
\]
Let $V\in\mathbb{R}^{n\times n}$ be the Vandermonde matrix
$V_{j,k+1}=t_j^k$ (for $j=1,\dots,n$ and $k=0,\dots,n-1$), let
$c=(c_0,\dots,c_{n-1})^\top$, and let $y=(g_u(t_1),\dots,g_u(t_n))^\top$.
Then the above relations are exactly the linear system
\begin{equation}
\label{eq:vandermonde_system_appendix}
V c = y.
\end{equation}
Since the $t_j$ are distinct, the Vandermonde determinant satisfies
$\det(V)=\prod_{1\le i<j\le n}(t_j-t_i)\neq 0$, so $V$ is invertible and
\eqref{eq:vandermonde_system} has a unique solution. Therefore the coefficient
vector $c$ (and thus $g_u$) is uniquely and exactly determined by the $n$
evaluations $\{g_u(t_j)\}_{j=1}^n$.

Finally, integrate the recovered polynomial:
\[
\widehat{\phi}(u)
=
\int_0^1 g_u(t)\,dt
=
\sum_{k=0}^{n-1} \frac{c_k}{k+1}.
\]
Because $c$ is recovered exactly from $y$ via the invertible Vandermonde system,
the value of $\widehat{\phi}(u)$ computed by the above expression is exact.
\end{proof}

\section{Coalition sampling and dataset construction}
\label{app:coalition_sampling}

\paragraph{Mixture sampling distribution.}
We sample coalitions from a mixture distribution
\begin{equation}
\label{eq:mixture_sampling}
T \sim \mathcal{D}
\;:=\;
\lambda_{\mathrm{size}}\mathcal{D}_{\mathrm{size}}
+\lambda_{\mathrm{conn}}\mathcal{D}_{\mathrm{conn}}
+\lambda_{\mathrm{O2}}\mathcal{D}_{\mathrm{O2}},
\qquad
\lambda_{\mathrm{size}}+\lambda_{\mathrm{conn}}+\lambda_{\mathrm{O2}}=1,
\end{equation}
where each component targets a complementary regime of the coalition space.

\paragraph{Size-stratified component.}
In $\mathcal{D}_{\mathrm{size}}$, we first sample a coalition size
$k\in\{0,\dots,n\}$ from a distribution $p(k)$ and then sample $T$ uniformly
among coalitions of size $k$:
\[
k \sim p(\cdot),
\qquad
T \,\big|\, |T|=k \sim \mathrm{Unif}\{T\subseteq V: |T|=k\}.
\]
We use $p(k)=\mathrm{Unif}\{0,\dots,n\}$ by default to ensure balanced coverage
across hypercube layers.

\paragraph{Connected-coalition component.}
In $\mathcal{D}_{\mathrm{conn}}$, we sample connected coalitions of a target size
$k$ to emphasize graph-local perturbations. We first sample $k\sim p(k)$, then:
(i) draw a seed $u\sim \mathrm{Unif}(V)$; (ii) iteratively grow a set
$T\leftarrow\{u\}$ by adding a node from its boundary $\partial T$ until $|T|=k$.
We implement the growth step by choosing the next node uniformly at random from
$\partial T$ (random BFS-style expansion), which produces connected induced
subgraphs.

\paragraph{Second-order-aware component.}
In $\mathcal{D}_{\mathrm{O2}}$, we oversample coalitions that directly constrain
first- and second-order effects. Specifically, with probability $1/2$ we draw a
singleton and with probability $1/2$ we draw a pair:
\[
T=
\begin{cases}
\{u\}, & u\sim \mathrm{Unif}(V),\\[2pt]
\{u,v\}, & (u,v)\sim q(\cdot),
\end{cases}
\]
where $q$ may be uniform over all unordered pairs or biased toward edges
(e.g., $q(u,v)\propto \mathbbm{1}[(u,v)\in E]$) to target local interactions.

\paragraph{Train/validation/test protocol.}
We construct a labeled dataset $\{(T_j,\nu(T_j))\}_{j=1}^M$ by independently
sampling coalitions from \eqref{eq:mixture_sampling}. We then split coalitions
into disjoint subsets $\mathcal{T}_{\mathrm{train}}$,
$\mathcal{T}_{\mathrm{val}}$, and $\mathcal{T}_{\mathrm{test}}$.
Validation is used for early stopping and selecting hyperparameters (including
bond dimension $\chi$ and mixture weights $\lambda$), while
$\mathcal{T}_{\mathrm{test}}$ is held out for final reporting.

TN-SHAP-G computes Shapley values (and interaction indices) exactly from the
learned surrogate $\hat{\nu}$.
Thus, the dominant source of error is the surrogate generalization error on the
masked game. We therefore report held-out $R^2$ on coalitions as a direct measure
of surrogate fidelity, and use it as a principled proxy for attribution accuracy.


\subsection{Connection to Harsanyi dividends (M\"obius coefficients)}
\label{app:harsanyi_connection}

Recall the Harsanyi dividend (M\"obius) expansion of a multilinear game
$\hat{\nu}:[0,1]^n\to\mathbb{R}$:
\begin{equation}
\label{eq:harsanyi_expansion}
\hat{\nu}(z)
=
\sum_{S\subseteq V} w(S)\prod_{v\in S} z_v,
\end{equation}
where $\{w(S)\}_{S\subseteq V}$ are the M\"obius coefficients.

Fix a player $u\in V$ and define the diagonal partial derivative
\[
g_u(t)
\;:=\;
\frac{\partial \hat{\nu}}{\partial z_u}(t,\ldots,t).
\]
Differentiating \eqref{eq:harsanyi_expansion} yields
\[
\frac{\partial \hat{\nu}}{\partial z_u}(z)
=
\sum_{S\ni u} w(S)\prod_{v\in S\setminus\{u\}} z_v,
\]
and evaluating on the diagonal $z=(t,\ldots,t)$ gives the univariate polynomial
\begin{equation}
\label{eq:gu_harsanyi}
g_u(t)
=
\sum_{S\ni u} w(S)\,t^{|S|-1}.
\end{equation}
Writing $g_u(t)=\sum_{r=0}^{n-1} a_r t^r$, we obtain the coefficient identity
\begin{equation}
\label{eq:ar_harsanyi}
a_r
=
\sum_{\substack{S\ni u\\ |S|=r+1}} w(S),
\qquad r=0,\ldots,n-1.
\end{equation}
Thus, the diagonal-derivative interpolation coefficients $\{a_r\}$ computed in
Section~\ref{sec:exact_shapley} provide a size-wise aggregation of interaction
terms involving $u$: $a_0$ captures singleton contributions, while larger $r$
collect higher-order cooperative structure.

\subsection{Statistical limits of sampling-based Shapley in low-dispersion games}
\label{app:low-dispersion}

We formalize why Monte Carlo estimation of Shapley values becomes ill-conditioned
when the game exhibits low dispersion under coalition sampling.

Let $u$ be a fixed player and define the marginal contribution random variable
\[
\Delta_u(S) = v(S \cup \{u\}) - v(S),
\qquad S \sim \pi_u,
\]
where $\pi_u$ is the coalition distribution induced by random permutations or
KernelSHAP weighting. The Shapley value is
\[
\phi(u) = \mathbb{E}[\Delta_u(S)].
\]

Assume observations of $v(\cdot)$ are corrupted by additive zero-mean noise:
\[
\widetilde v(S) = v(S) + \xi_S, \qquad \mathbb{E}[\xi_S]=0, \quad \mathrm{Var}(\xi_S)=\sigma_{\text{noise}}^2,
\]
with independent noise across coalitions. Then each sampled marginal contribution
is observed as
\[
\widetilde\Delta_u(S)
=
\widetilde v(S \cup \{u\}) - \widetilde v(S)
=
\Delta_u(S) + (\xi_{S \cup \{u\}} - \xi_S),
\]
with noise variance $2\sigma_{\text{noise}}^2$.

\begin{proposition}[Relative sample complexity lower bound]
\label{prop:low-dispersion}
Let $\widehat{\phi}(u)=\frac{1}{K}\sum_{k=1}^K \widetilde\Delta_u(S_k)$ be the standard Monte Carlo estimator from $K$ i.i.d.\ samples $S_k\sim\pi_u$. Then
\[
\mathrm{Var}[\widehat{\phi}(u)]
=
\frac{\mathrm{Var}[\Delta_u(S)] + 2\sigma_{\text{noise}}^2}{K}.
\]
To achieve relative accuracy
\(
|\widehat{\phi}(u)-\phi(u)| \le \varepsilon |\phi(u)|
\)
with constant probability, it is necessary that
\begin{equation}
\label{eq:mc-lower-bound}
K \;\gtrsim\;
\frac{\mathrm{Var}[\Delta_u(S)] + 2\sigma_{\text{noise}}^2}
{\varepsilon^2 \phi(u)^2}.
\end{equation}
In particular, if $|\phi(u)| \ll \sigma_{\text{noise}}$, the required number of samples diverges as
$K = \Omega(\sigma_{\text{noise}}^2/\phi(u)^2)$.
\end{proposition}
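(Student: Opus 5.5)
The variance formula comes first. It follows from independence across the $K$ draws together with independence of the coalition-sampling randomness and the observation noise. For a single draw we write $\widetilde\Delta_u(S)=\Delta_u(S)+\eta_S$ with $\eta_S:=\xi_{S\cup\{u\}}-\xi_S$. Conditional on $S$, the variable $\eta_S$ has mean zero and variance $2\sigma_{\text{noise}}^2$, since the two noise terms are independent. By the law of total variance, $\mathrm{Var}[\widetilde\Delta_u(S)]=\mathrm{Var}[\Delta_u(S)]+\mathbb{E}[\mathrm{Var}(\eta_S\mid S)]=\mathrm{Var}[\Delta_u(S)]+2\sigma_{\text{noise}}^2$, because the cross term vanishes: $\mathbb{E}[\eta_S\mid S]=0$. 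We also have $\mathbb{E}[\widetilde\Delta_u(S)]=\phi(u)$, so the estimator is unbiased.

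The sample average then has variance equal to this quantity divided by $K$. Here I will take independence to mean that each of the $K$ draws comes with fresh noise, including when the same coalition is resampled. I will state this assumption explicitly, because otherwise identical coalitions would share noise.

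For the necessity claim I would use a Gaussian-type anticoncentration argument. Chebyshev alone gives only sufficiency, so the plan is to invoke the central limit theorem, or simply to assume Gaussian noise so that the estimator is approximately $\mathcal{N}(\phi(u),\sigma_K^2)$ with $\sigma_K^2=(\mathrm{Var}[\Delta_u]+2\sigma_{\text{noise}}^2)/K$. Suppose $\Pr(|\widehat\phi-\phi|\le\varepsilon|\phi|)\ge c$ for a constant $c$, say $2/3$. A normal variable places probability at most $2\Phi(\varepsilon|\phi|/\sigma_K)-1$ in that window, so we need $\varepsilon|\phi|/\sigma_K\ge z_c$ for a constant $z_c>0$. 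Rearranging gives $K\ge z_c^2(\mathrm{Var}[\Delta_u]+2\sigma_{\text{noise}}^2)/(\varepsilon^2\phi(u)^2)$, which is exactly \eqref{eq:mc-lower-bound} with the implied constant made explicit.

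The main obstacle is making this lower bound rigorous without Gaussian asymptotics. Chebyshev goes in the wrong direction, so this step cannot be done with variance information alone. My first attempt would be to restrict to the noise-only contribution. Condition on the coalitions and take $\xi$ Gaussian. Then $\widehat\phi$ given the $S_k$ is Gaussian with variance at least $2\sigma_{\text{noise}}^2/K$, and the anticoncentration bound above applies exactly for any realization of the coalitions. This already yields the ``in particular'' statement $K=\Omega(\sigma_{\text{noise}}^2/(\varepsilon^2\phi(u)^2))$, which for fixed $\varepsilon$ diverges as $|\phi(u)|/\sigma_{\text{noise}}\to0$.

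For the full bound that includes $\mathrm{Var}[\Delta_u]$, the rigorous statement is only heuristic, justified by the CLT for large $K$. Alternatively, one can use a two-point (Le Cam) argument between games whose Shapley values differ by $2\varepsilon|\phi|$ while the distributions of their sampled marginals stay close in KL divergence. I would present the result as CLT-based, which matches the paper's use of $\gtrsim$.
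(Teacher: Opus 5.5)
Your proposal is correct. Its first half is the paper's argument made careful. The paper gets the variance identity in one line from ``the samples are i.i.d.''; your law-of-total-variance computation (noise independent of the sampled coalition, $\mathbb{E}[\eta_S\mid S]=0$) and your explicit fresh-noise convention supply the justification it omits.

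For necessity, the paper appeals to ``Chebyshev's inequality (or a Gaussian approximation)'', and you are right to discard the first option. Chebyshev only shows that $\mathrm{Var}[\widehat\phi(u)]\le\delta\,\varepsilon^2\phi(u)^2$ \emph{suffices} for success with probability $1-\delta$. An upper tail bound can never force $\mathrm{Var}[\widehat\phi(u)]\lesssim\varepsilon^2\phi(u)^2$, so only the anticoncentration route you take gives a lower bound.

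Your route also gives two things the paper lacks. First, conditioning on $S_1,\dots,S_K$ with Gaussian $\xi$ gives a rigorous proof of the ``in particular'' claim, $K\ge 2z_c^2\sigma_{\text{noise}}^2/(\varepsilon^2\phi(u)^2)$, where $z_c=\Phi^{-1}((1+c)/2)$ and $c$ is the target success probability. This step does not even need fresh noise: a coalition drawn $m_A$ times contributes conditional variance $2m_A^2\sigma_{\text{noise}}^2\ge 2m_A\sigma_{\text{noise}}^2$.

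Second, you correctly flag the full bound with $\mathrm{Var}[\Delta_u(S)]$ as only CLT-heuristic. Without an anticoncentration assumption it is actually false. Take zero noise and let $\Delta_u(S)=\phi(u)$ except on two coalition classes of $\pi_u$-mass $p/2$ each (with $p<1/3$), where it equals $\phi(u)+M$ and $\phi(u)-M$ respectively. One sample then succeeds with probability at least $1-p$, yet $\mathrm{Var}[\Delta_u(S)]/(\varepsilon^2\phi(u)^2)=pM^2/(\varepsilon^2\phi(u)^2)$ is unbounded in $M$. So stating that part as a Gaussian-approximation result, as you propose, is the accurate formulation.
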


\begin{proof}
Since the samples are i.i.d.,
\[
\mathrm{Var}[\widehat{\phi}(u)]
=
\frac{1}{K}\mathrm{Var}[\widetilde\Delta_u(S)]
=
\frac{\mathrm{Var}[\Delta_u(S)] + 2\sigma_{\text{noise}}^2}{K}.
\]
By Chebyshev’s inequality (or a Gaussian approximation), achieving
$|\widehat{\phi}(u)-\phi(u)| \le \varepsilon|\phi(u)|$ with constant probability requires
\[
\mathrm{Var}[\widehat{\phi}(u)] \;\lesssim\; \varepsilon^2 \phi(u)^2,
\]
which yields \eqref{eq:mc-lower-bound}.
\end{proof}

In low-dispersion regimes, $\Delta_u(S)$ has small variance, but the Shapley values
$\phi(u)$ typically shrink at a comparable or faster rate due to redundancy in
the game. As a result, the ratio $\mathrm{Var}[\Delta_u(S)]/\phi(u)^2$ remains large,
and any nonzero noise floor $\sigma_{\text{noise}}$ makes relative estimation
statistically ill-conditioned. This explains why permutation-based and KernelSHAP
estimators fail to recover meaningful attributions even when $v(S)$ appears smooth.

\subsection{Interaction indices}
\label{app:interactions}

We define the higher-order interaction indices used throughout this work.

\subsubsection{Second-order Shapley interaction index}
For a cooperative game $\nu : 2^V \to \mathbb{R}$, the \emph{Shapley interaction index}
between players $i, j \in V$ quantifies their joint contribution beyond their individual effects.
Following \citet{grabisch1999axiomatic}, the second-order interaction index is
\begin{equation}
\label{eq:interaction_index}
\phi_{ij}(\nu) =
\sum_{S \subseteq V \setminus \{i,j\}}
\frac{|S|!\,(n - |S| - 2)!}{(n-1)!}\,
\Delta_{ij}\nu(S),
\end{equation}
where $\Delta_{ij}\nu(S)$ is the discrete second-order difference
\[
\Delta_{ij}\nu(S)
=
\nu(S \cup \{i,j\})
-
\nu(S \cup \{i\})
-
\nu(S \cup \{j\})
+
\nu(S).
\]
A positive interaction index $\phi_{ij} > 0$ indicates complementarity (synergy),
while a negative value indicates redundancy.

\subsubsection{Multilinear extension characterization}
Interaction indices admit a multilinear-extension representation. In particular,
\[
\phi_{ij}(\nu)
=
\int_0^1
\frac{\partial^2 \widetilde{\nu}}{\partial z_i \partial z_j}(t, \ldots, t)\, dt.
\]
Since $\widetilde{\nu}$ is multilinear, the mixed partial derivative is multilinear
in the remaining $n-2$ variables. Restricting to $z = t\mathbf{1}$ yields a univariate
polynomial of degree at most $n-2$, which can be exactly integrated via the same
Vandermonde interpolation procedure used for first-order values.

\subsubsection{Higher-order interactions}
For a subset $T \subseteq V$ with $|T| = k$, the $k$-th order interaction index generalizes naturally:
\begin{equation}
\phi_T(\nu)
=
\sum_{S \subseteq V \setminus T}
\frac{|S|!\,(n - |S| - k)!}{(n-k+1)!}\,
\Delta_T \nu(S),
\end{equation}
where $\Delta_T \nu(S)$ is the $k$-th order discrete difference.
The multilinear-extension characterization extends to
\[
\phi_T(\nu)
=
c_k \int_0^1
\frac{\partial^k \widetilde{\nu}}{\partial z_T}(t\mathbf{1}) \, w_k(t)\, dt,
\]
where $c_k$ and $w_k$ depend only on the interaction definition.


\section{Experimental set up}
\label{app:experiments}

\paragraph{Reproducibility.}
All experiments use only public benchmark datasets and standard GNN architectures.
When reporting runtimes, we report wallclock measured on a single commodity GPU
and include the full timing breakdown (sampling / training / attribution) for transparency.

Unless stated otherwise, we explain a fixed test instance $(G,X)$ using the node-masking game in
\eqref{eq:graph_game}. Players are nodes $V(G)=\{1,\dots,n\}$.
Given a coalition $S\subseteq V(G)$, we construct a masked feature matrix $X^{(S)}$
by replacing features of excluded nodes with a baseline vector.
Our default baseline is the \emph{mean-feature baseline}:
\begin{equation}
\label{eq:mean_baseline}
x_i^{(S)} =
\begin{cases}
x_i & i\in S,\\
\bar{x}:=\frac{1}{n}\sum_{j=1}^n x_j & i\notin S.
\end{cases}
\end{equation}
The game value is the black-box model score on the masked graph:
\begin{equation}
\label{eq:value_function}
\nu(S) := f\big(G, X^{(S)}\big),
\end{equation}
where $f:\mathcal{G}\to\mathbb{R}$ is a trained graph-level predictor.
Unless otherwise stated, $f$ outputs the \emph{logit of the predicted class}
(where the predicted class is determined on the unmasked instance $(G,X)$).

For each explained graph, we train a surrogate $\hat{\nu}$ to approximate $\nu$
from $M$ teacher queries $\{(S_m,\nu(S_m))\}_{m=1}^M$ by minimizing MSE:
\begin{equation}
\label{eq:surrogate_mse}
\min_{\hat{\nu}\in\mathcal{H}} \frac{1}{M}\sum_{m=1}^M\big(\hat{\nu}(S_m)-\nu(S_m)\big)^2.
\end{equation}
We report surrogate fidelity using held-out coalition $R^2$ (train/val/test splits)
computed on the coalition-value regression problem.

Given a trained surrogate, we compute (i) O1 Shapley values and (ii) O2 interaction indices
via the multilinear extension and deterministic quadrature / interpolation, as described in
Section~\ref{sec:experiments}. For small graphs ($n\le 20$), we compute ground-truth
attributions via exhaustive enumeration over all $2^n$ coalitions and report attribution
fidelity using cosine similarity (and, where applicable, MAE and rank correlation).

We use two main budget regimes:
(i) \emph{low-data} training with $M=10n$ coalitions per graph (used for structure ablations and stress tests),
and (ii) \emph{accuracy-focused} training with $M$ scaling between $20n$ and $40n$ depending on $n$
(used for large-graph scalability experiments).
Each sample set includes $S=\emptyset$ and $S=V$.

\paragraph{Tensor contractions and implementation.}
All tensor network contractions are performed with \texttt{cotengra}~\citep{Gray2021hyperoptimized}.
Unless otherwise stated, we fix the random seed for coalition sampling and surrogate initialization
and report mean$\pm$std over graphs (and, when relevant, across repeated runs).

\section{Extended experimental details}
\label{app:experiments_extended}
This section provides additional experimental details, implementation specifics, and extended analyses supporting the results presented in Section~\ref{sec:experiments}. We describe dataset preprocessing, model configurations, surrogate training procedures, evaluation protocols, and scalability settings that were omitted from the main text due to space constraints, and include supplementary figures and tables referenced therein.

\subsection{Correctness on Small Graphs (extended)}
\label{app:small_correctness_extended}

We validate \method{} against exact enumeration on graphs where exhaustive computation of all $2^n$
coalitions is feasible ($n\le 20$). This provides ground-truth O1 Shapley values and O2 interactions.

\paragraph{Data and protocol.}
We draw small graphs from Mutagenicity~\citep{debnath1991structure} by filtering to instances with $n\le 20$.
For each explained graph:
(i) we compute $\nu(S)$ for all $S\subseteq V$ by enumerating $2^n$ coalitions;
(ii) we train a TN surrogate from $M$ teacher queries (budget as specified in the experiment);
(iii) we recover O1 and O2 from the trained surrogate without additional teacher queries.

We report cosine similarity between surrogate-derived and exact attributions, averaged over nodes (O1)
and over pairs (O2), along with mean$\pm$std over graphs.
We also report surrogate fidelity (test $R^2$) and relate it to attribution accuracy.

For each graph, the TN surrogate is trained once using $\mathcal{O}(nm)$ teacher queries
to cover all players and interaction probes (up to order~2).
After training, Shapley values and interactions are computed
without additional teacher queries. Table~\ref{tab:smallgraph_correctness}
compares all methods. We omit O2 results for GraphSVX since the method is designed for first-order Shapley attributions and does not support higher-order interaction indices.



\subsection{Query Efficiency and Baseline Comparison}
\label{sec:query-efficiency}

We compare TN-SHAP-G against permutation sampling and SHAP-IQ~\citep{fumagalli2024shapiq}, evaluating on small graphs ($n \leq 20$) from Mutagenicity where exact enumeration provides ground truth.

\begin{figure}[t]
\centering
\begin{subfigure}[b]{0.49\columnwidth}
    \includegraphics[width=\textwidth]{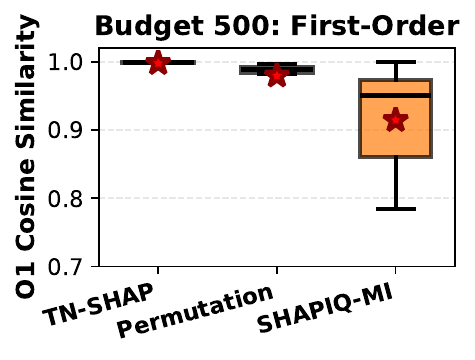}
    \caption{O1 distribution}
    \label{fig:o1-box}
\end{subfigure}
\hfill
\begin{subfigure}[b]{0.49\columnwidth}
    \includegraphics[width=\textwidth]{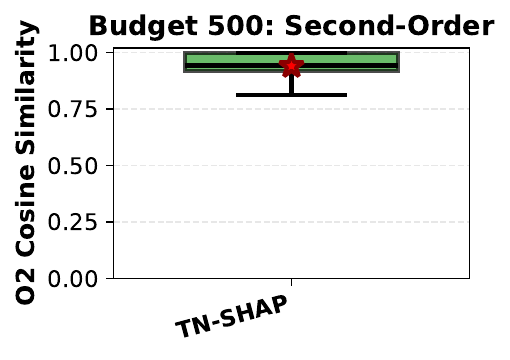}
    \caption{O2 distribution}
    \label{fig:o2-box}
\end{subfigure}
\caption{\textbf{Accuracy at budget 500.} TN-SHAP-G has higher median and $3{-}5\times$ lower variance than sampling methods.}
\label{fig:boxplots}
\end{figure}

\begin{figure}[t]
\centering
\begin{subfigure}[b]{0.49\columnwidth}
    \includegraphics[width=\textwidth]{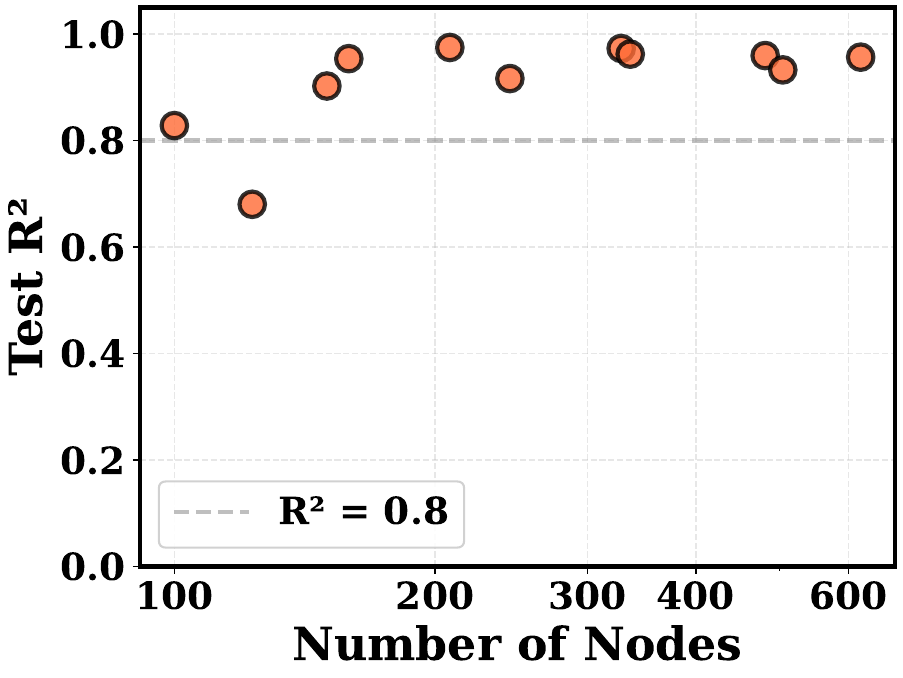}
    \caption{Test $R^2$ vs Number of Nodes ($100–600$ range)}
    \label{fig:scalability_R_2}
\end{subfigure}
\hfill
\begin{subfigure}[b]{0.49\columnwidth}
    \includegraphics[width=\textwidth]{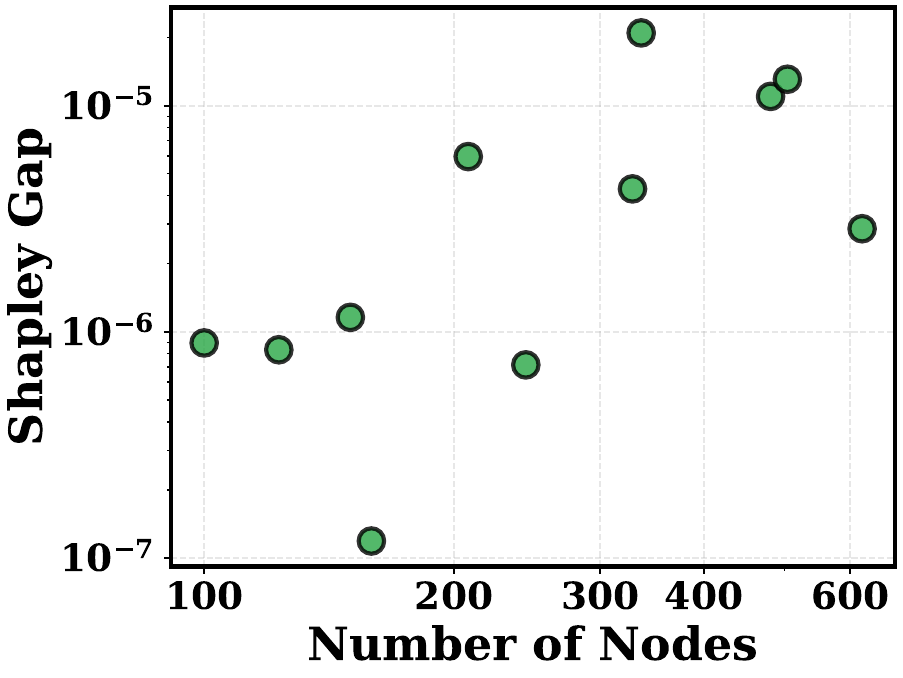}
    \caption{Shapley Gap vs Number of Nodes (log scale, $10^{-7}$ to $10^{-5}$)}
    \label{fig:scalability-gap}
\end{subfigure}
\caption{\textbf{Scalability on PROTEINS.} T(a) Surrogate test $R^2$ remains above 0.8 across graph sizes up to 600 nodes. (b) Shapley efficiency gap $|\sum_{i} \phi_i - (\nu(V) - \nu(\emptyset))|$ stays small ($10^{-7}$--$10^{-5}$), confirming numerical stability of the deterministic recovery.}
\label{fig:scalability_r_2_gap}
\end{figure}
Figure~\ref{fig:boxplots} shows accuracy distributions at budget 500.
TN-SHAP-G exhibits $3{-}5\times$ smaller interquartile range than sampling methods, as Shapley extraction is deterministic once the surrogate is trained.

\begin{table}[t]
\centering
\caption{\textbf{Query budget for $>0.95$ cosine.}}
\label{tab:query-comparison}
\small
\begin{tabular}{lccc}
\toprule
\textbf{Method} & \textbf{O1} & \textbf{O2} & \textbf{Myerson} \\
\midrule
Exact & $2^n$ & $2^n$ & $2^n$ \\
Permutation & 500 & --- & --- \\
SHAP-IQ & 5000 & 5000 & --- \\
\textbf{TN-SHAP-G} & \textbf{50} & \textbf{+0}$^\dagger$ & \textbf{+0}$^\dagger$ \\
\bottomrule
\end{tabular}

\vspace{1mm}
{\footnotesize $^\dagger$Same surrogate; no additional queries.}
\end{table}

We compare against two representative model-agnostic estimators:
\begin{itemize}[nosep,leftmargin=*]
\item \textbf{Permutation sampling}~\citep{castro2009polynomial} for O1 (and, when applicable, for O2 via the
standard inclusion--exclusion marginal on sampled permutations).
\item \textbf{SHAP-IQ}~\citep{fumagalli2024shapiq} as a recent interaction-focused estimator.
\end{itemize}

All methods are evaluated on small graphs ($n\le 20$) from Mutagenicity where exact enumeration provides ground truth.
We report cosine similarity versus teacher-query budget. Budgets are matched as closely as possible across methods
(counting a single $\nu(S)$ evaluation as one teacher query).

Table~\ref{tab:query-comparison} summarizes query requirements across methods and orders.


\begin{figure}[t]
    \centering
    \begin{minipage}{0.32\linewidth}
        \includegraphics[width=\linewidth]{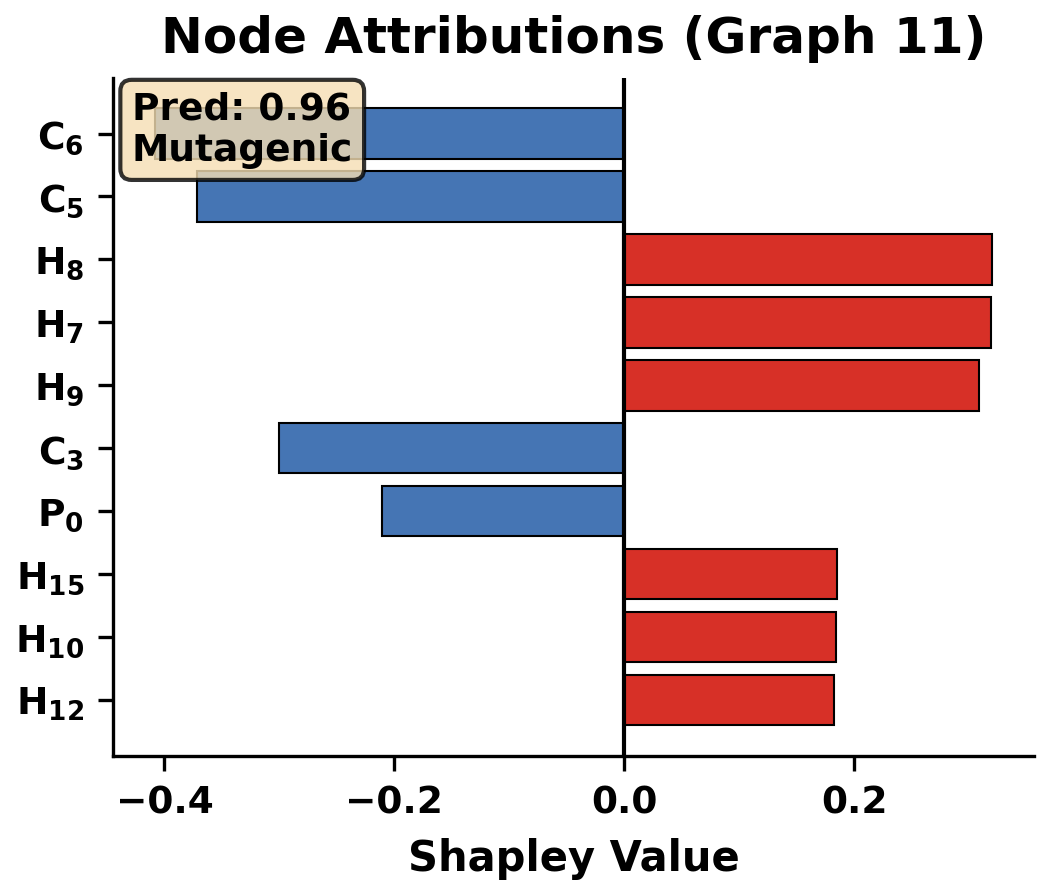}
    \end{minipage}
    \hfill
    \begin{minipage}{0.32\linewidth}
        \includegraphics[width=\linewidth]{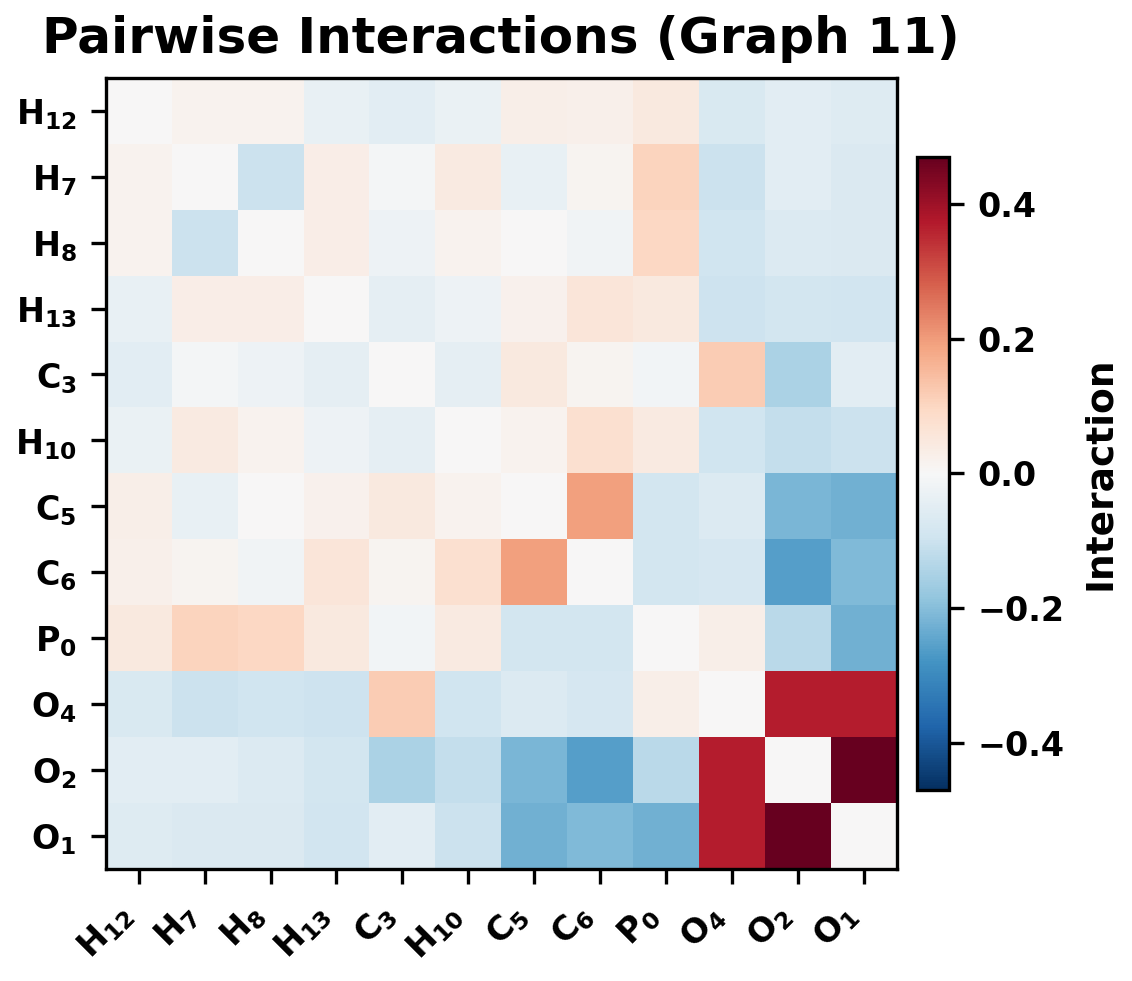}
    \end{minipage}
    \hfill
    \begin{minipage}{0.32\linewidth}
        \includegraphics[width=\linewidth]{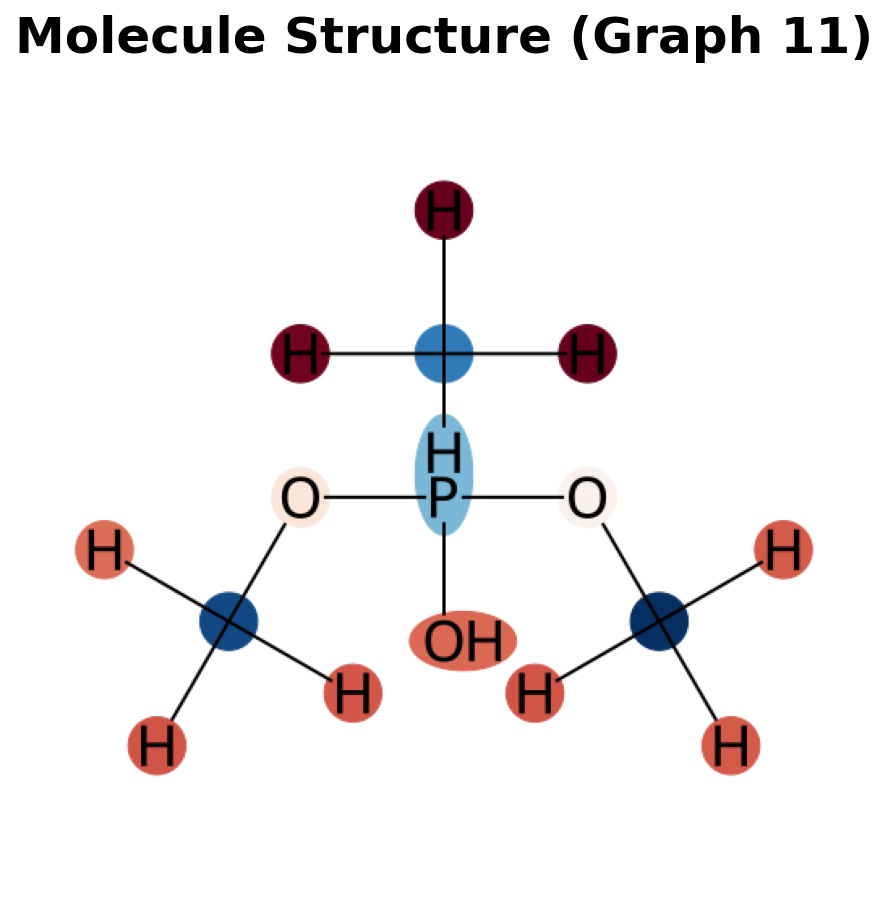}
    \end{minipage}
    \\[4pt]
    \begin{minipage}{0.32\linewidth}
        \includegraphics[width=\linewidth]{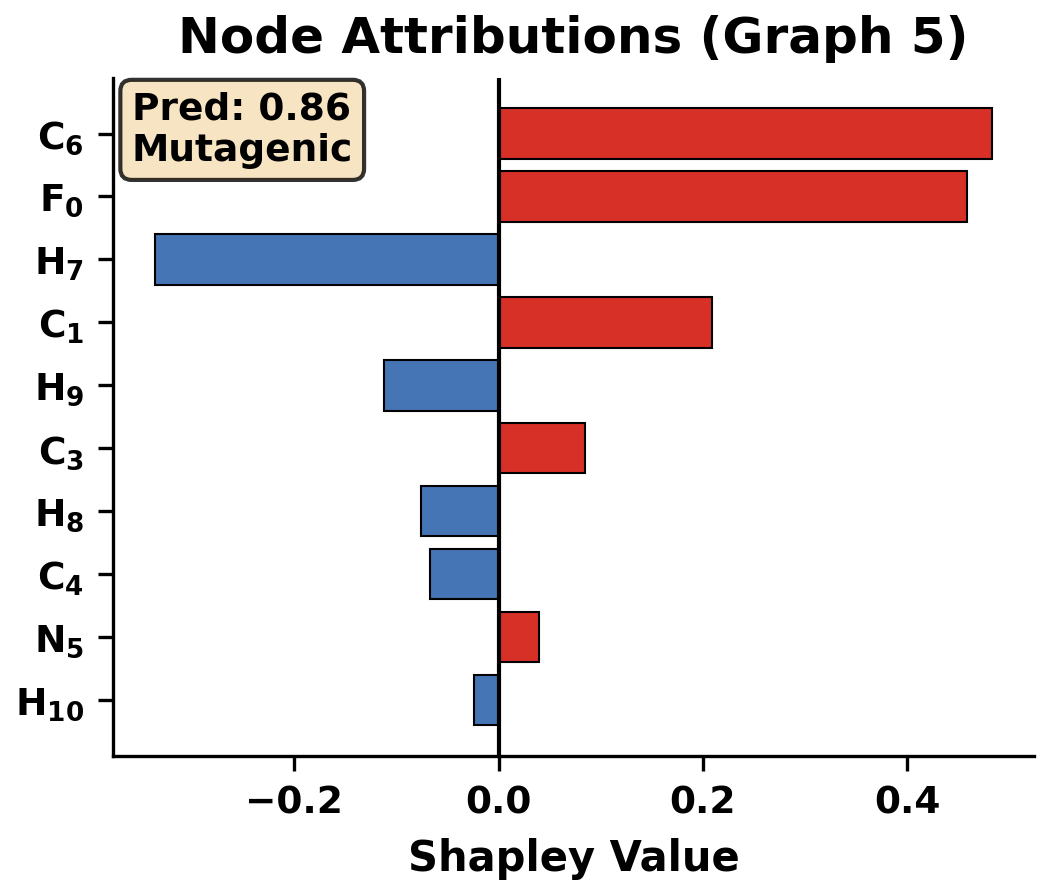}
    \end{minipage}
    \hfill
    \begin{minipage}{0.32\linewidth}
        \includegraphics[width=\linewidth]{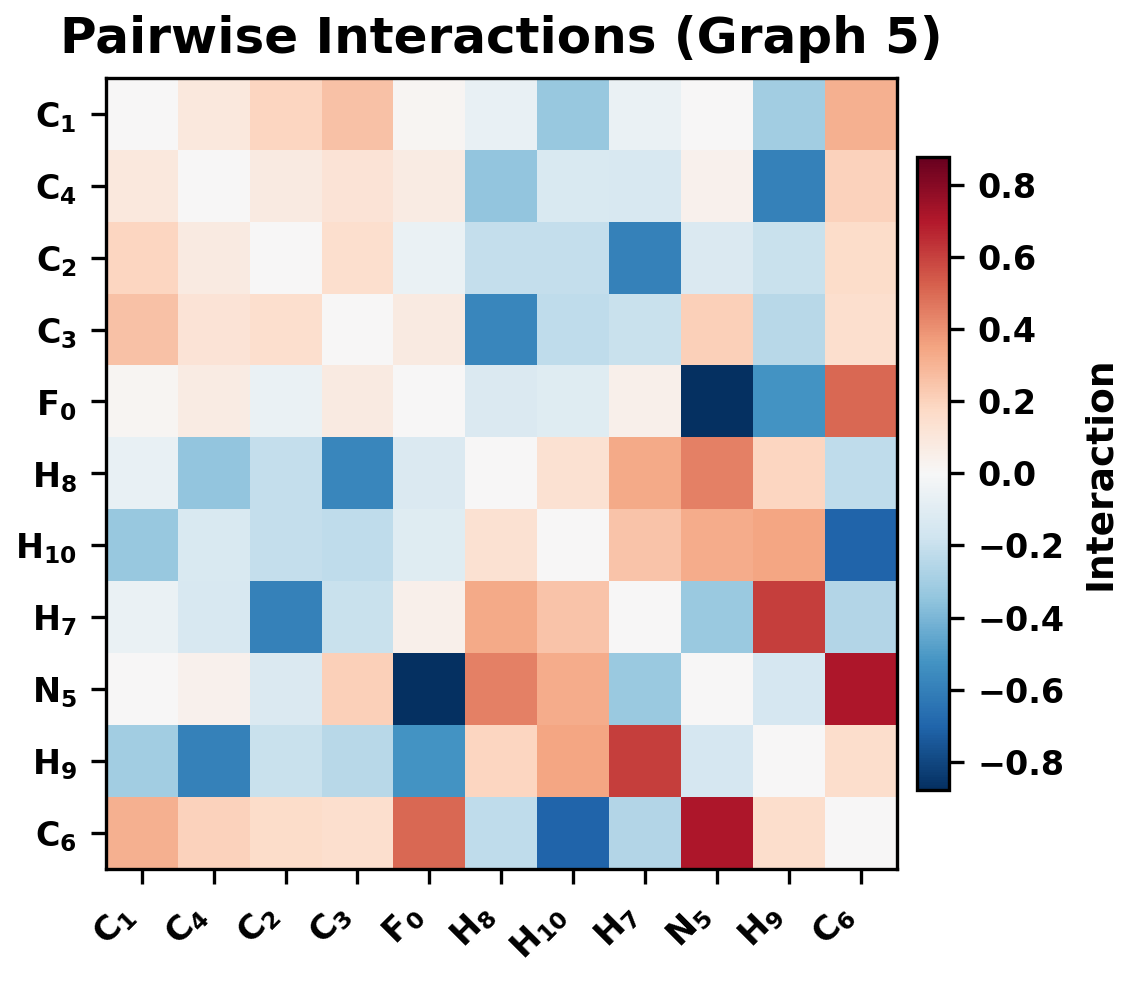}
    \end{minipage}
    \hfill
    \begin{minipage}{0.32\linewidth}
        \includegraphics[width=\linewidth]{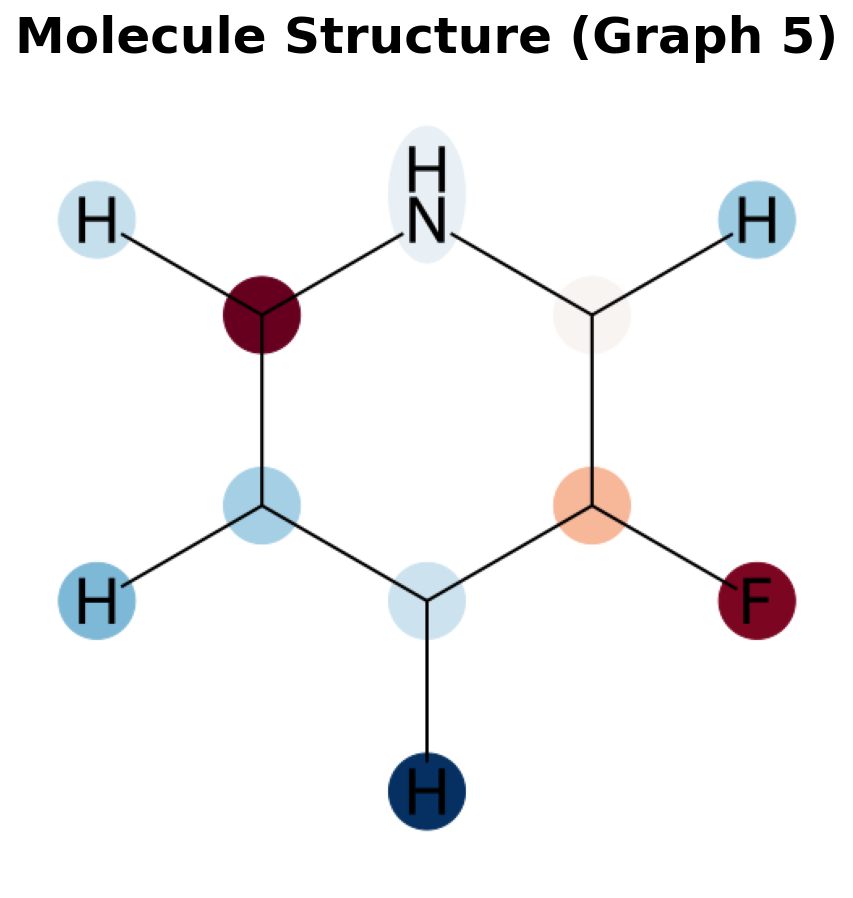}
    \end{minipage}
    \caption{TN-SHAP-G explanations on Mutagenicity.
\textbf{Top:} Organophosphorus compound (prediction 0.96).
\textbf{Bottom:} Fluoroaniline (prediction 0.86).
\textbf{Left:} node-level Shapley values.
\textbf{Center:} pairwise Shapley interaction matrix.
\textbf{Right:} molecular graph colored by node attribution.}

    \label{fig:qualitative}
\end{figure}

\subsection{Qualitative Explanations on a Graph Transformer}
\label{app:qualitative_extended}

We present a qualitative case study demonstrating that TN-SHAP-G applies beyond
message-passing GNNs by explaining predictions of a graph transformer
(Graphormer-style) model on Mutagenicity. The goal is to illustrate the method’s
generality and the qualitative structure of the resulting node- and
interaction-level attributions, rather than to optimize predictive performance.

\subsubsection{Black-box model}

We use a simplified Graphormer-style architecture~\citep{ying2021transformers}
with linear node embeddings ($d{=}128$), a learnable \texttt{[CLS]} token for
graph-level readout, and shortest-path distance (SPD) embeddings added as an
attention bias. The model consists of 4 transformer layers with 8 attention
heads, feedforward width 512, and dropout 0.1. Classification is performed by an
MLP applied to the \texttt{[CLS]} representation. The model is trained on
Mutagenicity using an 80/10/10 train/validation/test split with AdamW
(lr $10^{-4}$, weight decay $10^{-2}$) and cosine learning-rate decay over
200 epochs.

\subsubsection{Game definition and surrogate training}

We define a soft node-masking game compatible with the multilinear extension by
applying continuous masks $m_i\in[0,1]$ at the input embedding level:
\[
h_i^{(m)} = m_i \cdot \text{Embed}(x_i).
\]
The game value is the logit of the predicted class on the masked graph. For each
selected molecule, we train a TN surrogate from 500 stratified coalition samples,
using bond dimension $\chi=8$ and early stopping. Only surrogates with validation
$R^2 \ge 0.90$ are retained for attribution.

\subsubsection{Attribution computation and visualization}

From the trained surrogate, we compute O1 Shapley values via one-dimensional
quadrature and O2 interaction indices via two-dimensional quadrature, as
described in Appendix~\ref{app:interactions}. Attributions are
visualized using three complementary views: (i) bar plots of node Shapley
values, (ii) heatmaps of pairwise interactions among the most influential nodes,
and (iii) 2D molecular renderings colored by node attribution (generated with
RDKit~\citep{rdkit}).

\subsubsection{Qualitative observations}

Across representative Mutagenicity molecules, high-magnitude node attributions
and strong interactions concentrate on chemically meaningful functional groups
(e.g., nitro and aromatic motifs), consistent with known mutagenicity mechanisms.
Compared to GIN-based explanations (Appendix~\ref{app:qualitative_extended}),
Graphormer explanations tend to be more spatially distributed and exhibit more
long-range interactions, reflecting the model’s global self-attention and
structural bias. This suggests that TN-SHAP-G captures model-specific reasoning
encoded in the induced game, rather than producing architecture-agnostic
saliency patterns.
Figure~\ref{fig:qualitative} shows representative TN-SHAP-G explanations on two
Mutagenicity molecules. The left panels report node-level Shapley values, the
center panels show pairwise interaction strengths, and the right panels visualize
node attributions on the molecular graph.

\subsection{Structure ablation details}
\label{app:ablations}

\begin{figure}[t]
  \centering
  \includegraphics[width=\linewidth]{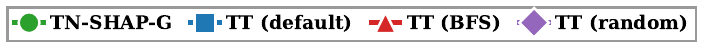}
  \vspace{2mm}

  \begin{minipage}[t]{0.49\linewidth}
    \centering
    \includegraphics[width=\linewidth]{figs/o1_correlation.pdf}
    \caption*{\small (a) O1 cosine similarity}
  \end{minipage}\hfill
  \begin{minipage}[t]{0.49\linewidth}
    \centering
    \includegraphics[width=\linewidth]{figs/o2_correlation.pdf}
    \caption*{\small (b) O2 cosine similarity}
  \end{minipage}

  \vspace{2mm}

  \begin{minipage}[t]{0.49\linewidth}
    \centering
    \includegraphics[width=\linewidth]{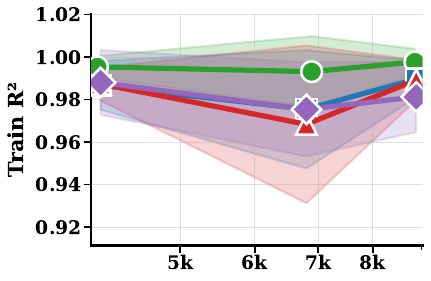}
    \caption*{\small (c) Train $R^2$}
    \label{fig:train_r2_param}
  \end{minipage}\hfill
  \begin{minipage}[t]{0.49\linewidth}
    \centering
    \includegraphics[width=\linewidth]{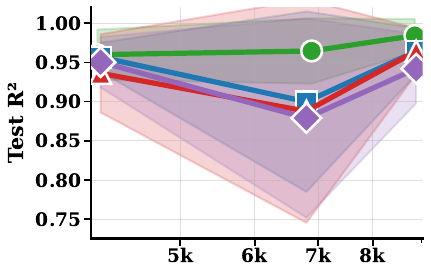}
    \caption*{\small (d) Test $R^2$}
    \label{fig:test_r2_param}
  \end{minipage}

  \caption{\textbf{Structure ablation with parameter-matched surrogates.}
  GA-TN (TN-SHAP-G) consistently achieves higher surrogate fidelity (Train/Test $R^2$)
  and higher attribution recovery (O1/O2 cosine similarity) than TT surrogates across
  different orderings at comparable parameter budgets.}
  \label{fig:param_matched_structure_ablation}
\end{figure}

We compare:
(i) \textbf{graph-aligned TN surrogates (GA-TN)} whose factorization follows the input graph topology, and
(ii) \textbf{tensor-train (TT/MPS)} surrogates which impose a chain topology over an ordering of nodes.

To control for capacity, we match surrogates by total parameter count.
For TT, we evaluate multiple node orderings: \texttt{default}, \texttt{bfs}, and \texttt{random}. Where default ordering refers to Simplified Molecular Input Line Entry System
(SMILES)~\citep{smile} and BFS stands for Breadth First Search We observe that SMILE ordering outperform random and BFS in all mutagenicity experiments using tensor trains.
All surrogates are trained under the same low-data budget of $10n$ coalition queries per graph.

GA-TN consistently yields higher test $R^2$ and higher attribution recovery (O1 and O2).
TT surrogates additionally exhibit sensitivity to node ordering, reflected in broader variance bands
in Fig.~\ref{fig:param_matched_structure_ablation}.


\begin{figure}[t]
\centering
\begin{subfigure}[b]{0.48\columnwidth}
    \centering
    \includegraphics[width=\textwidth]{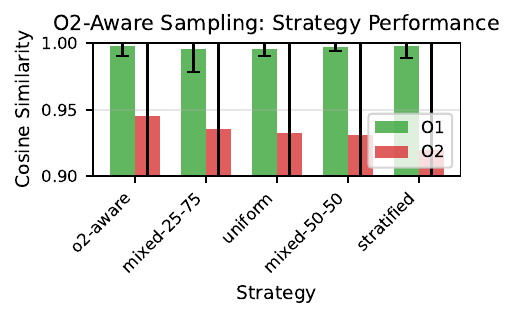}
    \caption{Sampling strategy comparison}
    \label{fig:ablation-sampling}
\end{subfigure}
\hfill
\begin{subfigure}[b]{0.48\columnwidth}
    \centering
    \includegraphics[width=\textwidth]{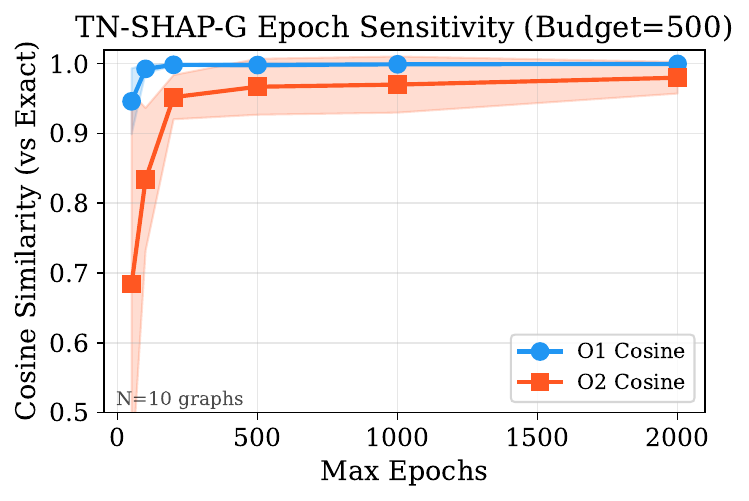}
    \caption{Epoch sensitivity}
    \label{fig:ablation-epochs}
\end{subfigure}
\caption{\textbf{Ablation studies.}
(a)~O2-aware sampling achieves highest accuracy for both O1 and O2 indices by preferentially sampling intermediate-size coalitions.
(b)~O1 Shapley values saturate at ${\sim}100$ epochs while O2 interactions require ${\sim}500$ epochs (budget: 500 samples, $N{=}10$ graphs).}
\label{fig:ablations}
\end{figure}

\begin{table}[t]
\caption{GA-TN advantage over the best TT variant (O2 correlation with ground truth).}
\label{tab:structure_ablation}
\centering
\small
\setlength{\tabcolsep}{4pt}
\begin{tabular}{@{}lccc@{}}
\toprule
$\#$params & GA-TN O2 & Best TT O2 & Gap \\
\midrule
$\sim$4100  & $0.852 \pm 0.110$ & $0.806 \pm 0.102$ & \textbf{+4.6\%} \\
$\sim$6900  & $0.931 \pm 0.061$ & $0.778 \pm 0.149$ & \textbf{+15.3\%} \\
$\sim$8900  & $0.954 \pm 0.046$ & $0.878 \pm 0.081$ & \textbf{+7.6\%} \\
\bottomrule
\end{tabular}
\end{table}

\subsection{Experimental setup (full hyperparameters)}
\label{app:exp_setup_full}

\paragraph{Datasets and splits.}
We evaluate on four graph classification benchmarks spanning molecular and protein domains.
Table~\ref{tab:dataset-stats} summarizes dataset statistics.
When training black-box predictors, we use standard train/val/test splits
and select the best checkpoint by validation performance.

For Mutagenicity and PROTEINS, we use a 3-layer Graph Isomorphism Network (GIN)~\citep{xu2019powerful}
with 64 hidden channels, BatchNorm, dropout $p{=}0.5$, and global add pooling.
Each GIN layer uses a 2-layer MLP: Linear$\to$ReLU$\to$Linear.
For Benzene, we follow common practice in GraphXAI-style settings and use a standard GCN~\citep{kipf2016semi}
(with pretrained checkpoints when available).

For graphs in the enumeration and moderate regimes (typically $n\le 50$ unless stated otherwise),
the TN surrogate mirrors the graph topology: each node $i$ corresponds to a tensor $T_i$
with one physical dimension and bond dimension $\chi\in\{2,4,6,8\}$ connecting to neighbors.
We train by sampling $M$ coalitions stratified by size (details below), minimizing MSE with AdamW
(lr $10^{-3}$, weight decay $10^{-5}$, 500--2000 epochs).
For larger graphs ($n>30$) in this regime, we cap the number of coalitions to 512--2048 to limit compute.

We use stratified sampling over coalition sizes to cover both sparse and dense coalitions.
Concretely, we sample coalition sizes $|S|$ uniformly from $\{0,1,\dots,n\}$
(or from a truncated range when evaluating graph-restricted games),
then sample $S$ uniformly among subsets of that size.
We always include $S=\emptyset$ and $S=V$ and de-duplicate coalitions.

\paragraph{Deterministic Shapley / interaction recovery.}
Given a trained surrogate, we compute:
\begin{itemize}[nosep,leftmargin=*]
\item \textbf{O1 Shapley} via 1D quadrature on the multilinear extension,
using $m=16$ quadrature nodes (Chebyshev or Gauss--Legendre; both give stable results in our setting).
\item \textbf{O2 interactions} analogously via mixed partials of the multilinear extension,
with the same quadrature resolution and edge-restriction when evaluating interaction maps over graphs.
\end{itemize}
These procedures incur \emph{no additional teacher queries}: they only call the surrogate.

Across experiments, we report:
(i) train/val/test $R^2$ for surrogate fidelity,
(ii) cosine similarity to ground truth (enumeration) for O1/O2 correctness when tractable,
(iii) Shapley efficiency gap $|\sum_{i=1}^n \phi_i - (\nu(V)-\nu(\emptyset))|$,
and (iv) wallclock time decomposed into sampling, training, and attribution phases.

\subsection{PROTEINS Dataset Experiments}
\label{sec:proteins-experiments}

We evaluate on the PROTEINS dataset from the TU Dortmund graph benchmark collection~\citep{borgwardt2005protein,debnath1991structure}.
The dataset contains 1,113 protein structure graphs where nodes represent secondary structure elements (SSEs)
and edges connect neighboring SSEs in 3D space or along the amino-acid sequence.
Each node has 3 features encoding SSE attributes.
The task is binary graph classification, with graph sizes ranging from 4 to 620 nodes.

\paragraph{Black-box model.}
We use a GIN~\citep{xu2019powerful} classifier with 3 GIN layers, 64 hidden channels,
BatchNorm, ReLU activations, dropout $p=0.5$, and global add pooling.
Each GIN layer uses a 2-layer MLP:
$\text{Linear}(d_{\text{in}},64)\rightarrow \text{ReLU}\rightarrow \text{Linear}(64,64)$.

\subsection{Scaling}
\label{app:scaling}
To evaluate scalability across graph sizes, we stratify graphs into 5 size bins:
[100--149], [150--199], [200--299], [300--399], and [400--620] nodes.
We select 2 graphs per bin (10 total), chosen to be evenly spaced within each bin.
Figure~\ref{fig:scalability_r_2_gap} shows that surrogate fidelity ($R^2 > 0.8$) and Shapley efficiency (gap $< 10^{-5}$) are maintained as graph size increases to 600 nodes.

For large graphs, we use the coarsening-based surrogate described in
Section~\ref{sec:scalability-experiments}. We partition nodes into $K$ supernodes,
compute $d$ region channels, and train a TN over the coarsened graph.
Hyperparameters are adapted by graph size to manage computational cost:

\begin{table}[t]
\centering
\caption{TN surrogate hyperparameters by graph size for PROTEINS scalability experiments.}
\label{tab:tn-hyperparams}
\small
\setlength{\tabcolsep}{4pt}
\begin{tabular}{lcccc}
\toprule
Parameter & $n \leq 150$ & $150 < n \leq 250$ & $250 < n \leq 400$ & $n > 400$ \\
\midrule
Supernodes $K$ & 18 & 16 & 14 & 12 \\
Channels $d$ & 8 & 6 & 4 & 4 \\
Bond dimension $\chi$ & 6 & 4 & 4 & 3 \\
Epochs & 200 & 250 & 300 & 400 \\
Batch size & 256 & 128 & 64 & 32 \\
\bottomrule
\end{tabular}
\end{table}

\noindent All configurations use AdamW with learning rate $10^{-3}$ and weight decay $10^{-4}$,
a OneCycleLR schedule (20\% warmup, cosine annealing), early stopping with patience 150,
and gradient clipping (max norm 1.0).

We use stratified coalition sampling with budgets scaling with graph size:
$20n$ samples for $n \leq 150$, $30n$ for $150 < n \leq 300$, and $40n$ for $n > 300$.
We split sampled coalitions into 70\%/15\%/15\% train/val/test.
Each split contains both $\emptyset$ and $V$.

We apply node masking using the mean-feature baseline in \eqref{eq:mean_baseline}.
The value $\nu(S)$ is the black-box logit for the predicted class (argmax on $(G,X)$).

We compute O1 Shapley values from the surrogate using 16-point Gauss--Legendre quadrature:
\begin{equation}
\label{eq:proteins_shapley_quad}
\phi_i
=\int_0^1 \Big[ \hat{\nu}(m_i{=}1,m_{-i}{=}t) - \hat{\nu}(m_i{=}0,m_{-i}{=}t) \Big] dt
\approx \sum_{k=1}^{16} w_k \Big[ \hat{\nu}(m_i{=}1,m_{-i}{=}t_k) - \hat{\nu}(m_i{=}0,m_{-i}{=}t_k) \Big].
\end{equation}

For each graph we report:
(i) train/val/test $R^2$,
(ii) Shapley efficiency gap $|\sum_i \phi_i-(\nu(V)-\nu(\emptyset))|$, and
(iii) wallclock time decomposed into sampling / training / Shapley computation.
Results are summarized in the scaling plots in the main text
(Fig.~\ref{fig:scaling_time2}) and the extended scaling section below.

\subsection{Datasets summury}
\label{sec:datasets}

We evaluate on four graph classification datasets spanning molecular chemistry and protein structure.
Table~\ref{tab:dataset-stats} summarizes dataset statistics (node ranges and feature dimensions are dataset-intrinsic).
Treewidth is computed via a standard min-degree heuristic on the unweighted graph.

\begin{table}[t]
\centering
\caption{Dataset statistics. Treewidth is computed via the min-degree heuristic.}
\label{tab:dataset-stats}
\small
\setlength{\tabcolsep}{4pt}
\begin{tabular}{lcccccc}
\toprule
\textbf{Dataset} & \textbf{\#Graphs} & \textbf{Node Range} & \textbf{Avg.\ Nodes} & \textbf{Node Feat.} & \textbf{Classes} & \textbf{Treewidth} \\
\midrule
Mutagenicity & 4,337 & 4--417 & $\sim$30 & 14 & 2 & 1--6 \\
Benzene & 12,000 & 9--36 & $\sim$17 & 14 & 2 & 2--4 \\
PROTEINS & 1,113 & 4--620 & $\sim$39 & 3 & 2 & 2--8 \\
\bottomrule
\end{tabular}
\end{table}

\subsubsection{Mutagenicity Dataset}
Mutagenicity~\citep{debnath1991structure} contains 4,337 molecular graphs
(atoms as nodes, bonds as edges) labeled by mutagenicity.

Mutagenicity contains both small graphs (enabling exact enumeration) and mid-sized graphs
(enabling scaling tests), making it suitable for correctness and efficiency evaluation.

\subsubsection{Benzene Dataset}
Benzene from GraphXAI~\citep{benzene} contains 12,000 synthetic molecular graphs.
The task is to detect whether a benzene ring substructure is present.

\subsubsection{PROTEINS Dataset}
PROTEINS~\citep{proteins} contains 1,113 protein graphs
with node features encoding SSE attributes and binary labels.

The wide node range up to 620 allows stress-testing the scalability of the surrogate and
the deterministic Shapley recovery pipeline.

\subsection{Black-Box Models}
\label{sec:blackbox_models}

\paragraph{GIN for Mutagenicity/PROTEINS.}
We use a 3-layer GIN~\citep{xu2019powerful} with 64 hidden channels, BatchNorm,
dropout $p{=}0.5$, and global add pooling. Each layer uses a 2-layer MLP:
Linear$\to$ReLU$\to$Linear.

\paragraph{GCN for Benzene.}
Following common practice~\citep{agarwal2023evaluating}, we use a 3-layer GCN~\citep{kipf2016semi}
with 128 hidden channels and a standard graph-level readout.

\subsection{Experimental Protocol}
\label{sec:experimental_protocol}

We use mean-feature node masking \eqref{eq:mean_baseline} and define the game value by the predicted-class logit.

\paragraph{Reported metrics.}
We report:
\begin{itemize}[nosep,leftmargin=*]
    \item \textbf{Surrogate quality}: Train/val/test $R^2$
    \item \textbf{Correctness}: Cosine similarity (and optional MAE / rank correlation) vs.\ ground truth when tractable
    \item \textbf{Efficiency gap}: $|\sum_i \phi_i - (\nu(V)-\nu(\emptyset))|$
    \item \textbf{Runtime}: Wallclock time for sampling, training, and inference (separately)
\end{itemize}

\subsection{Scalability Experiments}
\label{sec:scalability-experiments}

We evaluate computational scalability on graphs of increasing size, measuring surrogate fit quality and wallclock time.

\subsubsection{Coarsening-Based Scaling for Large Graphs}
For large graphs ($n\gtrsim 100$), direct training of a full graph-aligned TN can become expensive.
We therefore introduce a coarsening-based approach that replaces $n$ nodes by $K\ll n$ regions (supernodes),
trains a surrogate over the coarsened graph, and then maps region-level attributions back to nodes.

\paragraph{Method overview.}
\begin{enumerate}[nosep,leftmargin=*]
    \item \textbf{Graph coarsening:} partition nodes into $K$ regions via spectral clustering on the adjacency matrix.
    \item \textbf{Region channels:} compute region-channel features from node masks:
    \begin{equation}
    z_{r,k} = \sum_{i \in \mathrm{region}_r} A_{r,i,k}\, m_i,
    \quad r \in [K],\; k \in [d].
    \end{equation}
    \item \textbf{Coarsened TN:} train a TN surrogate on the coarsened graph $G_c$ mapping $\{z_{r,:}\}_{r=1}^K$ to $\mathbb{R}$.
\end{enumerate}

After computing Shapley values for region-channel features, we map them back to nodes:
\begin{equation}
\phi_i = \sum_{k=1}^d A_{r(i),i,k}\, \phi_{z}[r(i),k],
\end{equation}
where $r(i)$ denotes the region containing node $i$.

\subsubsection{PROTEINS Scalability Experiment}
We apply the above scalable surrogate to PROTEINS (Section~\ref{sec:proteins-experiments}).
The experiment configuration is summarized in Table~\ref{tab:proteins-scalability-config},
and the resulting training / attribution times appear in Fig.~\ref{fig:scaling_time2}.

\begin{table}[t]
\centering
\caption{PROTEINS scalability experiment configuration (coarsening-based surrogate).}
\label{tab:proteins-scalability-config}
\small
\setlength{\tabcolsep}{4pt}
\begin{tabular}{lccccc}
\toprule
\textbf{Bin} & \textbf{Node Range} & \textbf{Supernodes $K$} & \textbf{Channels $d$} & \textbf{Bond $\chi$} & \textbf{Budget} \\
\midrule
Small & 100--149 & 18 & 8 & 6 & $20n$ \\
Medium & 150--199 & 16 & 6 & 4 & $30n$ \\
Large & 200--299 & 14 & 4 & 4 & $30n$ \\
Very Large & 300--399 & 14 & 4 & 4 & $40n$ \\
Huge & 400--620 & 12 & 4 & 3 & $40n$ \\
\bottomrule
\end{tabular}
\end{table}

\subsubsection{Second-order (O2) scaling experiment}
To keep O2 tractable on larger graphs, we restrict O2 computations to adjacent node pairs (edges),
reducing the number of interaction terms from $\binom{n}{2}$ to $|E|$.

\subsubsection{Timing analysis and comparison}
We report timing in three parts:
(i) coalition sampling / teacher querying, (ii) surrogate training, and (iii) surrogate-based attribution.
This ensures fair comparison to baselines whose costs scale mainly with teacher queries.

\subsection{Comparison with GraphSHAP-IQ}
\label{app:graphshapiq_comparison}

We compare TN-SHAP-G with GraphSHAP-IQ~\citep{muschalik2025exact}, a recent method for
computing exact Shapley interaction indices on graph neural networks.
GraphSHAP-IQ leverages the locality of GNN computations by restricting
interaction computations to coalitions within each node’s $L$-hop receptive
field, where $L$ is the depth of the GNN.

Under this formulation, GraphSHAP-IQ computes exact Shapley values by enumerating
all coalitions within each node’s local neighborhood. For a node whose
$L$-hop neighborhood contains $k$ nodes, this requires $\mathcal{O}(2^k)$ model
evaluations. Aggregated across all nodes, the total computational budget scales
as $\mathcal{O}(n \cdot 2^{k_{\max}})$, where $k_{\max}$ denotes the maximum
neighborhood size over all nodes. While this is substantially more efficient than
na\"ive $\mathcal{O}(2^n)$ enumeration, the exponential dependence on
$k_{\max}$ remains a fundamental bottleneck, particularly for graphs with dense
local structure.

Table~\ref{tab:graphshapiq_budget} reports the empirical relationship between
maximum neighborhood size and computational budget on Mutagenicity graphs.
The observed budget closely follows the expected $\mathcal{O}(n \cdot
2^{k_{\max}})$ scaling. As $k_{\max}$ increases beyond 15--17 nodes, runtime and
memory usage grow rapidly, leading to practical infeasibility.

\begin{table}[h]
\centering
\caption{GraphSHAP-IQ budget versus maximum neighborhood size on Mutagenicity.
The budget grows exponentially with $k_{\max}$, consistent with
$\mathcal{O}(n \cdot 2^{k_{\max}})$ scaling.}
\label{tab:graphshapiq_budget}
\begin{tabular}{rrrrrr}
\toprule
\textbf{Nodes} & \textbf{$k_{\max}$} & \textbf{Budget} & \textbf{$2^{k_{\max}}$} & \textbf{Runtime (s)} \\
\midrule
8  & 8  & 256       & 256        & 1.4 \\
12 & 12 & 4{,}096   & 4{,}096    & 3.6 \\
16 & 10 & 2{,}607   & 1{,}024    & 1.7 \\
36 & 13 & 17{,}037  & 8{,}192    & 12.7 \\
41 & 16 & 86{,}319  & 65{,}536   & 171.1 \\
47 & 17 & 286{,}782 & 131{,}072  & 928.2 \\
\bottomrule
\end{tabular}
\end{table}

In practice, GraphSHAP-IQ becomes infeasible when graphs exhibit dense
$L$-hop neighborhoods. Graphs with $k_{\max} > 16$ require hundreds of thousands
of model evaluations, often exceeding 15 minutes per instance, and the associated
M\"obius representations demand $\mathcal{O}(2^{k_{\max}})$ memory per node. As a
result, graphs with more than $\sim$60 nodes are skipped entirely in our
experiments.

TN-SHAP-G avoids this exponential dependence by learning a global tensor-network
surrogate of the coalition value function. The training cost scales as
$\mathcal{O}(M \cdot n)$, where $M$ is the number of sampled coalitions (typically
500--2000), and is independent of neighborhood density. Once trained, the same
surrogate supports deterministic recovery of all Shapley values and interaction
indices without additional model queries.

On the PROTEINS dataset, which contains graphs with up to 620 nodes, GraphSHAP-IQ
is applicable to fewer than 30\% of graphs due to neighborhood-size constraints,
whereas TN-SHAP-G successfully produces explanations for all graphs using a
coarsening-based surrogate. This highlights the complementary regimes of the two
methods: GraphSHAP-IQ provides exact attributions on small, sparse graphs, while
TN-SHAP-G leverages a compositional multilinear surrogate to enable scalable attribution. By operating through the multilinear extension, TN-SHAP-G admits provable guarantees that relate Shapley and interaction accuracy to the quality of the learned surrogate, providing a principled accuracy–scalability trade-off.

\subsection{Baseline Sensitivity}
\label{sec:baseline_sensitivity}

To evaluate how the choice of masking baseline affects the
quality of recovered Shapley values and interaction indices.  We
run TN-SHAP-G with three different baselines assigned to masked
(absent) nodes:
\begin{itemize}[leftmargin=*,nosep]
    \item \textbf{Mean} (paper default): $\mathbf{x}_{\text{base}}
          = \frac{1}{n}\sum_{v}\mathbf{x}_v$.
    \item \textbf{Zero}: $\mathbf{x}_{\text{base}} = \mathbf{0}$.
    \item \textbf{Ones}: $\mathbf{x}_{\text{base}} = \mathbf{1}$.
\end{itemize}
For each baseline we select 30 small graphs ($n \leq 20$) from
\textsc{Mutagenicity} and train a graph-aligned TN with bond
dimension~8 for 3\,000 epochs.  Ground truth is obtained by
exhaustive enumeration of all $2^n$ coalitions under the matching
baseline.

\begin{table}[h]
\centering
\caption{Baseline sensitivity on \textsc{Mutagenicity}
($n \leq 20$, 30 graphs per baseline).}
\label{tab:baseline_sensitivity}
\begin{tabular}{lcccc}
\toprule
\textbf{Baseline} & \textbf{O1 Cosine} $\uparrow$ &
\textbf{O2 Cosine} $\uparrow$ & \textbf{Test $R^2$} $\uparrow$
& \textbf{Game Std} \\
\midrule
Mean (default) & $0.997 \pm 0.003$ & $0.868 \pm 0.107$ &
$0.953 \pm 0.047$ & $0.20 \pm 0.19$ \\
Zero & $0.976 \pm 0.089$ & $0.878 \pm 0.096$ &
$0.919 \pm 0.109$ & $0.24 \pm 0.19$ \\
Ones & $0.998 \pm 0.003$ & $0.719 \pm 0.110$ &
$0.991 \pm 0.008$ & $6.45 \pm 1.96$ \\
\bottomrule
\end{tabular}
\end{table}

O1 Shapley values are robust to the choice of baseline: all three
achieve cosine similarity $\geq 0.976$.  For O2, the mean and zero
baselines perform comparably ($\sim\!0.87$), while the ones
baseline yields lower O2 ($0.72$) due to the substantially larger
game variance it induces (game std $\approx 6.5$ vs.\
$\approx 0.2$).  The mean baseline provides the best overall
balance.  TN-SHAP-G requires no modification to work with any
baseline---only the coalition value function changes.
\subsection{Node-Removal vs.\ Feature-Masking Semantics}
\label{sec:node_removal}

We compare two coalition game definitions on the same 15 graphs
($n \leq 14$, \textsc{Mutagenicity}) with the same O2-aware
Hamming neighbor sampling ($\sim\!480$ training masks per graph):
\begin{itemize}[leftmargin=*,nosep]
    \item \textbf{Feature masking} (paper default): absent nodes
          receive mean-baseline features; graph structure unchanged.
          Bond dimension = 8.
    \item \textbf{Node removal}: absent nodes are deleted; the GNN
          receives the induced subgraph~$G[S]$.  Bond dimension = 12.
\end{itemize}
Ground truth is obtained by exact enumeration under the matching
game semantics.

\begin{table}[h]
\centering
\caption{Coalition game semantics comparison on
\textsc{Mutagenicity} ($n \leq 14$, 15 graphs, O2-aware
sampling).}
\label{tab:node_removal}
\begin{tabular}{lcccc}
\toprule
\textbf{Game Semantics} & \textbf{O1 Cosine} $\uparrow$ &
\textbf{O2 Cosine} $\uparrow$ & \textbf{Test $R^2$} $\uparrow$
& \textbf{Bond Dim} \\
\midrule
Feature masking & $0.9999 \pm 0.0002$ & $0.991 \pm 0.017$ &
$0.997 \pm 0.005$ & 8 \\
Node removal & $0.989 \pm 0.008$ & $0.886 \pm 0.138$ &
$0.996 \pm 0.005$ & 12 \\
\bottomrule
\end{tabular}
\end{table}

TN-SHAP-G successfully learns the node-removal game: O1 cosine
$0.989$, test~$R^2 = 0.996$.  This confirms that the TN
multilinear extension framework generalizes beyond feature masking
to structurally defined games where nodes are removed entirely.
O2 is lower for node removal ($0.886$ vs.\ $0.991$) because
removing nodes changes the GNN message-passing topology, making
the value function less smooth.
\subsection{Myerson Game Semantics}
\label{sec:myerson}

To demonstrate that TN-SHAP-G is not tied to a single game
semantics, we evaluate the compute time and fidelity of the graph-restricted Myerson game
$v^G(S) = \sum_{C \in \text{CC}(G[S])} v(C)$, where
$\text{CC}(G[S])$ denotes the connected components of the induced
subgraph.  Crucially, this requires no architectural changes: the
same TN surrogate is trained on coalition values and reused under
the Myerson semantics.

We evaluate the compute time and cosine similarity between the Myerson values computed on the surrogate with those from the exact computation, on the first 20 \textsc{Benzene} graphs; exact Myerson
comparison is feasible for 3 graphs with $n < 15$.

\begin{table}[h]
\centering
\caption{TN-SHAP-G with Myerson game semantics on
\textsc{Benzene} (exact-feasible subset, $n < 15$).}
\label{tab:myerson}
\begin{tabular}{ccccccc}
\toprule
\textbf{Graph} & $n$ & \textbf{TN $R^2$} &
\textbf{O1 Cos} & \textbf{O2 Cos} &
\textbf{TN time (s)} & \textbf{Exact time (s)} \\
\midrule
2  & 14 & 0.991 & 0.988 & 0.998 & 0.76  & 168.9 \\
6  & 8  & 0.993 & 0.912 & 0.923 & 0.05  & 1.6 \\
8  & 13 & 0.998 & 0.996 & 0.987 & 0.66  & 89.3 \\
\midrule
\textbf{Mean} & & \textbf{0.994} & \textbf{0.965} &
\textbf{0.969} & & \\
\bottomrule
\end{tabular}
\end{table}

TN-SHAP-G achieves mean O1 cosine $0.965$ and O2 cosine $0.969$
for Myerson values, with TN surrogate $R^2 = 0.994$.  This
supports the claim that TN-SHAP-G is a flexible cooperative-game
approximation framework, not limited to standard Shapley semantics.
\subsection{Continuous-Feature Synthetic Benchmark}
\label{sec:continuous_features}

To confirm that TN-SHAP-G is not limited to categorical features,
we evaluate on a synthetic BA-plus-house motif regression task
(MAGE generator) with \emph{continuous} node features.  Each graph
has 11 nodes with a 5-node house motif determining the regression
target.  We train a GIN regressor (test $R^2 = 0.988$) and
evaluate TN-SHAP-G on 11 held-out test graphs with exact O1
Shapley enumeration over all $2^{11} = 2048$ coalitions.

\begin{table}[h]
\centering
\caption{Exact O1 Shapley comparison on MAGE synthetic benchmark
(11 graphs, continuous features).}
\label{tab:continuous}
\begin{tabular}{lccc}
\toprule
\textbf{Semantics} & \textbf{O1 Cosine} $\uparrow$ &
\textbf{O1 $R^2$} $\uparrow$ &
\textbf{TN Queries / Exact Queries} \\
\midrule
Feature masking & $0.994 \pm 0.004$ & $0.980$ & 115 / 2048 \\
Node removal   & $0.993 \pm 0.005$ & $0.860$ & 115 / 2048 \\
\bottomrule
\end{tabular}
\end{table}

TN-SHAP-G recovers exact first-order Shapley values with cosine
similarity $> 0.99$ under both semantics, using only 115 training
queries ($18\times$ fewer than exact enumeration).  This confirms
that categorical features are not a limitation of the method.
\subsection{Structure-Aware Sampling Ablation}
\label{sec:triangle_sampling}

We evaluate how graph-structure-aware coalition sampling affects
higher-order interaction recovery.  We restrict O2 evaluation to
graph edges and O3 to graph triangles on 20 \textsc{Mutagenicity}
graphs (each with at least one triangle, 22 triangle targets
total).  O1/O2/O3 cosines are reported as pooled similarities
across all supported targets from all 20 graphs.

\begin{table}[h]
\centering
\caption{Triangle-support sampling ablation (20 Mutagenicity
graphs).  Adding triangle-aware samples improves O3 cosine from
$0.48$ to $0.91$.}
\label{tab:triangle_sampling}
\small
\begin{tabular}{lcccc}
\toprule
\textbf{Sampling Config} & \textbf{O1 Cos} & \textbf{O2 Cos} &
\textbf{O3 Cos} & \textbf{Train Size} \\
\midrule
O1/O2-aware (base)       & 0.978 & 0.889 & 0.477 & 155.7 \\
O1/O2/O3-aware           & 0.984 & 0.924 & 0.778 & 232.0 \\
O3-tuned (more tri-flips)& 0.986 & 0.929 & \textbf{0.905} & 281.7 \\
\bottomrule
\end{tabular}
\end{table}

Adding triangle flip-3 neighbors improves O3 cosine from $0.477$
to $0.905$, and edge-only O2 also improves monotonically
($0.889 \to 0.929$).  Uniform sampling at matched budget achieves
O3 cosine $0.752$---competitive but below the structure-aware
variant.  The main change across rows is the training-set
composition: the base and edge components stay fixed while
triangle-aware masks increase from $0$ to $126$ on average. These results demonstrate that structure-aware sampling improves the recovery of higher-order interactions, with triangle-aware sampling consistently boosting the cosine similarity of third-order interactions with the ground-truths.
\subsection{Amortization Details}
\label{sec:amortization_details}

In the low-budget GPU regime (20 matched Mutagenicity graphs,
$n \leq 20$), TN-SHAP-G training takes a mean of $4.12$\,s
(max $6.53$\,s).  Per-query inference costs are for O1 is $0.010$ seconds and for
O2 is $0.119$ seconds.  GraphSHAP-IQ requires $2.08$\,s per O1+O2 query.
TN-SHAP-G becomes the faster method at $K \geq 5$ repeated
queries.

On the PROTEINS dataset (graphs up to 620 nodes),
GraphSHAP-IQ encounters out-of-memory errors on all 4 tested
large graphs (return code 137), while TN-SHAP-G succeeds on
all 4.

\subsection{Scope of the Locality Assumption}
\label{sec:locality_scope}

The key assumption is not that all graph models are local, but
that the induced coalition game admits \emph{low-rank structure
aligned with graph separators}.  This is naturally plausible for
message-passing GNNs, which are the main focus of our quantitative
experiments.  For models with broad attention (e.g., graph
transformers), the required bond dimension may increase.

We test the fidelity of the Shapley values on a Graphormer-style model (Appendix~F.3).  After
hyperparameter tuning ($\chi = 12$, 2400 epochs), TN-SHAP-G
achieves mean O1 cosine \textbf{0.915} across 6 Mutagenicity
graphs (up from $0.801$ before tuning), with $25\times$ inference
speedup over exact enumeration.  The lower test $R^2$ ($0.524$ vs.\
$>0.9$ for GINs) reflects the harder game induced by global
attention, but Shapley recovery remains meaningful.  This suggests
the framework extends beyond MP-GNNs when the model has learned
local structure from the data.
\subsection{Extending to Edge and Subgraph Players}
\label{sec:edge_subgraph_players}

The framework applies once one specifies (i) the player set~$P$
and (ii) the coalition-dependent intervention operator~$\mathcal{M}$.

\paragraph{Edge players.}
Set $P = E$.  For a coalition $S \subseteq E$, define the
intervention by removing edges $E \setminus S$ from the adjacency
matrix (or setting their weights to zero): the GNN receives the
modified graph $G_S = (V, S)$.  No edge features are required;
the intervention is purely structural.

\paragraph{Subgraph/motif players.}
Define a set of predefined motifs $\{M_1, \ldots, M_K\}$ (e.g.,
rings, functional groups).  Set $P = \{M_1, \ldots, M_K\}$.  For a
coalition $S \subseteq P$, mask all nodes not belonging to any
motif in~$S$.  The TN topology mirrors the motif-interaction
graph, where two motifs are connected if they share nodes or
edges.
\section{Additional related work and discussion (extended)}
\label{app:related_extended}

\paragraph{Shapley interactions and efficient estimators.}
Higher-order interactions have been formalized through the Shapley--Taylor index~\citep{sundararajan2020shapley},
faith-Shapley~\citep{tsai2023faith}, and efficient estimators such as \textsc{SHAP-IQ}~\citep{fumagalli2024shapiq}.
Surveys such as \citet{yuan2022explainability} provide comprehensive overviews of graph explanation methods.

\paragraph{Sample complexity of tensor network hypothesis classes.}
Recent results bound the pseudo-dimension of tensor network hypothesis classes in terms of parameter count,
implying sample complexity scaling of the form $\tilde{O}(N_\mathcal{G}/\varepsilon^2)$ for $\varepsilon$-accurate learning~\citep{NEURIPS2021_5a9d8bf5}.
These bounds motivate learning surrogates with small parameter counts via topology alignment.

\end{document}